\DeclareMathOperator*{\argmax}{arg\,max}
\newcommand{\ex}[2]{{\ifx&#1& \mathbb{E} \else
\underset{#1}{\mathbb{E}} \fi \left[#2\right]}}
\newcommand{\pr}[2]{{\ifx&#1& \mathbb{P} \else
\underset{#1}{\mathbb{P}} \fi \left[#2\right]}}
\newcommand{\var}[2]{{\ifx&#1& \mathsf{Var} \else
\underset{#1}{\mathsf{Var}} \fi \left[#2\right]}}
\newcommand{\dr}[3]{\mathrm{D}_{#1}\left(#2\middle\|#3\right)}
\newcommand{\dra}[4]{\mathrm{D}_{#1}^{#4}\left(#2\middle\|#3\right)}
\newcommand{\nope}[1]{}
\newtheorem{lem}{Lemma}
\newtheorem{defn}[lem]{Definition}
\newtheorem{cor}[lem]{Corollary}
\newtheorem{prop}[lem]{Proposition}
\newtheorem{thm}[lem]{Theorem}
\newtheorem{rem}[lem]{Remark}
\title{Hyperparameter Tuning with\\Renyi Differential Privacy}
\author{%
    Nicolas Papernot\thanks{Alphabetical author order.}\\ Google Research, Brain Team\\ \texttt{papernot@google.com}
    \And
    Thomas Steinke$^*$\\ Google Research, Brain Team\\ \texttt{hyper@thomas-steinke.net}
}
\begin{document}

\maketitle

\begin{abstract}
    For many differentially private algorithms, such as the prominent noisy stochastic gradient descent (DP-SGD), the analysis needed to bound the privacy leakage of a single training run is well understood. However, few studies have reasoned about the privacy leakage resulting from the multiple training runs needed to fine tune the value of the training algorithm’s hyperparameters. In this work, we first illustrate how simply setting hyperparameters based on non-private training runs can leak private information. Motivated by this observation, we then provide privacy guarantees for hyperparameter search procedures within the framework of Renyi Differential Privacy. Our results improve and extend the work of Liu and Talwar (STOC 2019). Our analysis supports our previous observation that tuning hyperparameters does indeed leak private information, but we prove that, under certain assumptions, this leakage is modest, as long as each candidate training run needed to select hyperparameters is itself differentially private.
\end{abstract}

\vspace{-0.1in}

\section{Introduction}

\vspace{-0.1in}

Machine learning (ML) systems %
memorize training data and regurgitate excerpts from it when probed~\citep{carlini2020extracting}.
If the training data includes sensitive personal information, then this presents an unacceptable privacy risk~\citep{shokri2017membership}. It may however still be useful to apply machine learning to such data, e.g., in the case of healthcare~\citep{kourou2015machine,wiens2019no}.
This has led to a significant body of research on the development of privacy-preserving machine learning methods.
Differential privacy (DP)~\citep{dwork2006calibrating,DworkKMMN06} provides a robust and quantitative privacy guarantee. It has been widely accepted as the best framework for formally reasoning about the privacy guarantees of a machine learning algorithm. 

A popular method for ensuring DP is noisy (stochastic) gradient descent (a.k.a.~DP-SGD)~\citep{SongCS13,bassily2014private,abadi2016deep}. DP-SGD differs from standard (stochastic) gradient descent in three ways. First, gradients are computed on a per-example basis rather than directly averaged across a minibatch of training examples. Second, each of these individual gradients is clipped to ensure its $2$-norm is bounded. Third, Gaussian noise is added to the gradients as they are averaged and applied to update model parameters. These modifications bound the sensitivity of each update so that the added noise ensures differential privacy. The composition~\citep{DworkRV10} and privacy amplification by subsampling~\citep{balle2018privacy} properties of differential privacy thus imply that the overall training procedure is differentially private. We can compute tight privacy loss bounds for DP-SGD using techniques like the Moments Accountant~\citep{abadi2016deep} or the closely related framework of R\'enyi DP~\citep{mironov2017renyi,MironovTZ19}. 

Machine learning systems have hyperparameters, such as the learning rate, minibatch size, or choice of a regularizer to prevent overfitting. Details of the model architecture can also be treated as hyperparameters of the optimization problem. Furthermore, learning within the constraints of differential privacy may introduce additional hyperparameters, as illustrated in the DP-SGD optimizer by the $2$-norm bound value for clipping, the scale of the Gaussian noise, and the choice of stopping time. Typically the training procedure is repeated many times with different hyperparameter settings in order to select the best setting, an operation known as hyperparameter \textit{tuning}. This methodology implies that even if each run of the training procedure is privacy-preserving, we need to take into account the fact that the training procedure is repeated (possibly many times) when reasoning about the privacy of the overall learning procedure.

\emph{Can the tuning of hyperparameters reveal private information?}
This question has received remarkably little attention and, in practice, it is often ignored entirely.
We study this question and provide both positive and negative answers.

\vspace{-0.1in}

\subsection{Our Contributions}
\begin{itemize}
    \item \textbf{We show that, under certain circumstances, the setting of hyperparamters can leak private information.} 
    Hyperparameters are a narrow channel for private information to leak through, but they can still reveal information about individuals if we are careless. Specifically, if we tune the hyperparameters in an entirely non-private fashion, then individual outliers can noticeably skew the optimal hyperparameter settings. This is sufficient to reveal the presence or absence of these outliers \` a la membership inference~\citep{shokri2017membership}; it shows that we must exercise care when setting hyperparameters.

    \item \textbf{We provide tools for ensuring that the selection of hyperparameters is differentially private.} 
    Specifically, if we repeat the training procedure multiple times (with different hyperparameters) and each repetition of the training procedure is differentially private on its own, then outputting the best repetition is differentially private.
    Of course, a basic version of such a result follows from the composition properties of differential privacy (that is the fact that one can ``sum'' the privacy loss bounds of multiple differentially private analyses performed on the same data to bound the overall privacy loss from analyzing this data). However, we provide quantitatively sharper bounds. 
    
    Specifically, our privacy loss bounds are either independent of the number of repetitions or grow logarithmically in the number of repetitions, whereas composition would give linear bounds.
    Rather than repeating the training procedure a fixed number of times, our results require repeating the training procedure a random number of times. The privacy guarantees depend on the distribution of the number of runs; we consider several distributions and provide generic results. We discover a tradeoff between the privacy parameters and how heavy-tailed the distribution of the number of repetitions is.%

\end{itemize}

\vspace{-0.1in}

\subsection{Background and Related Work}

\vspace{-0.1in}

Differential privacy (DP) is a framework to reason about the privacy guarantees of randomized algorithms which analyze data~\citep{dwork2006calibrating,DworkKMMN06}. An algorithm is said to be DP %
if its outputs on any pair of datasets that only differ on one individual's data are indistinguishable. A bound on this indistinguishability serves as the quantification for privacy. Formally, a randomized algorithm $M : \mathcal{X}^n \to \mathcal{Y}$ is $(\varepsilon, \delta)$-DP %
if for any inputs $x, x' \in \mathcal{X}^n$ differing only on the addition, removal, or replacement of one individual's records and for any subset of outputs $S \subseteq \mathcal{Y}$, we have
\begin{equation}
    \label{eq:dp}
    \pr{}{M(x)\in S} \leq e^\varepsilon \pr{}{M(x')\in S} + \delta.
\end{equation}
Here, the parameter $\varepsilon$ is known as the privacy loss bound -- the smaller $\varepsilon$ is, the stronger the privacy guarantee provided is, because it is hard for an adversary to distinguish the outputs of the algorithm on two adjacent inputs. The 
parameter $\delta$ is essentially the probability that the guarantee fails to hold. One of the key properties of DP
is that it composes: running multiple independent DP %
algorithms is also DP 
and composition theorems allow us to bound the privacy parameters of such a sequence of mechanisms in terms of the individual mechanisms' privacy parameters~\citep{dwork2014algorithmic}. 

There is a vast literature on differential privacy in machine learning. A popular tool is the DP-SGD optimizer~\citep{abadi2016deep}. Because the noise added is Gaussian and DP-SGD applies the same (differentially private) training step sequentially, it is easier to reason about its privacy guarantees in the framework of R\'enyi differential privacy \citep{mironov2017renyi}. R\'enyi differential privacy (RDP) generalizes pure differential privacy (with $\delta=0$) as follows. An algorithm $M$ is said to be $(\lambda, \varepsilon)$-RDP with $\lambda \geq 1$ and $\varepsilon \ge 0$, if for any adjacent inputs $x,x'$
\begin{equation}
    \dr{\lambda}{M(x)}{M(x')} := \frac{1}{\lambda-1} \log \ex{Y \gets M(x)}{ \left( \frac{\pr{}{M(x)=Y}}{\pr{}{M(x')=Y}}\right)^{\lambda-1} } \leq \varepsilon ,
\end{equation}
where $\dr{\lambda}{P}{Q}$ is the R\'enyi divergence of order $\lambda$ between distributions $P$ and $Q$. In the framework of RDP, one obtains sharp and simple composition: If each individual mechanism $M_i$ is $(\lambda, \varepsilon_i)$-RDP, then the composition of running all of the mechanisms on the data satisfies $(\lambda, \sum_i \varepsilon_i)$-RDP. For instance, the privacy analysis of DP-SGD first analyzes the individual training steps then applies composition.
Note that it is common to keep track of multiple orders $\lambda$ in the  analysis. Thus $\varepsilon$ should be thought of as a function $\varepsilon(\lambda)$, rather than a single number. In many cases, such as Gaussian noise addition, this is a linear function -- i.e., $\varepsilon(\lambda) = \rho \cdot \lambda$ for some $\rho \in \mathbb{R}$ -- and such a linear bound yields the definition of zero-Concentrated DP with parameter $\rho$ ($\rho$-zCDP)~\citep{BunS16}.

One could na\"ively extend this composition-based approach to analyze the privacy of a training algorithm which involves hyperparameter tuning. Indeed, if each training run performed to evaluate one candidate set of hyperparameter values is DP, the overall procedure is also DP by composition over all the hyperparameter values tried. However, this would lead to very loose guarantees of privacy. 
\citet{ChaudhuriV13} were the first to obtain improved DP bounds for hyperparameter tuning, but their results require a stability property of the learning algorithm.
The only prior work that has attempted to obtain tighter guarantees for DP hyperparameter tuning in a black-box fashion is the work of \citet{LiuT19}. Their work is the starting point for ours.%

\citet{LiuT19} show that, if we start with a $(\varepsilon,0)$-DP algorithm, repeatedly run it a random number of times following a geometric distribution, and finally return the best output produced by these runs, then this system satisfies $(3\varepsilon,0)$-differential privacy.\footnote{\citet{LiuT19} prove several other results with slightly different formulations of the problem, but this result is representative and is most relevant to our discussion here.} \citet{LiuT19} also consider algorithms satisfying $(\varepsilon,\delta)$-DP for $\delta>0$. However, their analysis is restricted to the $(\varepsilon, \delta)$ formulation of DP and they do not give any results for R\'enyi DP. This makes it difficult to apply these results to modern DP machine learning systems, such as models trained with DP-SGD.

Our results directly improve on the results of \citet{LiuT19}. We show that replacing the geometric distribution on the number of repetitions in their result with the logarithmic distribution yields $(2\varepsilon,0)$-differential privacy as the final result. We also consider other distributions on the number of repetitions, which give a spectrum of results.
We simultaneously extend these results to the R\'enyi DP framework, which yields sharper privacy analyses. %

Independently \citet{mohapatra2021role} study \emph{adaptive} hyperparameter tuning under DP with composition. In contrast, our results are for non-adaptive hyperparameter tuning, i.e., ``random search.'' 

A closely related line of work is on the problem of private selection. Well-known algorithms for private selection include the exponential mechanism~\citep{McSherryT07} and the sparse vector technique~\citep{DworkNRRV09,ZhuW20}. However, this line of work assumes that there is a low-sensitivity function determining the quality of each of the options. This is usually not the case for hyperparameters. Our results simply treat the ML algorithm as a black box; we only assume that its output is private and make no assumptions about how that output was generated. 
Our results also permit returning the entire trained model along with the selected hyperparameters. 

\vspace{-0.1in}

\section{Motivation}

\vspace{-0.1in}

A hyperparameter typically takes categorical values (e.g., the choice of activation function in a neural network layer), or is a single number (e.g., a real number for the learning rate or an integer for the number of epochs). Thus, it is intuitive that a hyperparameter provides little capacity as a channel to leak private information from the training data. Nevertheless, leakage can happen, in particular when training is done without preserving privacy. 
We illustrate how with the constructed example of hyperparameter tuning for a support vector machine learning (SVM) learned from a synthetic data distribution. We consider a SVM with a soft margin; we use stochastic gradient descent to minimize the corresponding objective involving the hinge loss and a weight penalty: 
$$ l_{w,b}(x, y) =
\|w\|_2^2 + \alpha \max\{0,1- y(w\cdot x + b) \}
$$
where $y\in \{-1,1\}$ indicates the label of training example $x \in \mathbb{R}^2$. 
Because our purpose here is to illustrate how leakage of private information can arise from hyperparameter tuning, we work with a synthetic data distribution for simplicity of exposition: we draw 40 inputs from isotropic 2D Gaussians of standard deviation $1.0$ to form the training set $\mathcal{D}$. The negative class is sampled from a Gaussian centered at $\mu_{-1}=(7.86, -3.36)$ and the positive at $\mu_{1}=(6.42, -9.17)$.

Our learning procedure has a single hyperparameter $\alpha$ controlling how much importance is given to the hinge loss, i.e., how much the SVM is penalized for using slack variables to misclassify some of the training data. We first tune the value of $\alpha$ with the training set  $\mathcal{D}$ and report  training accuracy as a function of $\alpha$ in Figure~\ref{fig:toy-motivation}.  Next, we repeat this experiment on a dataset $\mathcal{D}'$ to which we added $8$ outliers $x_o=(7.9, -8.0)$ to the negative class. The resulting hyperparameter tuning curve is added to  Figure~\ref{fig:toy-motivation}. By comparing both curves, it is clear that the choice of hyperparameter $\alpha$ which maximizes accuracy differs in the two settings: the best performance is achieved around $\alpha=8$ with outliers whereas increasing $\alpha$ is detrimental to performance without outliers. 

\begin{wrapfigure}{r}{0.5\textwidth}
  \begin{center}
    \vspace{-0.25in}
    \includegraphics[width=0.45\textwidth]{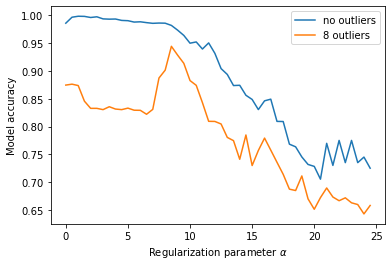}
        \caption{\label{fig:toy-motivation} Accuracy of the model as a function of the regularization weight $\alpha$, with and without outliers. Note how the model performance exhibits a turning point with outliers whereas increasing the value of $\alpha$ is detrimental without outliers.}
    \vspace{-0.2in}
  \end{center}
\end{wrapfigure}

This difference can be exploited to perform a variant of a membership inference attack~\citep{shokri2017membership}: Here, one could infer from the value of the hyperparameter $\alpha$ whether or not the outlier points $x_o$ were part of the training set or not. While the example is constructed, this shows how we must be careful when tuning hyperparameters: in corner cases such as the one presented here, it is possible for some information contained in the training data to leak in hyperparameter choices. 

In particular, this implies that the common practice of tuning hyperparameters without differential privacy and then using the hyperparameter values selected to repeat training one last time with differential privacy is not ideal. In Section~\ref{sec:results}, we will in particular show how training with differential privacy when performing the different runs necessary to tune the hyperparameter can bound such leakage effectively if one carefully chooses the number of runs hyperparameters are tuned for.%

\vspace{-0.1in}

\section{Our Positive Results}
\label{sec:results}

\vspace{-0.1in}

\subsection{Problem Formulation}

\vspace{-0.1in}

We begin by appropriately formalizing the problem of differentially private hyperparameter tuning, following the framework of \citet{LiuT19}.
Suppose we have $m$ randomized base algorithms $M_1, M_2, \cdots, M_m : \mathcal{X}^n \to \mathcal{Y}$.
These correspond to $m$ possible settings of the hyperparameters. Ideally we would simply run each of these algorithms once and return the best outcome. For simplicity, we consider a finite set of hyperparameter possibilities; if the hyperparameters of interest are continuous, then we must pick a finite subset to search over (which is in practice sufficient).

Here we make two simplifying assumptions: First, we assume that there is a total order on the range $\mathcal{Y}$, which ensures that "best" is well-defined. In particular, we are implicitly assuming that the algorithm computes a quality score (e.g., accuracy on a test set) for the trained model it produces; this may require allocating some privacy budget to this evaluation.\footnote{If the trained model's quality is evaluated on the training set, then we must increase the privacy loss budget to account for this composition. However, if the model is evaluated on a held out set, then the privacy budget need not be split; these data points are ``fresh'' from a privacy perspective.}
Second, we are assuming that the output includes both the trained model and the corresponding hyperparameter values (i.e., the output of $M_j$ includes the index $j$).\footnote{Note that in the more well-studied problem of selection, usually we would only output the hyperparameter values (i.e., just the index $j$) and not the corresponding trained model nor its quality score.} These assumptions can be made without loss of generality.

\vspace{-0.1in}

\subsection{Strawman approach: repeat the base algorithm a fixed number of times}
\label{ssec:strawman}

\vspace{-0.1in}

The obvious approach to this problem would be to run each algorithm once and to return the best of the $m$ outcomes. From composition, we know that the privacy cost grows at most linearly with $m$. It turns out that this is in fact tight. There exists a $(\varepsilon,0)$-DP algorithm such that if we repeatedly run it $m$ times and return the best output, the resultant procedure is \emph{not} $(m\varepsilon-\tau,0)$-DP for any $\tau>0$. %
This was observed by \citet[Appendix B]{LiuT19} and we provide an analysis in Appendix~\ref{ap:fixed-repetition}. This negative result also extends to R\'enyi DP. %
To avoid this problem, we will run the base algorithms a random number of times. The added uncertainty significantly enhances privacy. However, we must carefully choose this random distribution and analyze it.

\vspace{-0.1in}

\subsection{Our algorithm for hyperparameter tuning}
\label{ssec:our-algorithm}

\vspace{-0.1in}

To obtain good privacy bounds, we must run the base algorithms a random number of times. We remark that random search rather than grid search is often performed in practice~\citep{bergstra2012random}, so this is not a significant change in methodology.
Specifically, we pick a total number of runs $K$ from some distribution. Then, for each run $k = 1, 2, \cdots, K$, we pick an index $j_k \in [m]$ uniformly at random and run $M_{j_k}$. Then, at the end, we return the best of the $K$ outcomes.

The privacy guarantee of this overall system then depends on the privacy guarantees of each of the mechanisms $M_j$ as well as the distribution of the number of runs $K$. Specifically, we assume that there exists a uniform (R\'enyi) DP bound for all of the mechanisms $M_j$. Note that DP is ``convex’’ where  ``convexity'' here means that if $M_1,M_2,\cdots,M_m$ are each individually DP, then running $M_{j_k}$ for a random $j_k \in [m]$ is also DP with the same parameters. 

To simplify notation, we also assume that there is a single mechanism $Q : \mathcal{X}^n \to \mathcal{Y}$ that picks a random index $j \in [m]$ and then runs $M_j$. In essence, our goal is to ``boost'' the success probability of $Q$ by repeating it many times.
The distribution of the number of runs $K$ must be chosen to both ensure good privacy guarantees and to ensure that the system is likely to pick a good setting of hyperparameters. Also, the overall runtime of the system depends on $K$ and we want the runtime to be efficient and predictable. Our results consider several distributions on the number of repetitions $K$ and the ensuing tradeoff between these three considerations.%

\vspace{-0.1in}

\subsection{Main Results}

\vspace{-0.1in}

There are many possible distributions for the number of repetitions $K$. In this section, we  first consider two -- the truncated negative binomial and Poisson distributions -- and state our main privacy results for these  distributions. Later, in Section~\ref{ssec:main-technical-lemma}, we state a more general technical lemma which applies to any distribution on the number of repetitions $K$. 

\newcommand{\etnb}[2]{\mathcal{D}_{#1,#2}}
\begin{defn}[Truncated Negative Binomial Distribution]\label{defn:etnb}
Let $\gamma \in (0,1)$ and $\eta \in (-1,\infty)$. Define a distribution $\etnb{\eta}{\gamma}$ on $\mathbb{N} = \{1,2,\cdots\}$ as follows.
If $\eta \ne 0$ and $K$ is drawn from $\etnb{\eta}{\gamma}$, then \begin{equation}
    \forall k \in \mathbb{N} ~~~~~ \pr{}{K=k} = \frac{(1-\gamma)^k}{\gamma^{-\eta}-1} \cdot \prod_{\ell=0}^{k-1} \left(\frac{\ell+\eta}{\ell+1}\right)
\end{equation} 
and $\ex{}{K} = \frac{\eta\cdot(1-\gamma)}{\gamma\cdot(1- \gamma^\eta)}$.
If $K$ is drawn from $\etnb{0}{\gamma}$, then 
\begin{equation}
    \pr{}{K=k} = \frac{(1-\gamma)^k}{k \cdot \log(1/\gamma)}
\end{equation}
and $\ex{}{K} = \frac{1/\gamma-1}{\log(1/\gamma)}$.
\end{defn}
This is called the ``negative binomial distribution,'' since $\prod_{\ell=0}^{k-1} \left(\frac{\ell+\eta}{\ell+1}\right) = {k + \eta - 1 \choose k}$ if we extend the definition of binomial coefficients to non-integer $\eta$. The distribution is called ``truncated'' because $\pr{}{K=0}=0$, whereas the standard negative binomial distribution includes $0$ in its support. The $\eta=0$ case $\etnb{0}{\gamma}$ is known as the ``logarithmic distribution''. The $\eta=1$ case $\etnb{1}{\gamma}$ is simply the geometric distribution. Next, we state our main privacy result for this distribution.%

\begin{thm}[Main Privacy Result -- Truncated Negative Binomial]\label{thm:family}
    Let $Q : \mathcal{X}^n \to \mathcal{Y}$ be a randomized algorithm satisfying $(\lambda,\varepsilon)$-RDP and $(\hat\lambda,\hat\varepsilon)$-RDP for some $\varepsilon,\hat\varepsilon \ge 0$, $\lambda \in (1,\infty)$, and $\hat\lambda \in [1,\infty)$.\footnote{If $\hat\lambda=1$, then $\hat\varepsilon$ corresponds to the KL divergence; see Definition \ref{defn:renyi}. However, $\hat\varepsilon$ is multiplied by $1-1/\hat\lambda=0$ in this case, so is irrelevant.} Assume $\mathcal{Y}$ is totally ordered. 
    
    Let $\eta \in (-1,\infty)$ and $\gamma \in (0,1)$. Define an algorithm $A : \mathcal{X}^n \to \mathcal{Y}$ as follows. Draw $K$ from the truncated negative binomial distribution $\etnb{\eta}{\gamma}$ (Definition \ref{defn:etnb}). %
    Run $Q(x)$ repeatedly $K$ times. Then $A(x)$ returns the best value from the $K$ runs.
    
    Then $A$ satisfies $(\lambda,\varepsilon')$-RDP where
    \begin{equation}
        \varepsilon' = \varepsilon + (1+\eta)\cdot \left( 1 - \frac{1}{\hat\lambda} \right) \hat\varepsilon + \frac{(1+\eta)\cdot \log(1/\gamma)}{\hat\lambda} + \frac{\log\ex{}{K}}{\lambda-1}.
    \end{equation}
\end{thm}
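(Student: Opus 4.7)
The plan is to express the density of the max-of-$K$ algorithm via the probability generating function (PGF) $\varphi(z)=\ex{}{z^K}$ of $K$, specialize to the truncated negative binomial, and conclude with a pointwise bound coming from the $(\hat\lambda,\hat\varepsilon)$-RDP of $Q$ applied to a simple Bernoulli post-processing. Since $\pr{}{A(x)\le y}=\varphi(F_x(y))$, the density of $A(x)$ at $y$ is $f_x(y)\varphi'(F_x(y))$, and the R\'enyi divergence becomes
\begin{equation*}
    e^{(\lambda-1)\dr{\lambda}{A(x)}{A(x')}} = \int f_x^\lambda f_{x'}^{1-\lambda}\,\varphi'(F_x)^\lambda \varphi'(F_{x'})^{1-\lambda}\,dy.
\end{equation*}
A short computation via Newton's generalized binomial series gives $\varphi'(z)\propto (1-pz)^{-(\eta+1)}$ with $p=1-\gamma$, so that $\varphi'(F_x)^\lambda\varphi'(F_{x'})^{1-\lambda} = \varphi'(F_x)\cdot((1-pF_{x'})/(1-pF_x))^{(\eta+1)(\lambda-1)}$. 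Bounding $\varphi'(F_x)\le\varphi'(1)=\ex{}{K}$ accounts for the $\log\ex{}{K}/(\lambda-1)$ term in $\varepsilon'$.

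The core step is a pointwise bound on $((1-pF_{x'})/(1-pF_x))^{(\eta+1)(\lambda-1)}$. I would observe that $a(y):=1-pF_x(y)=\pr{}{Q(x)>y}+\gamma\pr{}{Q(x)\le y}$ is the probability that a specific post-processing of $Q(x)$ returns $1$ --- namely, output $1$ if $Q(x)>y$, else output an independent $\mathrm{Bernoulli}(\gamma)$ sample --- and similarly $a'(y):=1-pF_{x'}(y)$ under $Q(x')$. By the data-processing inequality applied to the $(\hat\lambda,\hat\varepsilon)$-RDP of $Q$, one obtains $(a')^{\hat\lambda}a^{1-\hat\lambda}\le e^{(\hat\lambda-1)\hat\varepsilon}$; dividing through by $a$ and using $a\ge\gamma$ produces the crucial pointwise bound $(a'/a)^{\hat\lambda}\le e^{(\hat\lambda-1)\hat\varepsilon}/\gamma$, and raising both sides to the $(\eta+1)(\lambda-1)/\hat\lambda$ power yields exactly the combination $(1-1/\hat\lambda)\hat\varepsilon+\log(1/\gamma)/\hat\lambda$ appearing in $\varepsilon'$.

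Substituting this pointwise bound factors out the remaining integral, which is $\int f_x^\lambda f_{x'}^{1-\lambda}\,dy\le e^{(\lambda-1)\varepsilon}$ by $(\lambda,\varepsilon)$-RDP of $Q$; taking logarithms and dividing by $\lambda-1$ then assembles the four claimed terms. The main subtlety --- the step I expect to need real care --- is realizing that $\hat\lambda$, rather than the naive $\hat\lambda-1$, appears as the exponent of $a'/a$ after dividing by $a$: this single algebraic manoeuvre is precisely what produces the sharp coefficients in the final bound, whereas the cruder pointwise bound $(a'/a)^{\hat\lambda-1}\le e^{(\hat\lambda-1)\hat\varepsilon}/\gamma$ gives strictly worse constants of the form $1$ on $\hat\varepsilon$ and $1/(\hat\lambda-1)$ on $\log(1/\gamma)$.
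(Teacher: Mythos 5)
Your proposal is correct and follows essentially the same route as the paper: the paper first proves a generic PGF bound (Lemma~\ref{lem:pgf}) and then specializes to the truncated negative binomial, and your ``divide by $a$'' manoeuvre is exactly equivalent to the exponent decomposition the paper uses (writing $-\lambda(\eta+1) = (1-\hat\lambda)\nu + u$ with $\hat\lambda\nu = (\lambda-1)(1+\eta)$, then invoking the Bernoulli postprocessing $\mathsf{Bern}\bigl(\gamma+(1-\gamma)(1-q)\bigr)$ and $a = \gamma + (1-\gamma)(1-q)\ge\gamma$). The one detail you elide is that your density formula $f_x(y)\,\varphi'(F_x(y))$ assumes $F_x$ is atomless; the paper's Lemma~\ref{lem:pgf} handles atoms by writing $A(y) = Q(y)\cdot\ex{X\gets[Q(<y),Q(\le y)]}{f'(X)}$ and applying Jensen and a coupling argument before taking a max over $y$ and a worst-case $t\in[0,1]$, and this is where the postprocessing function $g$ in the lemma statement comes from -- in your continuous setting $g$ is just the indicator $\mathbb{I}[\cdot > y]$, but the general case requires fractional thresholding to resolve ties. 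Otherwise your steps all check out, including the edge case $\hat\lambda=1$ (where $a'\le 1$ and $a\ge\gamma$ alone suffice, consistent with $\hat\varepsilon$ dropping out) and the observation that the naive $(a'/a)^{\hat\lambda-1}$ bound would give the strictly worse coefficients $(1+\eta)$ and $(1+\eta)/(\hat\lambda-1)$.
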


Theorem \ref{thm:family} shows a tradeoff between privacy and utility for the distribution $\etnb{\eta}{\gamma}$ of the number of repetitions. Privacy improves as $\eta$ decreases and $\gamma$ increases. However, this corresponds to fewer repetitions and thus a lower chance of success. We will study this aspect in Section~\ref{ssec:utility}.

Theorem \ref{thm:family} assumes two RDP bounds, which makes it slightly hard to interpret. %
Thus we consider two illustrative special cases:
We start with pure DP (a.k.a.~pointwise DP) -- i.e., $(\varepsilon,\delta)$-DP with $\delta=0$, which is equivalent to $(\infty,\varepsilon)$-RDP. This corresponds to Theorem \ref{thm:family} with $\lambda \to \infty$ and $\hat \lambda \to \infty$.
\begin{cor}[Theorem \ref{thm:family} for pure DP]\label{cor:puredp}
    Let $Q : \mathcal{X}^n \to \mathcal{Y}$ be a randomized algorithm satisfying $(\varepsilon,0)$-DP. Let $\eta \in (-1,\infty)$ and $\gamma \in (0,1)$. Define $A : \mathcal{X}^n \to \mathcal{Y}$ as in Theorem \ref{thm:family}. Then $A$ satisfies $((2+\eta)\varepsilon,0)$-DP.
\end{cor}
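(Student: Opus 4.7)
The plan is to derive Corollary \ref{cor:puredp} as a direct limiting case of Theorem \ref{thm:family}. The key observation is that pure $(\varepsilon,0)$-DP is equivalent to $(\lambda,\varepsilon)$-RDP simultaneously for every order $\lambda \in [1,\infty]$, since R\'enyi divergence is monotonically non-decreasing in $\lambda$ and its limit as $\lambda \to \infty$ is the max-divergence, which equals the pure DP bound $\varepsilon$. In particular, for any two parameters $\lambda \in (1,\infty)$ and $\hat\lambda \in [1,\infty)$ we may invoke Theorem \ref{thm:family} with the same value $\varepsilon = \hat\varepsilon$.

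First, I would instantiate Theorem \ref{thm:family} with $\hat\varepsilon = \varepsilon$, which yields that the tuning algorithm $A$ satisfies $(\lambda,\varepsilon'(\lambda,\hat\lambda))$-RDP with
\begin{equation*}
\varepsilon'(\lambda,\hat\lambda) = \varepsilon + (1+\eta)\left(1 - \tfrac{1}{\hat\lambda}\right)\varepsilon + \frac{(1+\eta)\log(1/\gamma)}{\hat\lambda} + \frac{\log \ex{}{K}}{\lambda - 1}.
\end{equation*}
Next, I would push $\hat\lambda \to \infty$: the factor $1-1/\hat\lambda$ tends to $1$, turning the second term into $(1+\eta)\varepsilon$, and the third term vanishes because $\log(1/\gamma)$ is a fixed constant (note $\gamma \in (0,1)$ is independent of $\hat\lambda$). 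Finally, I would push $\lambda \to \infty$; the last term $\log\ex{}{K}/(\lambda-1) \to 0$ since $\ex{}{K}$ is a finite constant depending only on $\eta,\gamma$ (per Definition \ref{defn:etnb}). Taking the infimum over $\lambda,\hat\lambda$ of the resulting upper bound gives $\varepsilon' = \varepsilon + (1+\eta)\varepsilon = (2+\eta)\varepsilon$, and since this bound is achieved in the limit $\lambda \to \infty$, we conclude $A$ is $(\infty,(2+\eta)\varepsilon)$-RDP, i.e., $((2+\eta)\varepsilon, 0)$-DP.

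There is no real obstacle here beyond bookkeeping: the only subtlety is justifying the limit $\lambda \to \infty$ rigorously. One clean way is to note that if $A$ is $(\lambda,\varepsilon'(\lambda))$-RDP for every finite $\lambda$ with $\varepsilon'(\lambda) \to (2+\eta)\varepsilon$, then for any measurable event $S$ and any adjacent inputs $x,x'$, Markov's inequality applied to the privacy loss random variable yields $\pr{}{A(x) \in S} \le e^{\varepsilon'(\lambda)} \pr{}{A(x') \in S}^{1-1/\lambda}$, and letting $\lambda \to \infty$ recovers the pure DP inequality with constant $(2+\eta)\varepsilon$. This converts the limiting RDP bound into the claimed pure DP statement and completes the proof.
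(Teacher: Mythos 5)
Your proposal is correct and takes essentially the same approach the paper intends (the paper simply remarks that the corollary ``corresponds to Theorem~\ref{thm:family} with $\lambda\to\infty$ and $\hat\lambda\to\infty$'' without spelling out the limit argument, which you do). The substitution $\hat\varepsilon=\varepsilon$ and the sequential limits $\hat\lambda\to\infty$, $\lambda\to\infty$ are exactly right and produce $(2+\eta)\varepsilon$.

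One small remark on the final bookkeeping step: the inequality $\pr{}{A(x)\in S}\le e^{\varepsilon'(\lambda)}\pr{}{A(x')\in S}^{1-1/\lambda}$ that you invoke is really a consequence of H\"older's inequality (it is the standard step behind RDP-to-approximate-DP conversion), not Markov's inequality per se; the naming is off even though the bound is valid. A cleaner route, already available in the paper's toolkit (Lemma~\ref{lem:renyi}, Remark~\ref{rem:monotone}), is to use monotonicity and continuity of R\'enyi divergence: $\dr{\lambda}{A(x)}{A(x')}$ is nondecreasing in $\lambda$ and converges to $\dr{\infty}{A(x)}{A(x')}$ as $\lambda\to\infty$, while the upper bound $\varepsilon'(\lambda)\searrow(2+\eta)\varepsilon$; passing to the limit in the pointwise inequality immediately gives $\dr{\infty}{A(x)}{A(x')}\le(2+\eta)\varepsilon$, i.e., $((2+\eta)\varepsilon,0)$-DP. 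Either route closes the proof.
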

Our result is a generalization of the result of \citet{LiuT19} -- they show that, if $K$ follows a geometric distribution and $Q$ satisfies $(\varepsilon,0)$-DP, then $A$ satisfies $(3\varepsilon,0)$-DP. Setting $\eta=1$ in Corollary \ref{cor:puredp} recovers their result. If we set $\eta<1$, then we obtain an improved privacy bound.

Another example is if $Q$ satisfies concentrated DP \citep{DworkR16,BunS16}. This is the type of guarantee that is obtained by adding Gaussian noise to a bounded sensitivity function. In particular, this is the type of guarantee we would obtain from noisy gradient descent.\footnote{Note that the privacy of noisy \emph{stochastic} gradient descent (DP-SGD) is not well characterized by concentrated DP \citep{BunDRS18}, but, for our purposes, this is a close enough approximation.} 
\begin{cor}[Theorem \ref{thm:family} for Concentrated DP]\label{cor:cdp_log}
    Let $Q : \mathcal{X}^n \to \mathcal{Y}$ be a randomized algorithm satisfying $\rho$-zCDP -- i.e., $(\lambda,\rho\cdot\lambda)$-R\'enyi DP for all $\lambda>1$. Let $\eta \in (-1,\infty)$ and $\gamma \in (0,1)$. Define $A : \mathcal{X}^n \to \mathcal{Y}$ and $K \gets \etnb{\eta}{\gamma}$ as in Theorem \ref{thm:family}. Assume $\rho \le \log(1/\gamma)$. Then $A$ satisfies $(\lambda,\varepsilon')$-R\'enyi DP for all $\lambda>1$ with 
    \begin{equation*}
        \varepsilon' = \left\{ \begin{array}{cl} 2\sqrt{\rho\cdot\log\left(\ex{}{K}\right)} + 2(1+\eta)\sqrt{\rho  \log(1/\gamma)} -\eta\rho & \text{ if } \lambda \le 1 + \sqrt{\frac{1}{\rho}\log\left(\ex{}{K}\right)}  \\ \rho \cdot (\lambda-1) + \frac{1}{\lambda-1}\log\left(\ex{}{K}\right) + 2(1+\eta)\sqrt{\rho  \log(1/\gamma)} -\eta\rho & \text{ if } \lambda > 1 + \sqrt{\frac{1}{\rho}\log\left(\ex{}{K}\right)} \end{array} \right..
    \end{equation*}
\end{cor}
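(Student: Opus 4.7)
The plan is to apply Theorem \ref{thm:family} and then optimize over two auxiliary RDP orders that we are free to choose, using the hypothesis that $Q$ is $\rho$-zCDP.

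First I would instantiate Theorem \ref{thm:family} as follows. Since $\rho$-zCDP means $Q$ is $(\mu,\rho\mu)$-RDP for every $\mu\ge 1$, we may plug in $\varepsilon=\rho\lambda_1$ (with some $\lambda_1>1$ to be chosen) and $\hat\varepsilon=\rho\hat\lambda$ (with some $\hat\lambda\ge 1$ to be chosen). The theorem then yields that $A$ is $(\lambda_1,\varepsilon')$-RDP with
\begin{equation*}
    \varepsilon' \;=\; \rho\lambda_1 \;+\; (1+\eta)(\hat\lambda-1)\rho \;+\; \frac{(1+\eta)\log(1/\gamma)}{\hat\lambda} \;+\; \frac{\log\ex{}{K}}{\lambda_1-1}.
\end{equation*}
Because $\varepsilon'$ does not depend on $\lambda$ in a way that requires $\lambda_1=\lambda$, we can choose $\lambda_1\ge \lambda$ and invoke monotonicity of R\'enyi divergence in the order to conclude $(\lambda,\varepsilon')$-RDP.

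Next I would optimize the $\hat\lambda$-dependent terms $(1+\eta)\hat\lambda\rho+(1+\eta)\log(1/\gamma)/\hat\lambda$. By AM--GM (or a one-line calculus check), the minimum over $\hat\lambda>0$ is attained at $\hat\lambda^\star=\sqrt{\log(1/\gamma)/\rho}$ with value $2(1+\eta)\sqrt{\rho\log(1/\gamma)}$. The hypothesis $\rho\le\log(1/\gamma)$ is exactly what ensures $\hat\lambda^\star\ge 1$, so this choice is legal. Substituting back gives
\begin{equation*}
    \varepsilon' \;=\; \rho\lambda_1 - (1+\eta)\rho + 2(1+\eta)\sqrt{\rho\log(1/\gamma)} + \frac{\log\ex{}{K}}{\lambda_1-1},
\end{equation*}
and the $-(1+\eta)\rho+\rho\lambda_1$ part will combine at the end into $\rho(\lambda_1-1)-\eta\rho$.

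Then I would optimize the remaining $\lambda_1$-dependent expression $\rho\lambda_1+\log\ex{}{K}/(\lambda_1-1)$, subject to $\lambda_1\ge\lambda$. Differentiating in $\lambda_1-1$ gives the unconstrained minimizer $\lambda_1^\star=1+\sqrt{\log\ex{}{K}/\rho}$ with value $\rho+2\sqrt{\rho\log\ex{}{K}}$. This splits into two cases: if $\lambda\le\lambda_1^\star$ we pick $\lambda_1=\lambda_1^\star$, and after combining with the $-(1+\eta)\rho$ term we obtain $-\eta\rho+2\sqrt{\rho\log\ex{}{K}}+2(1+\eta)\sqrt{\rho\log(1/\gamma)}$, which is the first case. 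If instead $\lambda>\lambda_1^\star$, the objective is increasing to the right of $\lambda_1^\star$, so the constrained optimum is $\lambda_1=\lambda$, giving $\rho(\lambda-1)+\log\ex{}{K}/(\lambda-1)-\eta\rho+2(1+\eta)\sqrt{\rho\log(1/\gamma)}$, which is the second case.

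The argument is essentially an exercise in two-variable calculus on the bound from Theorem \ref{thm:family}; there is no real obstacle. The only mild subtlety is making sure both optimized orders stay in their admissible ranges ($\hat\lambda\ge 1$ and $\lambda_1\ge\lambda$), which is handled by the assumption $\rho\le\log(1/\gamma)$ for $\hat\lambda$ and by the monotonicity of R\'enyi divergence in the order for $\lambda_1$.
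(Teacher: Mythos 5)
Your proposal is correct and follows essentially the same route as the paper: plug the $\rho$-zCDP guarantee into Theorem \ref{thm:family}, optimize the order $\hat\lambda$ to $\sqrt{\log(1/\gamma)/\rho}$ (valid because $\rho\le\log(1/\gamma)$), optimize the outer order to $1+\sqrt{\log(\ex{}{K})/\rho}$, and handle small $\lambda$ via monotonicity of R\'enyi divergence in the order (the paper's Remark~\ref{rem:monotone}). The only cosmetic difference is that you name the auxiliary order $\lambda_1$ explicitly, whereas the paper reuses $\lambda$ and then notes the monotonicity substitution at the end.
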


\begin{figure}[t]
\centering
\begin{minipage}[t]{0.32\textwidth}
\includegraphics[width=\textwidth]{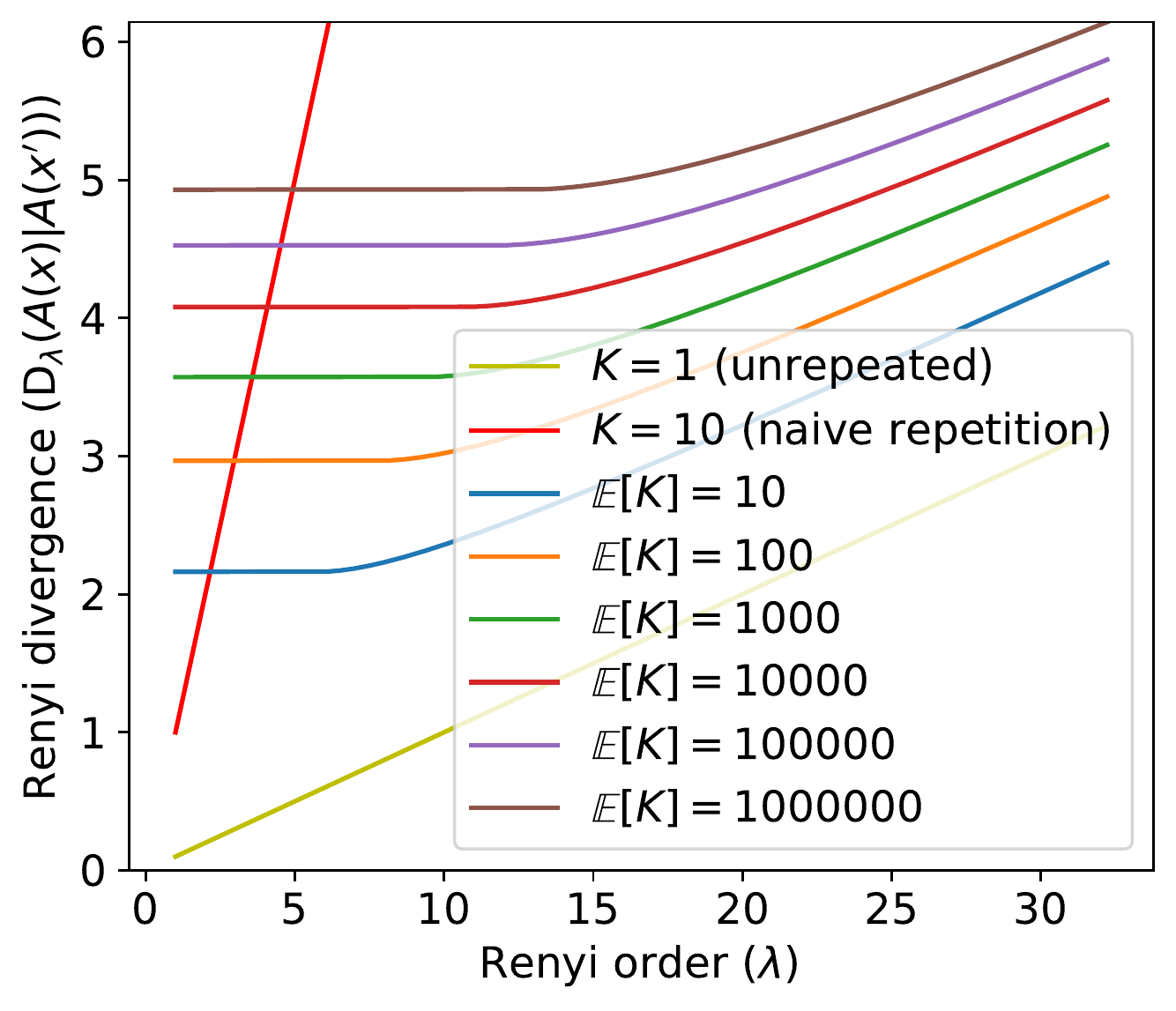}
        \caption{\label{fig:rdp_log} R\'enyi DP guarantees from Corollary \ref{cor:cdp_log} for various expected numbers of repetitions of the logarithmic distribution (i.e., truncated negative binomial with $\eta=0$), compared with base algorithm ($0.1$-zCDP) and na\"ive composition.}
\end{minipage}
~
\begin{minipage}[t]{0.32\textwidth}
    \includegraphics[width=\textwidth]{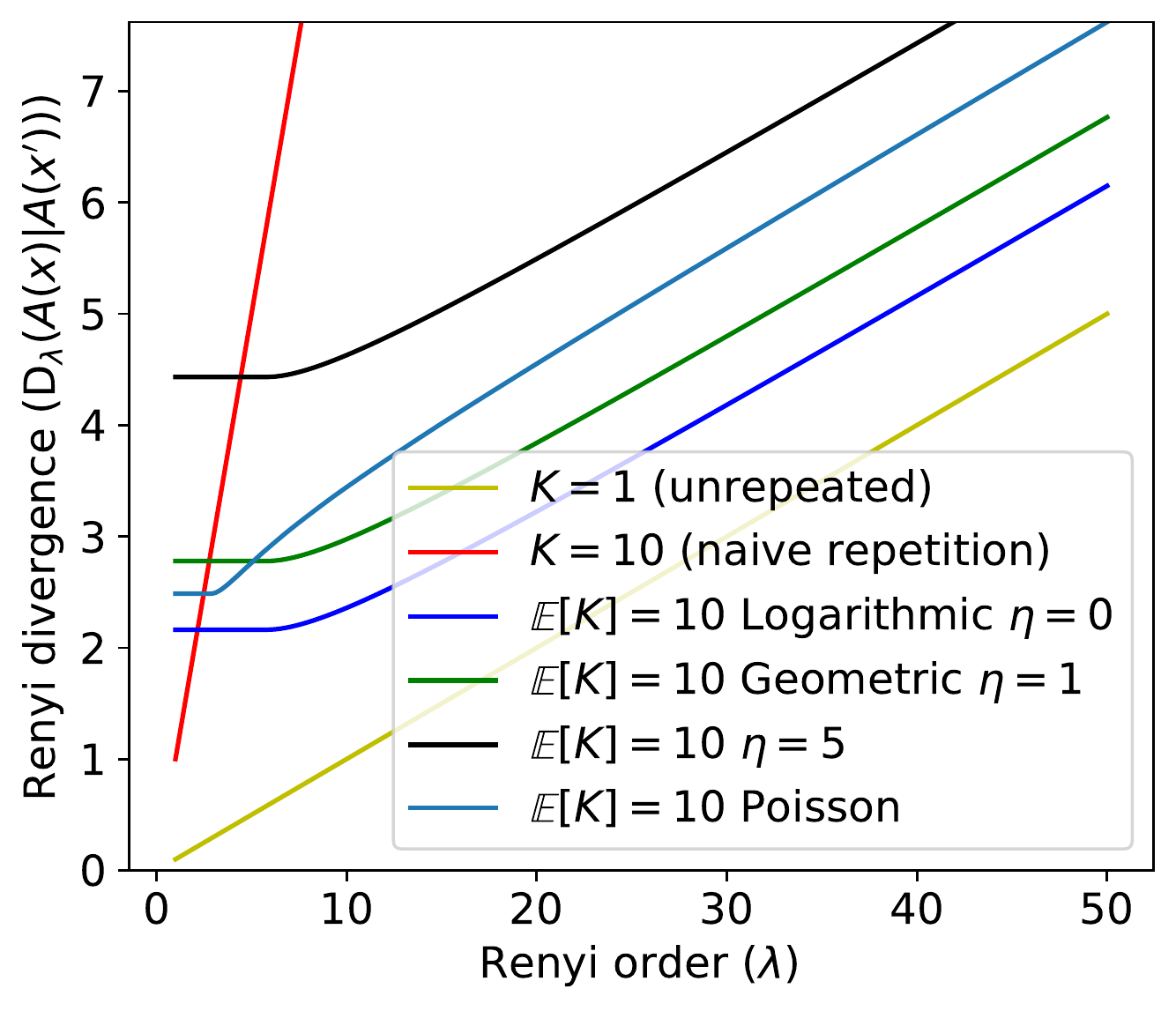}
    \caption{\label{fig:rdp_multiple} R\'enyi DP guarantees for repetition using different distributions or na\"ive composition with mean $10$, with $0.1$-zCDP base algorithm.}
\end{minipage}
~
\begin{minipage}[t]{0.32\textwidth}
    \includegraphics[width=\textwidth]{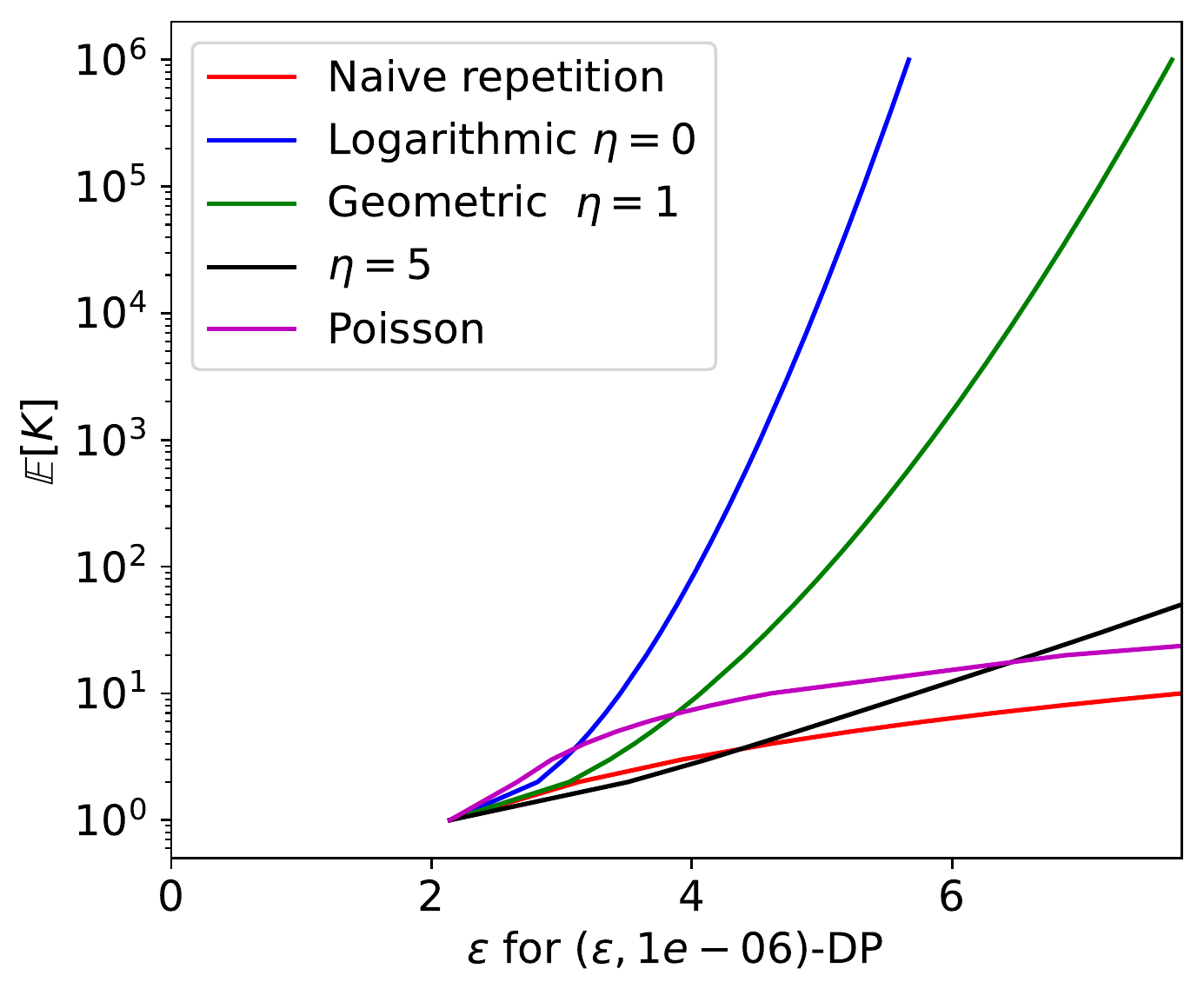}
    \caption{\label{fig:adp_multiple} Privacy versus expected number of repetitions using different distributions or na\"ive composition. R\'enyi DP guarantees are converted to approximate DP -- i.e., we plot $\varepsilon$ such that we attain $(\varepsilon, 10^{-6})$-DP. The base algorithm is $0.1$-zCDP.}
\end{minipage}
\end{figure}

Figure \ref{fig:rdp_log} shows what the guarantee of Corollary \ref{cor:cdp_log} looks like. Here we start with $0.1$-zCDP and perform repetition following the logarithmic distribution ($\eta=0$) with varying scales (given by $\gamma$) and plot the R\'enyi DP guarantee attained by outputting the best of the repeated runs. The improvement over naive composition, which instead grows linearly, is clear. %
We also study other distributions on the number of repetitions, obtained by varying $\eta$, and Figure \ref{fig:rdp_multiple} gives a comparison. Figure \ref{fig:adp_multiple} shows what these bounds look like if we convert to approximate $(\varepsilon,\delta)$-DP with $\delta=10^{-6}$.

\begin{rem}\label{rem:monotone}
Corollary \ref{cor:cdp_log} uses the monotonicity property of R\'enyi divergences: If $\lambda_1 \le \lambda_2$, then $\dr{\lambda_1}{P}{Q} \le \dr{\lambda_2}{P}{Q}$ \citep[Theorem 3]{VanErvenH14}. Thus $(\lambda_2,\varepsilon)$-RDP implies $(\lambda_1,\varepsilon)$-RDP for any $\lambda_1 \le \lambda_2$. In particular, the bound of Theorem \ref{thm:family} yields $\varepsilon' \to \infty$ as $\lambda \to 1$, so we use monotonicity to bound $\varepsilon'$ for small $\lambda$.
\end{rem}

    \paragraph{Poisson Distribution.}
    We next consider the Poisson distribution, which offers a different privacy-utility tradeoff than the truncated negative binomial distribution.
    The Poisson distribution with mean $\mu \ge 0$ is given by $\pr{}{K=k} = e^{-\mu} \frac{\mu^k}{k!}$ for all $k \ge 0$. Note that $\pr{}{K=0}=e^{-\mu}>0$ here, whereas the truncated negative binomial distribution does not include $0$ in its support. We could condition on $K \ge 1$ here too, but %
    we prefer to stick with the standard definition.
    We remark that (modulo the issue around $\pr{}{K=0}$) the Poisson distribution is closely related to the truncated negative binomial distribution. If we take the limit as $\eta \to \infty$ while the mean remains fixed, then the negative binomial distribution becomes a Poisson distribution.
    Conversely, the negative binomial distribution can be represented as a convex combination of Poisson distributions or as a compound of Poisson and logarithmic; see Appendix \ref{app:pgf} for more details. %
    
    \begin{thm}[Main Privacy Result -- Poisson Distribution]\label{thm:poisson}
    Let $Q : \mathcal{X}^n \to \mathcal{Y}$ be a randomized algorithm satisfying $(\lambda,\varepsilon)$-RDP and $(\hat\varepsilon,\hat\delta)$-DP for some $\lambda \in (1,\infty)$ and $\varepsilon, \hat\varepsilon,\hat\delta \ge 0$. Assume $\mathcal{Y}$ is totally ordered. Let $\mu>0$.
    
    Define an algorithm $A : \mathcal{X}^n \to \mathcal{Y}$ as follows. Draw $K$ from a Poisson distribution with mean $\mu$ -- i.e., $\pr{}{K=k} = e^{-\mu} \cdot \frac{\mu^k}{k!}$ for all $k \ge 0$. Run $Q(x)$ repeatedly $K$ times. Then $A(x)$ returns the best value from the $K$ runs. If $K=0$, $A(x)$ returns some arbitrary output independent from the input $x$.
    
    If $e^{\hat\varepsilon} \le 1 + \frac{1}{\lambda-1}$, then $A$ satisfies $(\lambda,\varepsilon')$-RDP where
    \[
        \varepsilon' = \varepsilon + \mu \cdot \hat\delta + \frac{\log \mu}{\lambda-1}.
    \]
    \end{thm}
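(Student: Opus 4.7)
The plan is to compute the R\'enyi moment of $A(x)$ versus $A(x')$ directly, exploiting the fact that a Poisson number of repetitions gives the maximum-of-$K$-draws a closed-form exponential CDF, which in turn lets single-run DP be applied pointwise rather than via composition. Let $P = Q(x)$ and $P' = Q(x')$, with densities $p,p'$ (with respect to a common dominating measure) and survival functions $\bar{P}(y) := \Pr_{Y \sim P}[Y > y]$ and $\bar{P}'(y)$. Conditioning on $K = k$ and taking the maximum of $k$ i.i.d.\ draws, then summing against the Poisson pmf, gives
$$\Pr[A(x) \le y] \;=\; \sum_{k=0}^{\infty} e^{-\mu}\tfrac{\mu^k}{k!} (1-\bar{P}(y))^k \;=\; e^{-\mu \bar{P}(y)},$$
where the input-independent $K=0$ output is treated as a minimum element of an augmented range. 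Differentiating yields the density $f(y) = \mu\, p(y)\, e^{-\mu \bar{P}(y)}$ on $\mathcal{Y}$, and hence the density ratio $f(y)/f'(y) = (p(y)/p'(y))\, e^{\mu(\bar{P}'(y) - \bar{P}(y))}$.

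Substituting into the moment gives the identity
$$f(y)\bigl(f(y)/f'(y)\bigr)^{\lambda-1} \;=\; \mu\, p(y)\, \bigl(p(y)/p'(y)\bigr)^{\lambda-1}\, e^{(\lambda-1)\mu \bar{P}'(y) \,-\, \lambda \mu \bar{P}(y)}.$$
The crux is bounding the trailing exponential uniformly in $y$. Applying $(\hat\varepsilon,\hat\delta)$-DP to the monotone event $\{Y > y\}$ gives $\bar{P}'(y) \le e^{\hat\varepsilon} \bar{P}(y) + \hat\delta$, so
$$(\lambda-1)\bar{P}'(y) - \lambda \bar{P}(y) \;\le\; \bigl((\lambda-1)e^{\hat\varepsilon} - \lambda\bigr)\bar{P}(y) + (\lambda-1)\hat\delta.$$
The hypothesis $e^{\hat\varepsilon} \le 1 + 1/(\lambda-1)$ is exactly the condition that makes the coefficient of $\bar{P}(y)$ nonpositive; since $\bar{P}(y) \ge 0$, the exponential is bounded pointwise by $e^{(\lambda-1)\mu\hat\delta}$.

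Pulling this uniform factor outside the integral and invoking $(\lambda,\varepsilon)$-RDP of $Q$ on the remaining integrand yields
$$\int f(y)\bigl(f(y)/f'(y)\bigr)^{\lambda-1}\, dy \;\le\; \mu\, e^{(\lambda-1)\mu\hat\delta} \int p(y)\bigl(p(y)/p'(y)\bigr)^{\lambda-1}\, dy \;\le\; \mu\, e^{(\lambda-1)(\varepsilon + \mu\hat\delta)}.$$
Taking logarithms and dividing by $\lambda-1$ delivers the claimed $\varepsilon' = \varepsilon + \mu\hat\delta + \log(\mu)/(\lambda-1)$. The $K=0$ atom of mass $e^{-\mu}$ has identical probability under $x$ and $x'$ (the fallback output is input-independent), so it contributes ratio $1$ and is absorbed into the augmented-range computation above.

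The main obstacle is the pointwise exponential bound in the third step: one must pick the $(\hat\varepsilon,\hat\delta)$-DP inequality in the correct direction (upper-bounding $\bar{P}'$ rather than $\bar{P}$), and recognise that $e^{\hat\varepsilon} \le 1 + 1/(\lambda-1)$ is precisely the threshold that forces the $\bar{P}(y)$ coefficient to be nonpositive \emph{uniformly in $y$}, which is what keeps the bound independent of the shape of $P$. Secondary care is needed in cleanly handling the $K=0$ atom without incurring a spurious additive correction in the final $\varepsilon'$.
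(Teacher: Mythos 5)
Your proof takes a genuinely different route from the paper's. The paper proves a generic bound (Lemma~\ref{lem:pgf}) for an arbitrary distribution on $K$ -- expressing $\dr{\lambda}{A}{A'}$ via the PGF derivative through Hölder's inequality and a convexity trick -- and then specializes to the Poisson PGF $f'(x)=\mu e^{\mu(x-1)}$ in the proof of Theorem~\ref{thm:poisson}. You instead bypass the generic machinery entirely by exploiting the closed-form identity $\Pr[A(x)\le y]=e^{-\mu\bar P(y)}$, which is special to the Poisson case, and then differentiate to obtain the density of $A$ directly. This is more elementary and self-contained for this particular theorem, at the cost of not generalizing to the truncated negative binomial family. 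After that, the two arguments coincide: your survival-function bound $\bar P'(y)\le e^{\hat\varepsilon}\bar P(y)+\hat\delta$ is exactly the paper's $(1-q')\le e^{\hat\varepsilon}(1-q)+\hat\delta$ with $q=1-\bar P(y)$, $q'=1-\bar P'(y)$, and your observation that $e^{\hat\varepsilon}\le 1+1/(\lambda-1)$ forces the coefficient of $\bar P(y)$ to be nonpositive is precisely the paper's final step.

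One point deserves care: your treatment of the $K=0$ atom is imprecise. You claim it ``contributes ratio $1$ and is absorbed,'' but a ratio of $1$ does not mean it contributes nothing. The Rényi moment is $\int_\mathcal{Y} f_A(y)^\lambda f_{A'}(y)^{1-\lambda}\,dy + e^{-\mu}$, since the $\bot$ atom contributes $(e^{-\mu})^\lambda(e^{-\mu})^{1-\lambda}=e^{-\mu}$, and your density integrates to only $1-e^{-\mu}$ over $\mathcal{Y}$. Your final bound is therefore $e^{-\mu}+\mu e^{(\lambda-1)(\varepsilon+\mu\hat\delta)}$, not $\mu e^{(\lambda-1)(\varepsilon+\mu\hat\delta)}$. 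In fairness, the paper's own argument inherits the same subtlety: Lemma~\ref{lem:pgf} is proved under the assumption $\Pr[K=0]=0$ with only a footnote asserting the general case, and the sum $\sum_{y\in\mathcal{Y}}A(y)^\lambda A'(y)^{1-\lambda}$ there similarly omits the $\bot$ mass. The extra $e^{-\mu}$ is negligible for the values of $\mu$ one would use in practice, but it is not literally absorbed, so the cleanest fix is to state the slightly weaker $\varepsilon'=\frac{1}{\lambda-1}\log\bigl(e^{-\mu}+\mu e^{(\lambda-1)(\varepsilon+\mu\hat\delta)}\bigr)$.
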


    The assumptions of Theorem \ref{thm:poisson} are different from Theorem \ref{thm:family}: We assume a R\'enyi DP guarantee and an approximate DP guarantee on $Q$, rather than two R\'enyi DP guarantees. We remark that a R\'enyi DP guarantee can be converted into an approximate DP guarantee -- $(\lambda,\varepsilon)$-RDP implies $(\hat\varepsilon,\hat\delta)$-DP for all $\hat\varepsilon \ge \varepsilon$ and $\hat\delta = e^{(\lambda-1)(\hat\varepsilon-\varepsilon)} \cdot \frac{1}{\lambda} \cdot \left( 1 - \frac{1}{\lambda} \right)^{\lambda-1}$ \citep{mironov2017renyi,CanonneKS20}. Thus this statement can be directly compared to our other result. We show such a comparison in Figure \ref{fig:rdp_multiple} and Figure \ref{fig:adp_multiple}. The proofs of Theorems \ref{thm:family} and \ref{thm:poisson} are included in Appendix~\ref{ap:proofs-distribution-bounds}.

\vspace{-0.1in}

\subsection{Generic R\'enyi DP Bound for Any Distribution on the Number of Repetitions}
\label{ssec:main-technical-lemma}

\vspace{-0.1in}

We now present our main technical lemma, which applies to any distribution on the number of repetitions $K$. Theorems \ref{thm:family} and \ref{thm:poisson} are derived from this result. It gives a R\'enyi DP bound for the repeated algorithm in terms of the R\'enyi DP of the base algorithm and the probability generating function of the number of repetitions applied to probabilities derived from the base algorithm. %

\begin{lem}[Generic Bound]\label{lem:pgf}
    Fix $\lambda>1$.
    Let $K$ be a random variable supported on $\mathbb{N}\cup\{0\}$. Let $f : [0,1] \to \mathbb{R}$ be the probability generating function of $K$ -- i.e., $f(x) := \sum_{k=0}^\infty \pr{}{K=k} \cdot x^k$.  %
    
    Let $Q$ and $Q'$ be distributions on $\mathcal{Y}$. Assume $\mathcal{Y}$ is totally ordered. 
    Define a distribution $A$ on $\mathcal{Y}$ as follows. First  sample $K$. Then  sample from $Q$ independently $K$ times and output the best of these samples.\footnote{If $K=0$, the output can be arbitrary, as long as it is the same for both $A$ and $A'$.} This output is a sample from $A$. We define $A'$ analogously with $Q'$ in place of $Q$. 
    Then 
    \begin{equation}
    \dr{\lambda}{A}{A'} \le \dr{\lambda}{Q}{Q'} + \frac{1}{\lambda-1} \log \left( f'(q)^\lambda \cdot f'(q')^{1-\lambda} \right),\label{eq:generic}
    \end{equation}
    where applying the same postprocessing to $Q$ and $Q'$ gives probabilities $q$ and $q'$  respectively -- i.e., there exists an arbitrary function $g : \mathcal{Y} \to [0,1]$ such that $q=\ex{X \gets Q}{g(X)}$ and $q'=\ex{X' \gets Q'}{g(X')}$.
\end{lem}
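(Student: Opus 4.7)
The plan is to express the CDF of $A$ using the probability generating function $f$ and then reduce $\dr{\lambda}{A}{A'}$ to $\dr{\lambda}{Q}{Q'}$ by pulling a pointwise supremum factor outside the integral (or sum). Since $A$ returns the maximum of $K$ i.i.d.\ samples from $Q$, conditioning on $K$ yields
$$F_A(y) := \pr{}{A\le y} = \sum_{k=0}^\infty \pr{}{K=k}\, F_Q(y)^k = f(F_Q(y)),$$
and analogously $F_{A'}(y) = f(F_{Q'}(y))$ (the $K=0$ case contributes equally to both sides since the arbitrary output can be identified with $\inf\mathcal{Y}$). In the absolutely continuous case, differentiating gives a density $a(y) = f'(F_Q(y))\, q(y)$, so
$$e^{(\lambda-1)\dr{\lambda}{A}{A'}} = \int a(y)^\lambda a'(y)^{1-\lambda}\,dy = \int \Phi(y)\, q(y)^\lambda q'(y)^{1-\lambda}\,dy,$$
with $\Phi(y) := f'(F_Q(y))^\lambda f'(F_{Q'}(y))^{1-\lambda}$. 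In the discrete case I would instead invoke convexity of $f$ (a power series with non-negative coefficients, hence $f'$ is non-decreasing) to sandwich $\pr{}{A=y} = f(F_Q(y)) - f(F_Q(y^-))$ between $f'(F_Q(y^-))\,\pr{}{Q=y}$ and $f'(F_Q(y))\,\pr{}{Q=y}$, and analogously for $A'$, yielding an inequality of the same form.

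The crux is to bound $\Phi$ by its supremum: letting $y^*$ (nearly) attain $\sup_y \Phi(y)$ gives
$$\int \Phi(y)\, q(y)^\lambda q'(y)^{1-\lambda}\,dy \;\le\; \Phi(y^*)\cdot e^{(\lambda-1)\dr{\lambda}{Q}{Q'}}.$$
I then exhibit the required post-processing by taking $g(y) := \mathbf{1}[y \le y^*]$, so that $q = \ex{X\gets Q}{g(X)} = F_Q(y^*)$ and $q' = \ex{X'\gets Q'}{g(X')} = F_{Q'}(y^*)$, which forces $\Phi(y^*) = f'(q)^\lambda f'(q')^{1-\lambda}$. Taking logarithms and dividing by $\lambda-1>0$ yields exactly the bound (\ref{eq:generic}).

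The delicate part I expect is the discrete setting: the lower sandwich bound on $\pr{}{A'=y}$ naturally involves $f'(F_{Q'}(y^{-}))$ (the left endpoint of the interval $[F_{Q'}(y^-), F_{Q'}(y)]$), whereas $g=\mathbf{1}[y\le y^*]$ produces $q' = F_{Q'}(y^*)$, leaving a one-atom discrepancy of $\pr{}{Q'=y^*}$. I would absorb this either by allowing a fractional indicator $g(y^*)\in[0,1]$ at the threshold (which traces out the segment $\{(F_Q(y^{*-}) + t\,\pr{}{Q=y^*},\ F_{Q'}(y^{*-}) + t\,\pr{}{Q'=y^*}) : t\in[0,1]\}$ of achievable $(q,q')$) or by a limiting/smoothing argument reducing to the continuous case. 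The continuous case already contains all the substantive ingredients---the PGF identity $F_A = f\circ F_Q$ and monotonicity of $f'$---so the discrete case is essentially bookkeeping.
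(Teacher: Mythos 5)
Your continuous-case argument is correct and uses the same core idea as the paper: the PGF identity $F_A = f \circ F_Q$, factoring the likelihood ratio of $A$ into that of $Q$ times $f'(F_Q(y))^\lambda f'(F_{Q'}(y))^{1-\lambda}$, pulling out the supremum, and realizing that supremum as a threshold postprocessing $g=\mathbf{1}[\cdot\le y^*]$. In that case $F_Q(y^*)=F_Q(y^{*-})$ so there is no ambiguity and the bound follows.

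However, there is a genuine gap in your treatment of the discrete case, and it is not merely bookkeeping -- it is where the lemma's precise quantifier (``the \emph{same} $g$ produces both $q$ and $q'$'') is actually earned. Your sandwich bound gives $\pr{}{A=y}\le f'(F_Q(y))\pr{}{Q=y}$ and $\pr{}{A'=y}\ge f'(F_{Q'}(y^-))\pr{}{Q'=y}$; after raising to powers $\lambda$ and $1-\lambda<0$ this yields the factor $f'(F_Q(y))^\lambda f'(F_{Q'}(y^-))^{1-\lambda}$, whose two arguments are $\ex{Q}{\mathbf 1[\cdot\le y]}$ and $\ex{Q'}{\mathbf 1[\cdot< y]}$ -- two \emph{different} postprocessings. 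Moreover, since $f'$ is nondecreasing, replacing $F_{Q'}(y^-)$ by $F_{Q'}(y)$ to force a common $g$ moves the $(1-\lambda)$-power the wrong way, so you cannot just pass to the same threshold afterward. You correctly identify the target -- a fractional $g(y^*)=t^*\in[0,1]$ tracing the segment $\{(F_Q(y^{*-})+t\pr{}{Q=y^*},\ F_{Q'}(y^{*-})+t\pr{}{Q'=y^*})\}$ -- but give no argument that $\pr{}{A=y^*}^\lambda\pr{}{A'=y^*}^{1-\lambda}$ is dominated by a \emph{single} point on that segment. The paper's route is to write $\pr{}{A=y}=\pr{}{Q=y}\cdot\ex{}{f'(X)}$ exactly with $X$ uniform on $[F_Q(y^-),F_Q(y)]$ (and likewise for $A'$), then apply Jensen with the jointly convex function $(u,v)\mapsto u^\lambda v^{1-\lambda}$ to move the powers inside the expectation under an \emph{arbitrary coupling} of $(X,X')$, and finally choose the proportional coupling $X=F_Q(y^-)+t\pr{}{Q=y}$, $X'=F_{Q'}(y^-)+t\pr{}{Q'=y}$ so that a single optimal $t^*$ emerges. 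That convexity-plus-coupling step is what your proposal is missing. Your alternative ``smoothing to the continuous case'' can be made rigorous by lifting $\mathcal Y$ to $\mathcal Y\times[0,1]$ with lexicographic order and spreading each atom uniformly across its fiber (this preserves $\dr{\lambda}{Q}{Q'}$, makes $\tilde Q$ atomless, and commutes with taking the max), but you have not spelled this out, so as written the discrete case remains open.
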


The proof of this generic bound is found in Appendix~\ref{ap:proof-generic-bound}. To interpret the theorem, we should imagine adjacent inputs $x, x' \in \mathcal{X}^n$, and then the distributions correspond to the algorithms run on these inputs: $A=A(x)$, $A'=A(x')$, $Q=Q(x)$, and $Q'=Q(x')$.
The bounds on R\'enyi divergence thus correspond to R\'enyi DP bounds. 
The derivative of the probability generating function -- $f'(x) = \ex{}{K \cdot x^{K-1}}$ -- is somewhat mysterious. A first-order intuition is that, if $q=q'$, then $f'(q)^\lambda \cdot f'(q')^{1-\lambda} = f'(q) \le f'(1) = \ex{}{K}$ and thus the last term in the bound \eqref{eq:generic} is simply $\frac{\log \ex{}{K}}{\lambda-1}$. A second-order intuition is that $q \approx q'$ by DP and postprocessing and, if $f'$ is smooth, then $f'(q) \approx f'(q')$ and the first-order intuition holds up to these approximations. Vaguely, $f'$ being smooth corresponds to the distribution of $K$ being spread out (i.e., not a point mass) and not too heavy-tailed (i.e. $K$ is small most of the time). The exact quantification of this smoothness depends on the form of the DP guarantee $q \approx q'$. %

In our work, we primarily compare three distributions on the number of repetitions: a point mass (corresponding to na\"ive repetition), the truncated negative binomial distribution, and the Poisson distribution. A point mass would have a polynomial as the probability generating function -- i.e., if $\pr{}{K=k}=1$, then $f(x) = \ex{}{x^K}=x^k$. The probability generating function of the truncated negative binomial distribution (Definition \ref{defn:etnb}) is
\begin{equation}
    f(x) = \ex{K \gets \etnb{\eta}{\gamma}}{x^K} =  \left\{\begin{array}{cl} \frac{(1-(1-\gamma)x)^{-\eta}-1}{\gamma^{-\eta}-1} & \text{ if } \eta \ne 0\\\frac{\log(1-(1-\gamma)x)}{\log (\gamma)}& \text{ if } \eta=0\end{array}\right..
\end{equation}
The probability generating function of the Poisson distribution with mean $\mu$ is given by $f(x) = e^{\mu\cdot(x-1)}$. We discuss probability generating functions further in Appendix \ref{app:pgf}.

\subsection{Utility and Runtime of our Hyperparameter Tuning Algorithm}
\label{ssec:utility}

\begin{figure}[t]
\centering
\begin{minipage}[t]{0.32\textwidth}

 \includegraphics[width=\textwidth]{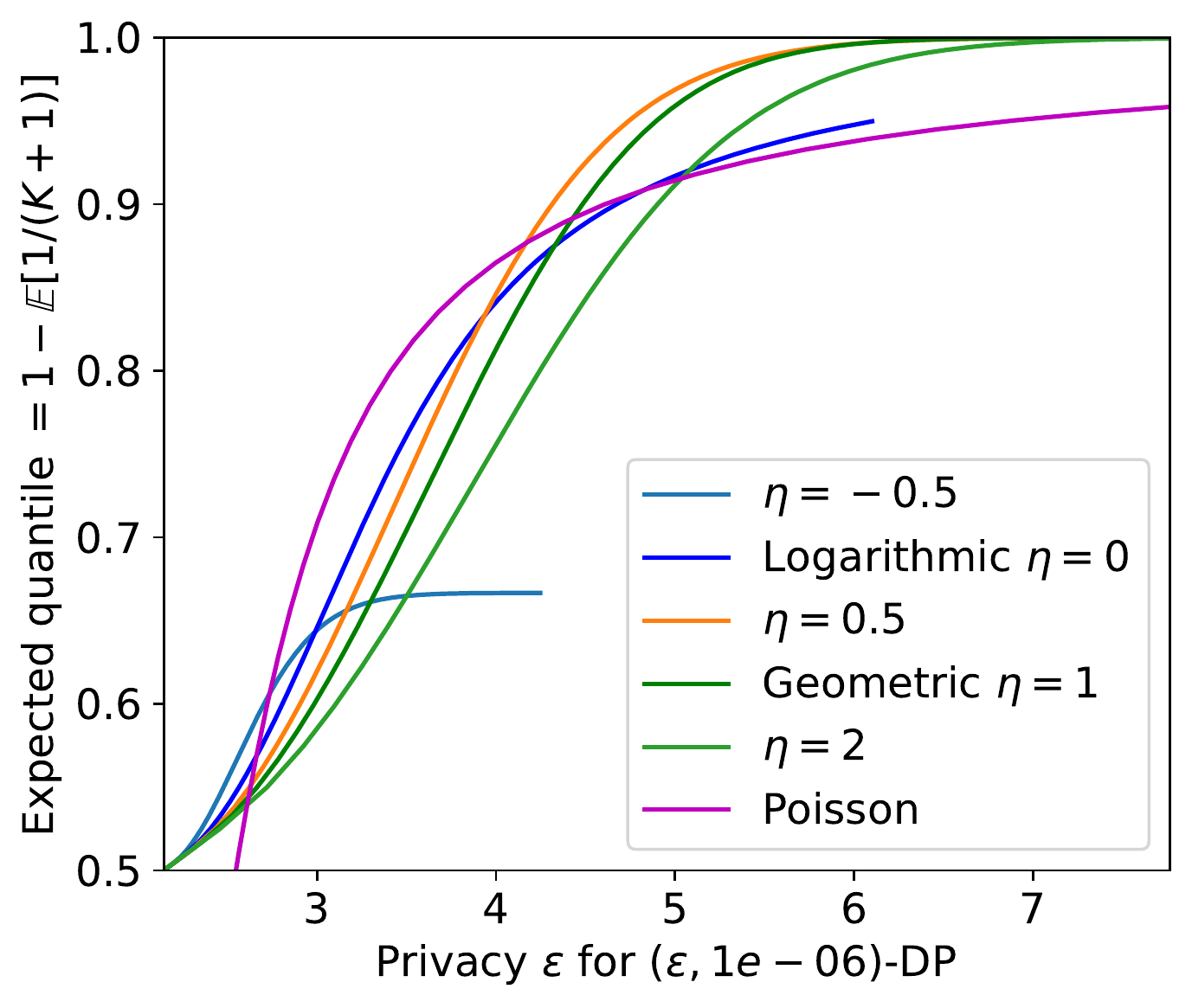}
    \caption{\label{fig:eps_quant} Expected quantile of the repeated algorithm $A$ as a function of the final privacy guarantee $(\varepsilon,10^{-6})$-DP for various distributions $K$, where each invocation of the base algorithm $Q$ is $0.1$-zCDP.}
\end{minipage}
~
\begin{minipage}[t]{0.32\textwidth}
    \includegraphics[width=\textwidth]{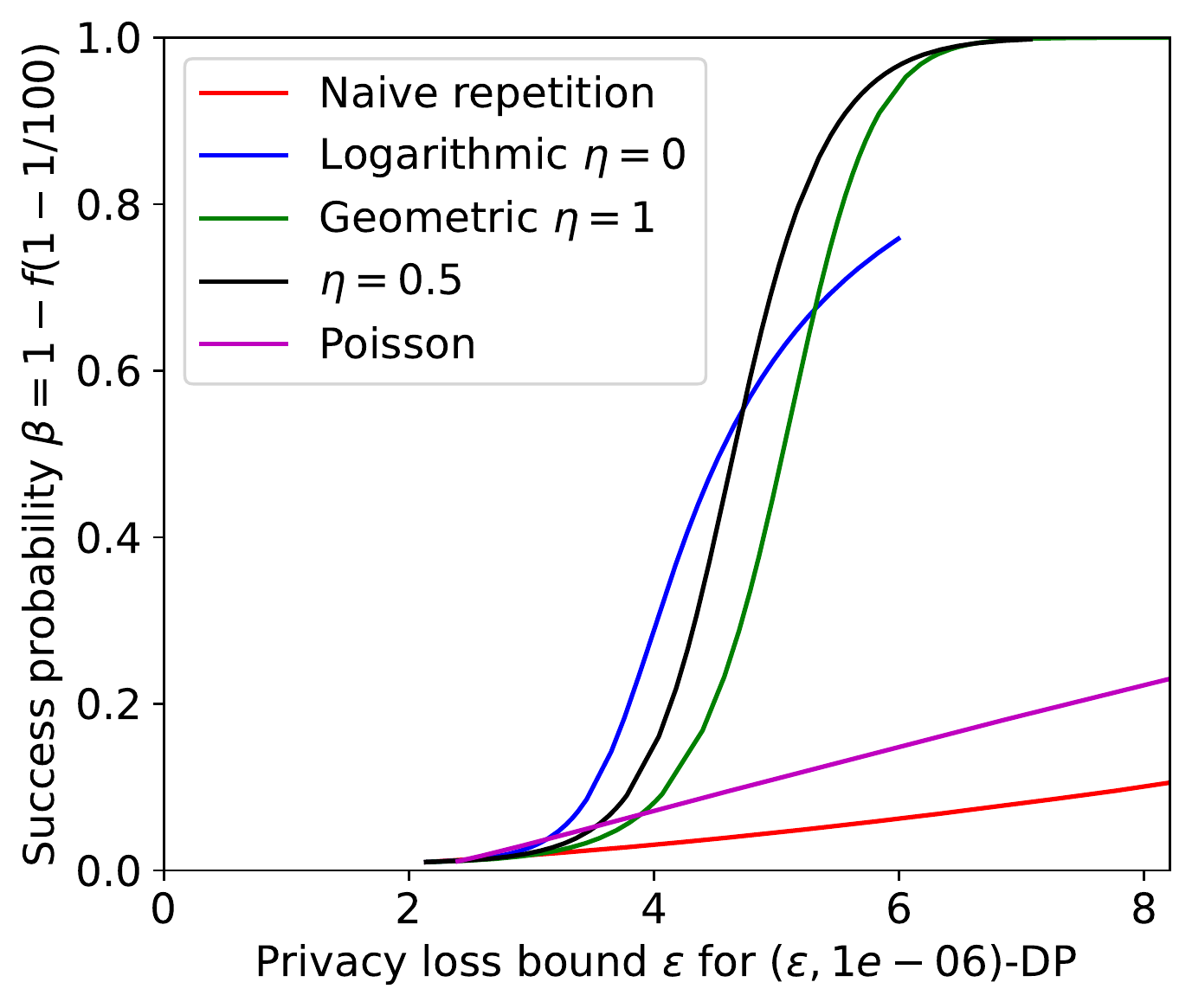}
    \caption{\label{fig:eps_beta} Final success probability ($\beta$) of the repeated algorithm $A$ as a function of the final privacy guarantee $(\varepsilon,10^{-6})$-DP for various distributions, where each invocation of the base algorithm $Q$ has a $1/100$ probability of success and is $0.1$-zCDP.}
\end{minipage}
~
\begin{minipage}[t]{0.32\textwidth}
   \includegraphics[width=\textwidth]{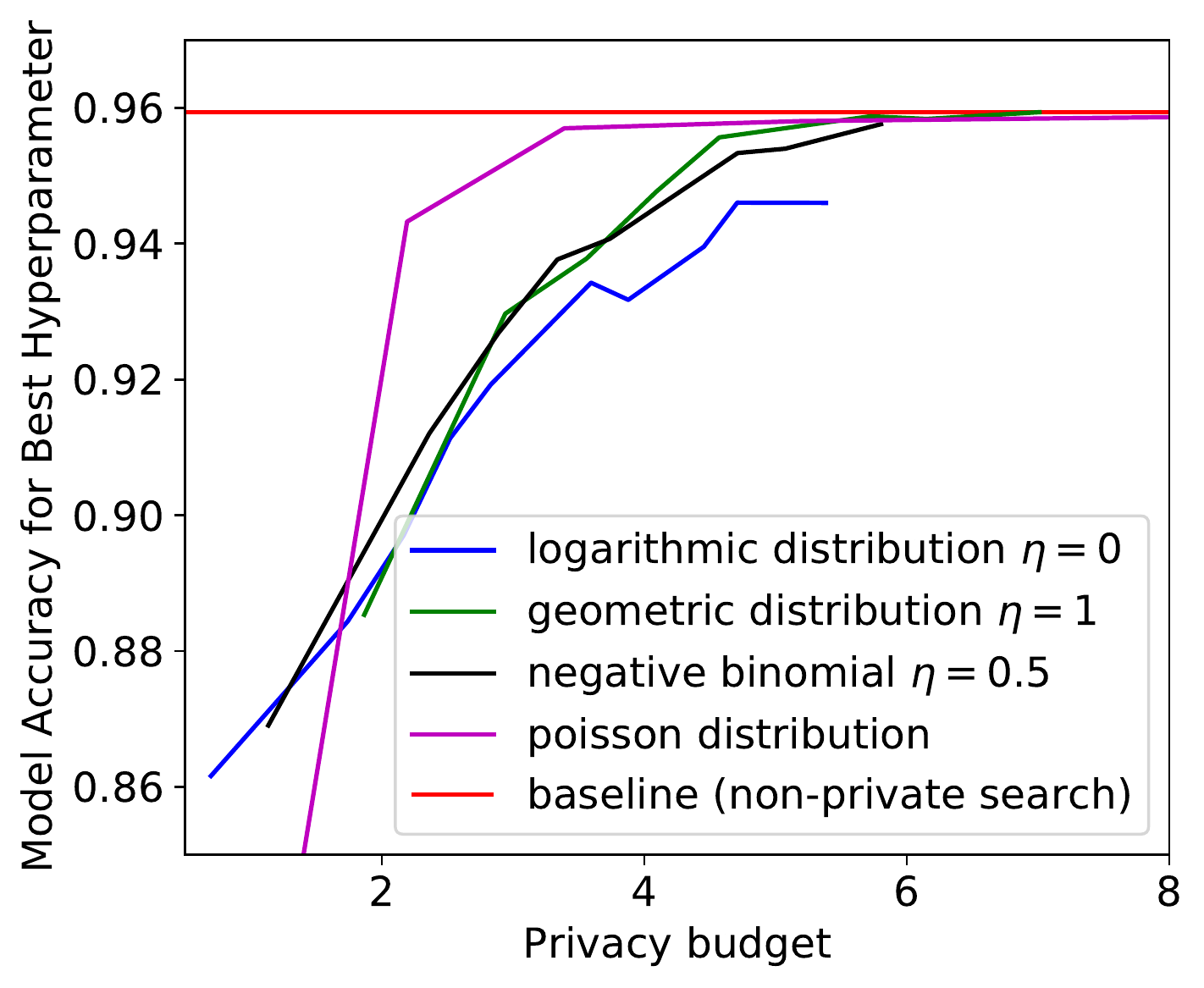}
    \caption{\label{fig:acc-dist} Accuracy of the CNN model obtained at the end of the hyperparameter search, for the different distributions on the number of repetitions $K$ we considered. We report the mean over 500 trials of the experiment.}
\end{minipage}
\end{figure}

Our analytical results thus far, Theorems \ref{thm:family} and \ref{thm:poisson} and Lemma \ref{lem:pgf}, provide privacy guarantees for our hyperparameter tuning algorithm $A$ when it is used with various distributions on the number of repetitions $K$. We now turn to the utility that this algorithm provides.
The utility of a hyperparameter search is determined by how many times the base algorithm (denoted $Q$ in the theorem statements) is run when we invoke the overall algorithm ($A$). The more often $Q$ is run, the more likely we are to observe a good output and $A$ is more likely to return the corresponding hyperparameter values. Note that the number of repetitions $K$ also determines the algorithm's runtime, so these are closely linked.

\emph{How does this distribution on the number of repetitions $K$ map to utility?}
As a first-order approximation, the utility and runtime are proportional to $\ex{}{K}$. Hence several of our figures compare the different distributions on $K$ based on a fixed expectation and Figure \ref{fig:adp_multiple} plots $\ex{}{K}$ on the vertical axis. However, this first-order approximation ignores the fact that some of the distributions we consider are more concentrated than others; even if the expectation is large, there might still be a significant probability that $K$ is small. Indeed, for $\eta \le 1$, the mode of the truncated negative binomial distribution is $K=1$. We found this to be an obstacle to using the (truncated) negative binomial distribution in practice in our experiments, and discuss this further in Appendix \ref{app:pgf-util}.

We can formulate utility guarantees more precisely by looking at the expected quantile of the output to measure our algorithm's utility. If we run the base algorithm $Q$ once, then the quantile of the output is (by definition) uniform on $[0,1]$ and has mean $0.5$. If we repeat the base algorithm a fixed number of times $k$, then the quantile of the best output follows a $\mathsf{Beta}(k,1)$ distribution, as it is the maximum of $k$ independent uniform random variables. The expectation in this case is $\frac{k}{k+1} = 1-\frac{1}{k+1}$. If we repeat a random number of times $K$, the expected quantile of the best result is given by  \begin{equation}\ex{}{\frac{K}{K+1}} = \ex{}{1-\frac{1}{K+1}} = \int_0^1 x \cdot f'(x) \mathrm{d}x = 1- \int_0^1 f(x) \mathrm{d}x,\label{eq:quantile}\end{equation} where $f(x) = \ex{}{x^K}$ is the probability generating function of $K$; see Appendix \ref{app:pgf-util} for further details. Figure \ref{fig:eps_quant} plots this quantity against privacy. We see that the Poisson distribution performs very well in an intermediate range, while the negative binomial distribution with $\eta=0.5$ does well if we want a strong utility guarantee. This means that Poisson is best used when little privacy budget is available for the hyperparameter search. Instead, the negative binomial distribution with $\eta=0.5$ allows us to improve the utility of the solution returned by the hyperparameter search, but this only holds when spending a larger privacy budget (in our example, the budget has to be at least $\varepsilon=4$ otherwise Poisson is more advantageous). The negative binomial with $\eta=-0.5$ does very poorly. 

From a runtime perspective, the distribution of $K$ should have light tails. All of the distributions we have considered have subexponential tails. However, larger $\eta$ corresponds to better concentration in the negative binomial distribution with the Poisson distribution having the best concentration.

\paragraph{Experimental Evaluation.}
To confirm these findings, we apply our algorithm to a real hyperparameter search task. Specifically, we fine-tune the learning rate of a convolutional neural network trained on MNIST. We implement DP-SGD in JAX for an all-convolutional architecture with a stack of 32, 32, 64, 64, 64 feature maps generated by 3x3 kernels. We vary the learning rate between 0.025 and 1 on a logarithmic scale but fix all other hyperparameters: 60 epochs, minibatch size of 256, $\ell_2$ clipping norm of 1, and noise multiplier of $1.1$. In Figure~\ref{fig:acc-dist}, we plot the maximal accuracy achieved during the hyperparameter search for the different distributions considered previously as a function of the total privacy budget expended by the search. The experiment is repeated 500 times and the mean result reported. This experiment shows that the Poisson distribution achieves the best privacy-utility tradeoff for this relatively simple hyperparameter search. This agrees with the theoretical analysis we just presented above that shows that the Poisson distribution performs well in the intermediate range of utility, as this is a simple hyperparameter search.

\vspace{-0.1in}

\section{Conclusion}

\vspace{-0.1in}

Our positive results build on the work of \citet{LiuT19} and show that repeatedly running the base algorithm and only returning the best output can incur much lower privacy cost than na\"ive composition would suggest. This however requires that we randomize the number of repetitions, rather than repeating a fixed number of times. We analyze a variety of distributions for the number of repetitions, each of which gives a different privacy/utility tradeoff.

While our results focused on the privacy implications of tuning hyperparameters with, and without, differential privacy, our findings echo prior observations that tuning details of the model architecture without privacy to then repeat training with DP affords suboptimal utility-privacy tradeoffs~\citep{papernot2020tempered}; in this work, the authors demonstrated that the optimal choice of activation function in a neural network can be different when learning with DP, and that tuning it with DP immediately can improve the model's utility at no changes to the privacy guarantee. We envision that future work will be able to build on our algorithm for private tuning of hyperparameters to facilitate privacy-aware searches for model architectures and training algorithm configurations to effectively learn with them.

\paragraph{Limitations.}
We show that hyperparameter tuning is not free from privacy cost. Our theoretical and experimental results show that, in the setting of interest, the privacy parameter may double or even triple after accounting for hyperparameter tuning, which could be prohibitive. In this case, one compromise would be to state both privacy guarantees -- that of the base algorithm that does not account for hyperparameter tuning, and that of the overall system that does account for this.
The reader may wonder whether our positive results can be improved. In Appendix~\ref{ap:lower-bounds}, we explore 
give some intuition for why they cannot (easily) be improved. We also note that our results are only immediately applicable to the hyperparameter tuning algorithm from Section~\ref{ssec:our-algorithm}. Other algorithms, in particular those that adaptively choose hyperparameter candidates will require further analysis. 

Finally, among the distributions on the number of repetitions $K$ that we have analyzed, the distribution that provides the best privacy-utility tradeoffs will depend on the setting. While it is good to have choices, this does leave some work to be done by those using our results. Fortunately, the differences between the distributions seem to be relatively small, so this choice is unlikely to be critical. 

\break

\section*{Reproducibility \& Ethics Statements}

\paragraph{Reproducibility.} We give precise theorem statements for our main results and we have provided complete proofs in the Appendix, as well as all the necessary calculations and formulas for plotting our figures. We have also fully specified the setup required to reproduce our experimental results, including hyperparameters. Our algorithm is simple, fully specified and can be easily implemented.

\paragraph{Ethics.} Our work touches on privacy, which is an ethically sensitive topic. If differentially private algorithms -- such as ours -- are applied to real-world sensitive data, then potential harms to the people whose data is being used must be carefully considered. However, our work is not directly using real-world sensitive data. Our main results are theoretical and our experiments use either synthetic data or MNIST, which is a standard non-private dataset.

\section*{Acknowlegments}

The authors would like to thank the reviewers for their detailed feedback and interactive discussion during the review period. We also thank our colleagues Abhradeep Guha Thakurta, Andreas Terzis, Peter Kairouz, and Shuang Song for insightful discussions about differentially private hyperparameter tuning that led to the present project, as well as their comments on early drafts of this document.

\bibliography{refs}
\bibliographystyle{iclr2022_conference}

\appendix

\section{Further Background}
\subsection{Differential Privacy \& R\'enyi DP}

For completeness, we provide some basic background on differential privacy and, in particular, R\'enyi differential privacy. We start with the standard definition of differential privacy:

\begin{defn}[Differential Privacy]
A randomized algorithm $M : \mathcal{X}^n \to \mathcal{Y}$ is $(\varepsilon,\delta)$-differentially private if, for all neighbouring pairs of inputs $x,x'\in\mathcal{X}^n$ and all measurable $S \subset \mathcal{Y}$, \[\pr{}{M(x) \in S} \le e^\varepsilon \cdot \pr{}{M(x') \in S} + \delta.\]
When $\delta=0$, this is referred to as pure (or pointwise) differential privacy and we may abbreviate $(\varepsilon,0)$-DP to $\varepsilon$-DP. When $\delta>0$, this is referred to as approximate differential privacy.
\end{defn}
The definition of pure DP was introduced by \citet{dwork2006calibrating} and approximate DP was introduced by \citet{DworkKMMN06}.
Note that the notion of neighbouring datasets is context-dependent, but this is often glossed over. Our results are general and can be applied regardless of the specifics of what is a neighbouring dataset. (However, we do require symmetry -- i.e., if $(x,x')$ are a pair of neighbouring inputs, then so are $(x'x)$.)  Usually two datasets are said to be neighbouring if they differ only by the addition/removal or replacement of the data corresponding to a single individual. Some papers only consider addition or removal of a person's records, rather than replacement. But these are equivalent up to a factor of two.

In order to define R\'enyi DP, we first define the R\'enyi divergences:

\begin{defn}[R\'enyi Divergences]\label{defn:renyi}
Let $P$ and $Q$ be probability distributions on a common space $\Omega$. Assume that $P$ is absolutely continuous with respect to $Q$ -- i.e., for all measurable $S \subset \Omega$, if $Q(S)=0$, then $P(S)=0$.
Let $P(x)$ and $Q(x)$ denote the densities of $P$ and $Q$ respectively.\footnote{In general, we can only define the ratio $P(x)/Q(x)$ to be the Radon-Nikodym derivative of $P$ with respect to $Q$. To talk about $P(x)$ and $Q(x)$ separately we must assume some base measure with respect to which these are defined. In most cases the base measure is either the counting measure in the case of discrete distributions or the Lesbesgue measure in the case of continuous distributions.}
The KL divergence from $P$ to $Q$ is defined as \[\dr{1}{P}{Q} := \ex{X \gets P}{\log\left(\frac{P(X)}{Q(X)}\right)} = \int_\Omega P(x) \log\left(\frac{P(x)}{Q(x)}\right) \mathrm{d}x.\]
The max divergence from $P$ to $Q$ is defined as \[\dr{\infty}{P}{Q} := \sup \left\{ \log\left( \frac{P(S)}{Q(S)} \right) : P(S) > 0 \right\}.\]
For $\lambda \in (1,\infty)$, the R\'enyi divergence from $P$ to $Q$ of order $\lambda$ is defined as
\begin{align*}
    \dr{\lambda}{P}{Q} &:= \frac{1}{\lambda-1} \log\left( \ex{X \gets P}{\left(\frac{P(X)}{Q(X)}\right)^{\lambda-1}} \right) \\
    &= \frac{1}{\lambda-1} \log\left( \ex{X \gets Q}{\left(\frac{P(X)}{Q(X)}\right)^{\lambda}} \right) \\
    &= \frac{1}{\lambda-1} \log\left( \int_\Omega P(x)^\lambda Q(x)^{1-\lambda} \mathrm{d}x \right).
\end{align*}
\end{defn}

We state some basic properties of R\'enyi divergences; for further information see, e.g., \citet{VanErvenH14}.

\begin{lem}\label{lem:renyi}
Let $P$, $Q$, $P'$, and $Q'$ be probability distributions. Let $\lambda \in [1,\infty]$. The following hold.
\begin{itemize}
\item \textbf{Non-negativity:} $\dr{\lambda}{P}{Q} \ge 0$.
\item \textbf{Monotonicity \& Continuity:} $\dr{\lambda}{P}{Q}$ is a continuous and non-decreasing function of $\lambda$. 
\item \textbf{Data processing inequality (a.k.a.~Postprocessing):} Let $f(P)$ denote the distribution obtained by applying some (possibly randomized) function to a sample from $P$ and let $f(Q)$ denote the distribution obtained by applying the same function to a sample from $Q$. Then $\dr{\lambda}{f(P)}{f(Q)} \le \dr{\lambda}{P}{Q}$.
\item \textbf{Finite case suffices:} We have $\dr{\lambda}{P}{Q} = \sup_f \dr{\lambda}{f(P)}{f(Q)}$ even when $f$ is restricted to functions with a finite range.
\item \textbf{Chain rule (a.k.a.~Composition):} $\dr{\lambda}{P \times P'}{Q \times Q'} = \dr{\lambda}{P}{Q} + \dr{\lambda}{P'}{Q'}$, where $P \times P'$ and $Q \times Q'$ denote the product distributions of the individual distributions.
\item \textbf{Convexity:} The function $(P,Q) \mapsto e^{(\lambda-1)\dr{\lambda}{P}{Q}}$ is convex for all $\lambda \in (1,\infty)$. The function $(P,Q) \mapsto \dr{\lambda}{P}{Q}$ convex if and only if $\lambda=1$.
\end{itemize}
\end{lem}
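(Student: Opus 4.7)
The plan is to verify each of the six bulleted properties separately, arranging them so that earlier items supply tools for later ones. These are all standard facts about R\'enyi divergences, so the emphasis is on choosing the cleanest derivation rather than inventing a new argument. Throughout I would write $r(x) = P(x)/Q(x)$ and work with the quantity $\ex{X \gets Q}{r(X)^\lambda}$, which equals $e^{(\lambda-1)\dr{\lambda}{P}{Q}}$ for $\lambda \in (1,\infty)$ and reduces the algebra to manipulating expectations of $r^\lambda$.

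For \textbf{non-negativity}, Jensen applied to the convex map $t \mapsto t^\lambda$ on $(0,\infty)$ under $Q$ gives $\ex{X \gets Q}{r(X)^\lambda} \ge (\ex{X \gets Q}{r(X)})^\lambda = 1$, and $\frac{1}{\lambda-1}\log$ of both sides yields the claim; the $\lambda=1$ case uses concavity of $\log$ and the $\lambda=\infty$ case is immediate. For \textbf{monotonicity}, the map $\lambda \mapsto \frac{1}{\lambda-1}\log \ex{X \gets Q}{r(X)^\lambda}$ is a standard log-Laplace expression that is non-decreasing by H\"older, so this reduces to the power-mean inequality for the positive random variable $r(X)$ under $Q$; continuity on $(1,\infty)$ is immediate, and continuity at $\lambda = 1$ follows from a short L'H\^opital / dominated-convergence argument that recovers KL as the limit.

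The \textbf{data processing inequality} I would prove in stages. First, for a deterministic $f$ with finite range, partition $\Omega$ into fibers $f^{-1}(y)$ and apply Jensen (equivalently the log-sum inequality) fiber by fiber to the integrand $(p,q) \mapsto p^\lambda q^{1-\lambda}$, which is jointly convex on $(0,\infty)^2$ since it is the perspective of $q \mapsto q^{1-\lambda}$; this yields $\int_{f^{-1}(y)} P^\lambda Q^{1-\lambda} \ge (f(P))(y)^\lambda (f(Q))(y)^{1-\lambda}$, and summing over $y$ followed by $\frac{1}{\lambda-1}\log$ gives DPI. To handle general measurable $f$, approximate by finite-range $f_n$ coming from a refining sequence of partitions and invoke monotone convergence; this simultaneously establishes the \textbf{finite case suffices} clause, since the supremum is attained in the limit of such approximations. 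Randomized $f$ is reduced to deterministic by lifting to a product space: if $R$ is an independent randomness source, the chain rule (proved next) gives $\dr{\lambda}{P \times R}{Q \times R} = \dr{\lambda}{P}{Q}$, and the randomized map becomes a deterministic map on the enlarged space, to which the previous case applies.

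The \textbf{chain rule} is a direct Fubini computation: the integrand factorizes, so $\ex{(X,X') \gets Q \times Q'}{(r(X) r'(X'))^\lambda} = \ex{X \gets Q}{r(X)^\lambda} \cdot \ex{X' \gets Q'}{r'(X')^\lambda}$, and applying $\frac{1}{\lambda-1}\log$ yields the claimed additivity (with the $\lambda=1$ and $\lambda=\infty$ cases handled as limits). For the \textbf{convexity} clause, joint convexity of the integrand $(p,q) \mapsto p^\lambda q^{1-\lambda}$ for $\lambda > 1$ (either via the perspective representation or by checking positive semidefiniteness of its $2 \times 2$ Hessian) propagates through the integral against a fixed base measure, giving joint convexity of $(P,Q) \mapsto e^{(\lambda-1)\dr{\lambda}{P}{Q}}$. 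Joint convexity of $\dr{1}{\cdot}{\cdot}$ is the classical log-sum inequality. To show that $\dr{\lambda}{\cdot}{\cdot}$ itself is \emph{not} jointly convex for $\lambda > 1$, I would exhibit a two-point Bernoulli counterexample in which the exponential version is tight but $\log$ applied to a strict inequality breaks the linear combination. I expect the main obstacle to be the randomized DPI together with the ``finite case suffices'' clause, because both require careful measure-theoretic bookkeeping (a product-space lift for randomization, and a refining-partition argument for the supremum) and it is easy to slip on absolute-continuity hypotheses; everything else collapses to a one-line Jensen-style calculation once the correct convex function has been identified.
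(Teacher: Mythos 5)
The paper does not actually prove this lemma: it is stated as background and deferred entirely to the citation of van Erven and Harremo\"es (2014), so there is no in-paper argument to compare against. Your sketch is essentially a self-contained reconstruction of the standard proofs from that reference, and the overall architecture is sound: non-negativity and monotonicity via Jensen/H\"older (your expression $\frac{1}{\lambda-1}\log\ex{X\gets Q}{r(X)^\lambda}=\log\|r\|_{L^{\lambda-1}(P)}$ makes the power-mean argument work), the data processing inequality via joint convexity of $(p,q)\mapsto p^\lambda q^{1-\lambda}$ (a perspective function) applied fiberwise, the finite-range supremum via refining partitions, the reduction of randomized postprocessing to deterministic maps through the product-space lift and the chain rule, and the chain rule itself by Fubini factorization. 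Two soft spots are worth flagging. First, continuity at $\lambda=1$ is not unconditionally true: if $\dr{1}{P}{Q}<\infty$ but $\dr{\lambda}{P}{Q}=\infty$ for all $\lambda>1$, the divergence jumps at $1$, so your L'H\^opital/dominated-convergence step needs the hypothesis that $\dr{\lambda}{P}{Q}$ is finite for some $\lambda>1$ (van Erven--Harremo\"es prove continuity only on the set where the divergence is finite); this caveat is inherited from the lemma statement itself, which glosses over it, but your writeup should not claim a domination that may not exist. Second, the ``only if'' half of the convexity clause is asserted rather than proved: you should write down the explicit two-point counterexample (e.g.\ fix $Q$ uniform on two points, take $P_1=Q$ and $P_2$ a slight perturbation, and check that $\dr{\lambda}{\tfrac12 P_1+\tfrac12 P_2}{Q}>\tfrac12\dr{\lambda}{P_1}{Q}+\tfrac12\dr{\lambda}{P_2}{Q}$ for each $\lambda>1$, including $\lambda=\infty$), since convexity of $e^{(\lambda-1)\dr{\lambda}{\cdot}{\cdot}}$ together with concavity of $\log$ does not by itself produce the violation. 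Neither issue is a structural gap; both are finishing details of an otherwise correct plan.
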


Now we can state the definition of R\'enyi DP (RDP), which is due to \citet{mironov2017renyi}.
\begin{defn}[R\'enyi Differential Privacy]
A randomized algorithm $M : \mathcal{X}^n \to \mathcal{Y}$ is $(\lambda,\varepsilon)$-R\'enyi differentially private if, for all neighbouring pairs of inputs $x,x' \in \mathcal{X}^n$, $\dr{\lambda}{M(x)}{M(x')} \le \varepsilon$.
\end{defn}
A closely related definition is that of zero-concentrated differential privacy \citep{BunS16} (which is based on an earlier definition \citep{DworkR16} of concentrated differential privacy that does not refer to R\'enyi divergences).
\begin{defn}[Concentrated Differential Privacy]
A randomized algorithm $M : \mathcal{X}^n \to \mathcal{Y}$ is $\rho$-zCDP if, for all neighbouring pairs of inputs $x,x' \in \mathcal{X}^n$ and all $\lambda \in (1,\infty)$, $\dr{\lambda}{M(x)}{M(x')} \le \rho \cdot \lambda$.
\end{defn}
Usually, we consider a family of $(\lambda,\varepsilon(\lambda))$-RDP guarantees, where $\varepsilon(\lambda)$ is a function, rather than a single function. Concentrated DP is one example of such a family, where the function is linear, and this captures the behaviour of many natural algorithms. In particular, adding Gaussian noise to a bounded sensitivity function: If $f : \mathcal{X}^n \to \mathbb{R}^d$ has sensitivity $\Delta$ -- i.e., $\|f(x)-f(x')\|_2 \le \Delta$ for all neighbouring $x, x'$ -- and $M : \mathcal{X}^n \to \mathbb{R}^d$ is the algorithm that returns a sample from $\mathcal{N}(f(x),\sigma^2I)$, then $M$ satisfies $\frac{\Delta^2}{2\sigma^2}$-zCDP.

We can convert from pure DP to concentrated or R\'enyi DP as follows \citep{BunS16}.

\begin{lem}
If $M$ satisfies $(\varepsilon,0)$-differential privacy, then $M$ satisfies $\frac12 \varepsilon^2$-zCDP -- i.e., $(\lambda, \frac12 \varepsilon^2 \lambda)$-RDP for all $\lambda \in (1,\infty)$.
\end{lem}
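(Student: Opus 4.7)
My plan is to reduce the problem to bounding a simple hyperbolic expression and then apply an elementary calculus estimate. Let $p$ and $q$ denote the densities of $P = M(x)$ and $Q = M(x')$ for neighbouring $x, x'$, and let $r(y) = p(y)/q(y)$. Pure $(\varepsilon,0)$-DP is exactly the pointwise bound $r(y) \in [e^{-\varepsilon}, e^\varepsilon]$, and normalisation gives $\ex{Y \gets Q}{r(Y)} = 1$. Since $(\lambda-1)\dr{\lambda}{P}{Q} = \log \ex{Y \gets Q}{r(Y)^\lambda}$, I only need to bound this expectation using the two constraints above.

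For the reduction to the binary worst case, I would exploit convexity of $r \mapsto r^\lambda$ on $[e^{-\varepsilon}, e^\varepsilon]$. For $\lambda > 1$, every $r$ in the interval satisfies
\[r^\lambda \le \frac{e^\varepsilon - r}{e^\varepsilon - e^{-\varepsilon}} e^{-\lambda \varepsilon} + \frac{r - e^{-\varepsilon}}{e^\varepsilon - e^{-\varepsilon}} e^{\lambda \varepsilon}.\]
Taking expectation under $Q$ and substituting $\ex{Y \gets Q}{r(Y)} = 1$ collapses the right-hand side to a scalar quantity depending only on $\varepsilon$ and $\lambda$. A short rearrangement using the identity $\sinh A - \sinh B = 2 \cosh(\tfrac{A+B}{2})\sinh(\tfrac{A-B}{2})$ simplifies it to $\ex{Y \gets Q}{r(Y)^\lambda} \le \cosh((2\lambda-1)\varepsilon/2)/\cosh(\varepsilon/2)$, exposing the worst-case binary mechanism.

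The final step is to verify $\log \cosh((2\lambda-1)\varepsilon/2) - \log \cosh(\varepsilon/2) \le \lambda(\lambda-1)\varepsilon^2/2$. Setting $h(t) = \log \cosh(t\varepsilon/2)$, I have $h'(t) = (\varepsilon/2)\tanh(t\varepsilon/2)$, and the elementary inequality $\tanh u \le u$ for $u \ge 0$ gives $h'(t) \le t\varepsilon^2/4$ on $t \ge 0$. Integrating from $1$ to $2\lambda - 1$ and using $(2\lambda-1)^2 - 1 = 4\lambda(\lambda-1)$ yields exactly $\lambda(\lambda-1)\varepsilon^2/2$, so dividing through by $\lambda - 1$ gives $\dr{\lambda}{P}{Q} \le \lambda \varepsilon^2 / 2$, which is $\frac{1}{2}\varepsilon^2$-zCDP.

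The step I expect to be the main obstacle is the final algebraic inequality. A naive attempt would apply Hoeffding's lemma directly to the privacy loss $L = \log r \in [-\varepsilon, \varepsilon]$; however, this discards the binding normalisation constraint $\ex{Y \gets Q}{e^L} = 1$ and only produces $\dr{\lambda}{P}{Q} \le \lambda^2 \varepsilon^2 / (2(\lambda-1))$, which is strictly weaker than the target $\lambda \varepsilon^2 / 2$ for all $\lambda>1$. It is the hyperbolic form that makes both constraints bind at once, which is why passing through the explicit binary worst case and the $\tanh$ inequality appears necessary.
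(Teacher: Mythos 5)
Your argument is correct, and every step checks out: the convexity of $t \mapsto t^\lambda$ gives the chord bound, taking the $Q$-expectation and substituting $\ex{Y\gets Q}{r(Y)}=1$ produces exactly $\bigl(\sinh(\lambda\varepsilon)-\sinh((\lambda-1)\varepsilon)\bigr)/\sinh\varepsilon = \cosh\bigl((2\lambda-1)\varepsilon/2\bigr)/\cosh(\varepsilon/2)$ by the product formula, and integrating $h'(t)=(\varepsilon/2)\tanh(t\varepsilon/2)\le t\varepsilon^2/4$ from $1$ to $2\lambda-1$ yields $\lambda(\lambda-1)\varepsilon^2/2$ on the nose. Note that the paper itself does not prove this lemma; it simply cites Bun and Steinke (2016), and the proof there proceeds by essentially the same route you found: reduce to the extremal two-point (randomized-response) mechanism using convexity together with the normalisation constraint, identify the resulting hyperbolic expression, and close with the elementary bound $\tanh u \le u$. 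Your diagnosis of the naive Hoeffding detour is also accurate: discarding $\ex{Y\gets Q}{e^L}=1$ gives $\dr{\lambda}{P}{Q}\le\lambda^2\varepsilon^2/(2(\lambda-1))$, which exceeds $\lambda\varepsilon^2/2$ by $\varepsilon^2/(2(\lambda-1))>0$, so the normalisation really is the binding constraint that must be retained.
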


Conversely, we can convert from concentrated or R\'enyi DP to approximate DP as follows \citep{CanonneKS20}.

\begin{lem}
If $M$ satisfies $(\lambda,\hat\varepsilon)$-RDP, then $M$ satisfies $(\varepsilon,\delta)$-DP where $\varepsilon \ge 0$ is arbitrary and \[\delta = \frac{\exp((\lambda-1)(\hat\varepsilon-\varepsilon))}{\lambda} \cdot \left(1 - \frac{1}{\lambda}\right)^{\lambda-1}.\] 
\end{lem}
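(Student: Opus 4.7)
The plan is to establish the conversion in three moves: reduce $(\varepsilon,\delta)$-DP to a single ``hockey-stick'' expectation, express that expectation via the privacy loss ratio so that the RDP moment hypothesis becomes directly usable, and then bound it by a sharp pointwise inequality whose constant delivers exactly the claimed $\delta$. To set up, I fix neighboring $x, x' \in \mathcal{X}^n$ and let $P = M(x)$, $Q = M(x')$. The smallest $\delta$ for which $\pr{Y \gets P}{Y \in S} \le e^\varepsilon \pr{Y \gets Q}{Y \in S} + \delta$ holds for every measurable $S$ equals $\sup_S\left(\pr{Y \gets P}{Y \in S} - e^\varepsilon \pr{Y \gets Q}{Y \in S}\right)$, and this supremum is attained on $S^\star = \{y : P(y) > e^\varepsilon Q(y)\}$. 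Writing $L(y) := P(y)/Q(y)$ (absolute continuity $P \ll Q$ is forced by $\dr{\lambda}{P}{Q} < \infty$), this supremum becomes $\ex{Y \gets Q}{(L(Y) - e^\varepsilon)_+}$, while the $(\lambda,\hat\varepsilon)$-RDP hypothesis rephrases as the moment bound $\ex{Y \gets Q}{L(Y)^\lambda} = \ex{Y \gets P}{L(Y)^{\lambda-1}} \le e^{(\lambda-1)\hat\varepsilon}$.

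The crux is the pointwise inequality
\[
(L - e^\varepsilon)_+ \;\le\; \frac{1}{\lambda}\left(1 - \frac{1}{\lambda}\right)^{\lambda-1} e^{-(\lambda-1)\varepsilon} \cdot L^\lambda \qquad \text{for all } L \ge 0.
\]
For $L \le e^\varepsilon$ the left side vanishes, so the content is $\max_{L > e^\varepsilon}(L - e^\varepsilon)/L^\lambda$, a single-variable calculus exercise: differentiation yields the unique interior maximizer $L^\star = \frac{\lambda}{\lambda-1} e^\varepsilon$, and substituting back produces precisely the constant above. Taking expectation under $Q$ and applying the RDP moment bound then gives
\[
\ex{Y \gets Q}{(L(Y) - e^\varepsilon)_+} \;\le\; \frac{e^{(\lambda-1)(\hat\varepsilon - \varepsilon)}}{\lambda} \left(1 - \frac{1}{\lambda}\right)^{\lambda-1},
\]
which matches the target $\delta$. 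Since the RDP hypothesis is preserved under swapping $x \leftrightarrow x'$ (the neighbor relation is symmetric), the same bound controls $\pr{Y \gets Q}{Y \in S} - e^\varepsilon \pr{Y \gets P}{Y \in S}$, completing both directions of DP.

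I do not anticipate any serious obstacles. The only non-mechanical step is identifying the correct pointwise envelope $(L - e^\varepsilon)_+ \le C_{\lambda,\varepsilon}\cdot L^\lambda$; once it is written down, verification is a routine optimization and everything surrounding it is bookkeeping around a standard hockey-stick reduction. Minor care points worth noting: if $P$ is not absolutely continuous with respect to $Q$ then the RDP hypothesis is already infinite and the lemma is vacuous; and although the formula degenerates sensibly as $\lambda \to 1$ (the factor $(1-1/\lambda)^{\lambda-1}$ tends to $1$), the lemma makes no nontrivial use of the $\lambda=1$ case because a KL-only bound cannot yield any meaningful approximate DP guarantee.
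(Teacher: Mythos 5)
Your proof is correct: the hockey-stick reduction $\delta_{\mathrm{opt}} = \sup_S\left(P(S) - e^\varepsilon Q(S)\right) = \ex{Y \gets Q}{(L(Y)-e^\varepsilon)_+}$, the moment form $\ex{Y \gets Q}{L(Y)^\lambda} \le e^{(\lambda-1)\hat\varepsilon}$ of the RDP hypothesis, and the pointwise envelope $(L-e^\varepsilon)_+ \le \frac{1}{\lambda}\left(1-\frac{1}{\lambda}\right)^{\lambda-1} e^{-(\lambda-1)\varepsilon} L^\lambda$ (maximized at $L^\star = \frac{\lambda}{\lambda-1}e^\varepsilon$) combine to give exactly the stated $\delta$, and your symmetry remark correctly covers the reverse direction of the DP guarantee. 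The paper gives no proof of this lemma --- it imports it from \citet{CanonneKS20} --- and your argument is essentially the standard derivation from that reference (their constant $\frac{1}{\lambda-1}\left(1-\frac{1}{\lambda}\right)^{\lambda}$ equals $\frac{1}{\lambda}\left(1-\frac{1}{\lambda}\right)^{\lambda-1}$), so there is nothing to reconcile.
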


\subsection{Probability Generating Functions}\label{app:pgf}
Let $K$ be a random variable supported on $\mathbb{N} \cup \{0\}$. The probability generating function (PGF) of $K$ is defined by \[f(x) = \ex{}{x^K} = \sum_{k = 0}^\infty \pr{}{K=k} \cdot x^k.\] The PGF $f(x)$ is always defined for $x \in [0,1]$, but may or may not be defined for $x>1$. The PGF characterizes $K$. In particular, we can recover the probability mass function from the derivatives of the PGF (hence the name): \[\pr{}{K=k} = \frac{f^{(k)}(0)}{k!},\] where $f^{(k)}(x)$ denotes the $k^\text{th}$ derivative of $f(x)$ and, in particular, $\pr{}{K=0}=f(0)$. We remark that is is often easiest to specify the PGF and derive the probability distribution from it, rather than vice versa; indeed, we arrived at the truncated negative binomial distribution by starting with the PGF that we want and then differentiating. 

We can also easily recover the moments of $K$ from the PGF: We have $f^{(k)}(x) = \sum_{\ell = k}^\infty \pr{}{K=\ell} \cdot x^{\ell-k} \cdot \ell \cdot (\ell-1) \cdot (\ell-2) \cdots (\ell-k+1)$. In particular, $f(1)=\ex{}{1} = 1$ and $f'(1) = \ex{}{K}$ and $f''(1) = \ex{}{K(K-1)}$. Note that the PGF is a rescaling of the moment generating function (MGF) $g(t) := \ex{}{e^{tK}} = f(e^t)$. 

The PGF can be related to the MGF in another way: Suppose $\Lambda$ is a random variable on $[0,\infty)$. Now suppose we draw $K \gets \mathsf{Poisson}(\Lambda)$. Then the PGF of $K$ is the MGF of $\Lambda$ -- i.e., $\ex{}{x^K} = \ex{\Lambda}{\ex{K \gets \mathsf{Poisson}(\Lambda)}{x^K}} = \ex{\Lambda}{e^{\Lambda \cdot (x-1)}} = g(x-1)$, where $g$ is the MGF of $\Lambda$. In particular, if $\Lambda$ is drawn from a Gamma distribution, then this would yield $K$ from a negative binomial distribution which has a PGF of the form $f_\text{NB}(x) = \left(\frac{1-(1-\gamma)x}{\gamma}\right)^{-\eta}$.\footnote{The PGF of $\mathsf{Binomial}(n,p)$ is $f(x)=(1 - p + p x)^n$. This expression is similar to the PGF of the negative binomial, except the negative binomial has a negative exponent.} Note that our results work with a truncated negative binomial distribution, which is a negative binomial conditioned on $K \ne 0$. This corresponds to an affine rescaling of the PGF, namely $f_\text{TNB}(x) = \frac{f_\text{NB}(x) - f_\text{NB}(0)}{f_\text{NB}(1) - f_\text{NB}(0)}$.

We can also obtain a negative binomial distribution as a compound of a Poisson distribution and a logarithmic distribution. That is, if we draw $T$ from a Poisson distribution and draw $K_1, K_2, \cdots, K_T$ independently from a logarithmic distribution, then $K=\sum_{t=1}^T K_t$ follows a negative binomial distribution. The PGF of the logarithmic distribution is given by $f_{K_t}(x) = \ex{}{x^{K_t}} = \frac{\log(1-(1-\gamma)x)}{\log(\gamma)}$ and the PGF of Poisson is given by $f_T(x) = \ex{}{x^T} = e^{\mu \cdot (x-1)}$. Hence \[f_K(x) = \ex{}{x^K} = \ex{T}{\prod_{t=1}^T \ex{K_t}{x^{K_t}}} = \ex{T}{f_{K_t}(x)^T} = f_T(f_{K_t}(x)) = \exp\left(\mu \cdot \left(\frac{\log(1-(1-\gamma)x)}{\log(\gamma)}-1\right)\right),\] which is equivalent to $f_\text{NB}(x) = \left(\frac{1-(1-\gamma)x}{\gamma}\right)^{-\eta}$ with $\eta = \frac{\mu}{\log(1/\gamma)}$.
    
Finally we remark that we can also use the PGF to show convergence in probability. In particular, \[\lim_{\eta \to \infty, \gamma = \frac{\eta}{\eta+\mu}} f_\text{NB}(x) = \lim_{\eta \to \infty, \gamma = \frac{\eta}{\eta+\mu}} \left(\frac{1-(1-\gamma)x}{\gamma}\right)^{-\eta} = \lim_{\eta \to \infty, \gamma = \frac{\eta}{\eta+\mu}} \left(1-\frac\mu\eta (x-1)\right)^{-\eta} = e^{\mu (x-1)}.\] That is, if we take the limit of the negative binomial distribution as $\eta \to \infty$ but the mean $\mu = \eta \frac{1-\gamma}{\gamma}$ remains fixed, then we obtain a Poisson distribution. If we take $\eta \to 0$, then $f_\text{NB}(x) \to 1$, which is to say that the negative binomial distribution converges to a point mass at $0$ as $\eta \to 0$. However, the \emph{truncated} negative binomial distribution converges to a logarithmic distribution as $\eta \to 0$.

\subsubsection{Probability Generating Functions and Utility}\label{app:pgf-util}

Recall that in Section~\ref{ssec:utility}, we analyzed the expected utility and runtime of different distributions on the number of repetitions $K$. Given our discussion of probability generating functions for these distributions, we can offer an alternative perspective on the expected utility and runtime.

Suppose each invocation of $Q$ has a probability $1/m$ of producing a ``good'' output. This would be the case if we are considering $m$ hyperparameter settings and only one is good---where here we consider the outcome to be binary (good or bad) for simplicity and what is a good or bad is determined only by the total order on the range $\mathcal{Y}$ and some threshold on the quality score (e.g., accuracy). Then $A$ has a probability \[\beta := 1-\pr{}{A(x) \in \text{Bad}} = 1 - \ex{K}{\pr{}{Q(x) \in \text{Bad}}^K} = 1 - \ex{}{(1-1/m)^K} = 1 - f(1-1/m)\] of outputting a good output, where $f(x) = \ex{}{x^K}$ is the probability generating function of the distribution. If we make the first-order approximation $f(1-1/m) \approx f(1) - f'(1) \cdot 1/m = 1 - \ex{}{K}/m$, then we have $\beta \approx \ex{}{K}/m$. In other words, for small values of $1/m$, the probability of success is amplified by a multiplicative factor of $\ex{}{K}$.

However, the above first-order approximation only holds for large $m$ and, hence, small overall success probabilities $\beta$. In practice, we want $\beta \approx 1$. The different distributions (Poisson and truncated negative binomial with different values of $\eta$) have very different behaviours even with the same expectation. %
In the regime where we want the overall success probability to be high (i.e., $\beta \approx 1$), smaller $\eta$ performs worse, because the distribution is more heavy-tailed. The best performing distribution is the Poisson distribution, which is almost as concentrated as na\"ive repetition. Figure \ref{fig:eps_beta} show the success probability $\beta$ as a function of the final $(\varepsilon,10^{-6})$-DP guarantee. This demonstrates that there is a tradeoff between distributions.

More generally, we can relate the PGF of $K$ to the expected utility of our repeated algorithm. Let $X \in \mathbb{R}$ be random variable corresponding to the utility of one run of the base algorithm $Q$. E.g. $X$ could represent the accuracy, loss, AUC/AUROC, or simply the quantile of output. Now let $Y \in \mathbb{R}$ be the utility of our repeated algorithm $A$ which runs the base algorithm $Q$ repeatedly $K$ times for a random $K$. That is, $Y = \max \{ X_1, \cdots, X_K\}$ where $X_1, X_2, \cdots$ are independent copies of $X$. Let $\mathrm{cdf}_X(x) = \pr{}{X \le x}$ and \[\mathrm{cdf}_Y(x) = \pr{}{Y \le x} = \ex{K}{\pr{X}{X \le x}^K} = f(\mathrm{cdf}_X(x)),\] where $f(x) = \ex{}{x^K}$ is the PGF of the number of repetitions $K$.
Assuming for the moment that $X$ is a continuous random variable, we can derive the probability density function of $Y$ from the cumulative distribution function: \[\mathrm{pdf}_Y(x) = \frac{\mathrm{d}}{\mathrm{d}x} \mathrm{cdf}_X(x) = \frac{\mathrm{d}}{\mathrm{d}x} f(\mathrm{cdf}_X(x)) = f'(\mathrm{cdf}_X(x)) \cdot \mathrm{pdf}_X(x).\]
This allows us to compute the expected utility: \[\ex{}{Y} = \int_{-\infty}^\infty x \cdot \mathrm{pdf}_Y(x) \mathrm{d}x = \int_{-\infty}^\infty x \cdot  f'(\mathrm{cdf}_X(x)) \cdot \mathrm{pdf}_X(x) \mathrm{d}x = \ex{}{X \cdot f'(\mathrm{cdf}_X(X))}.\]
In particular, we can compute the expected quantile \eqref{eq:quantile} in which case $X$ is uniform on $[0,1]$ and, hence, $\mathrm{cdf}_X(x)=x$ and $\mathrm{pdf}_X(x)=1$ for $x \in [0,1]$. Integration by parts gives \[\ex{}{Y} = \int_0^1 x \cdot f'(x) \mathrm{d}x = \int_0^1 \left(\frac{\mathrm{d}}{\mathrm{d}x} x f(x)\right) - f(x) \mathrm{d}x = 1 f(1) - 0 f(0) -\int_0^1 f(x) \mathrm{d}x = 1-\int_0^1 f(x) \mathrm{d}x.\]
Note that \[\int_0^1 f(x) \mathrm{d}x = \int_0^1 \ex{K}{x^K} \mathrm{d}x = \ex{K}{\int_0^1 x^K \mathrm{d}x} = \ex{K}{\frac{1}{K+1}}.\]

Finally, we also want to ensure that the runtime of our hyperparameter tuning algorithm is well-behaved. In particular, we wish to avoid heavy-tailed runtimes. We can obtain tail bounds on the number of repetitions $K$ from the PGF or MGF too: For all $t > 0$, we have \[\pr{}{K \ge k} = \pr{}{e^{t\cdot(K-k)}\ge 1} \le \ex{}{e^{t \cdot (K-k)} } = f(e^t) \cdot e^{-t\cdot k}.\] Thus, if the PGF $f(x) = \ex{}{x^K}$ is finite for some $x=e^t>1$, then we obtain a subexponential tail bound on $K$.

\section{Proofs from Section \ref{sec:results}}
\subsection{Proof of Generic Bound}
\label{ap:proof-generic-bound}

\begin{proof}[Proof of Lemma~\ref{lem:pgf}]
    We assume that $\mathcal{Y}$ is a finite set and that $\pr{}{K=0}=0$; this is, essentially, without loss of generality.\footnote{Our proof can be extended to the general case. Alternatively, if $\mathcal{Y}$ is infinite, we can approximate the relevant quantities arbitrarily well with a finite partition; see Lemma \ref{lem:renyi} or \citet[Theorem 10]{VanErvenH14}.}
    Denote $Q(\le y) := \sum_{y' \in \mathcal{Y}} \mathbb{I}[y' \le y] \cdot Q(y')$ and similarly for $Q(< y)$ and analogously for $Q'$ in place of $Q$.
    For each $y \in \mathcal{Y}$, we have
    \begin{align*}
        A(y) &= \sum_{k=1}^\infty \pr{}{K=k} \cdot (Q(\le y)^k - Q(< y)^k)\\
        &= f(Q(\le y)) - f(Q(< y)) \\
        &= \int_{Q(< y)}^{Q(\le y )} f'(x) \mathrm{d} x \\
        &= Q(y) \cdot \ex{X \gets [Q(< y), Q( \le y)] \atop \text{uniform}}{f'(X)}\\
    \end{align*}
    and, likewise, $A'(y) = Q'(y) \cdot \ex{X' \gets [Q'(< y), Q'( \le y)] \atop \text{uniform}}{f'(X')}$.
    Thus
    \begin{align*}
        e^{(\lambda-1)\dr{\lambda}{A}{A'}} &= \sum_{y \in \mathcal{Y}} A(y)^\lambda \cdot A'(y)^{1-\lambda}\\
        &= \sum_{y \in \mathcal{Y}} Q(y)^\lambda \cdot Q'(y)^{1-\lambda} \cdot \ex{X \gets [Q(< y), Q( \le y)] }{f'(X)}^\lambda \cdot \ex{X' \gets [Q'(< y), Q'( \le y)] }{f'(X')}^{1-\lambda}\\
        &\le \sum_{y \in \mathcal{Y}} Q(y)^\lambda \cdot Q'(y)^{1-\lambda} \cdot \ex{X \gets [Q(< y), Q( \le y)] \atop X' \gets [Q'(< y), Q'( \le y)]}{f'(X)^\lambda \cdot f'(X')^{1-\lambda}}\\
        &\le e^{(\lambda-1)\dr{\lambda}{Q}{Q'}} \cdot \max_{y \in \mathcal{Y}} \ex{X \gets [Q(< y), Q( \le y)] \atop X' \gets [Q'(< y), Q'( \le y)]}{f'(X)^\lambda \cdot f'(X')^{1-\lambda}}.
    \end{align*}
    The second inequality follows from H\"older's inequality.
    The first inequality follows from the fact that, for any $\lambda \in \mathbb{R}$, the function $h : (0,\infty)^2 \to (0,\infty)$ given by  $h(u,v) = u^\lambda \cdot v^{1-\lambda}$ is convex and, hence, $\ex{}{U}^\lambda \ex{}{V}^{1-\lambda} = h(\ex{}{(U,V)}) \le \ex{}{h(U,V)} = \ex{}{U^\lambda \cdot V^{1-\lambda}}$ for any pair of positive random variables $(U,V)$. 
    Note that we require $X$ to be uniform on  $[Q(< y), Q( \le y)]$ and  $X'$ to be uniform on $[Q'(< y), Q'( \le y)]$, but their joint distribution can be arbitrary. We will couple them so that $\frac{X-Q(< y)}{Q(y)} = \frac{X'-Q'(<y)}{Q'(y)}$.
    In particular, this implies that, for each $y \in \mathcal{Y}$, there exists some $t \in [0,1]$ such that\[\ex{X \gets [Q(< y), Q( \le y)] \atop X' \gets [Q'(< y), Q'( \le y)]}{f'(X)^\lambda \cdot f'(X')^{1-\lambda}} \le f'(Q(<y)+ t \cdot Q(y))^\lambda \cdot f'(Q'(<y)+t\cdot Q'(y))^{1-\lambda}.\]
    Hence
    \begin{align*}
        \dr{\lambda}{A}{A'} &\le \dr{\lambda}{Q}{Q'} + \frac{1}{\lambda-1} \log \left( \max_{y \in \mathcal{Y} \atop t \in [0,1]} f'(Q(<y)+ t \cdot Q(y))^\lambda \cdot f'(Q'(<y)+t\cdot Q'(y))^{1-\lambda} \right).
    \end{align*}
    To prove the result, we simply fix $y_* \in \mathcal{Y}$ and $t_* \in [0,1]$ achieving the maximum above and define \[g(y) := \left\{ \begin{array}{cl} 1 & \text{ if } y < y_* \\ t_* & \text{ if } y=y_* \\ 0 & \text{ if } y>y_*\end{array} \right..\]
\end{proof}

\subsection{Proofs of Distribution-specific Bounds}
\label{ap:proofs-distribution-bounds}

\paragraph{Truncated Negative Binomial Distribution}

\begin{proof}[Proof of Theorem \ref{thm:family}]
    The probability generating function of the truncated negative binomial distribution is 
    \[f(x) = \ex{K \gets \etnb{\eta}{\gamma}}{x^K} =  \left\{\begin{array}{cl} \frac{(1-(1-\gamma)x)^{-\eta}-1}{\gamma^{-\eta}-1} & \text{ if } \eta \ne 0\\\frac{\log(1-(1-\gamma)x)}{\log (\gamma)}& \text{ if } \eta=0\end{array}\right..\]
    Thus \begin{align*}
        f'(x) &= (1-(1-\gamma)x)^{-\eta-1} \cdot \left\{\begin{array}{cl} \frac{\eta\cdot(1-\gamma)}{\gamma^{-\eta}-1} & \text{ if } \eta \ne 0 \\ \frac{1-\gamma}{\log(1/\gamma)} & \text{ if } \eta = 0  \end{array} \right.\\
        &= (1-(1-\gamma)x)^{-\eta-1} \cdot \gamma^{\eta + 1} \cdot \ex{}{K}.
    \end{align*}
    
    Now we delve into the privacy analysis: Let $Q=Q(x)$ and $Q'=Q(x')$ denote the output distributions of $Q$ on two neighbouring inputs. Similarly, let $A=A(x)$ and $A'=A(x')$ be the corresponding pair of output distributions of the repeated algorithm. By Lemma \ref{lem:pgf}, for appropriate values $q,q'\in[0,1]$ and for all $\lambda>1$ and all $\hat\lambda > 1$,\footnote{Our proof here assumes $\hat\lambda > 1$, but the result holds for $\hat\lambda=1$ too. This can be shown either by analyzing this case separately or simply by taking the limit $\hat\lambda \to 1$ and using the continuity properties of R\'enyi divergence.} we have
    \begin{align*}
        &\dr{\lambda}{A}{A'} \\
        &\le \dr{\lambda}{Q}{Q'} + \frac{1}{\lambda-1} \log \left( f'(q)^\lambda \cdot f'(q')^{1-\lambda} \right)\\
        &= \dr{\lambda}{Q}{Q'} + \frac{1}{\lambda-1} \log \left(\gamma^{\eta + 1} \cdot \ex{}{K} \cdot (1-(1-\gamma)q)^{-\lambda(\eta+1)} \cdot (1-(1-\gamma)q')^{-(1-\lambda)(\eta+1)} \right)\\
        &= \dr{\lambda}{Q}{Q'} + \frac{1}{\lambda-1} \log \left(\gamma^{\eta + 1} \cdot \ex{}{K} \cdot \left( (\gamma+(1-\gamma)(1-q))^{1-\hat\lambda} \cdot (\gamma+(1-\gamma)(1-q'))^{\hat\lambda} \right)^\nu \cdot (\gamma+(1-\gamma)(1-q))^{u} \right)\tag{$\hat\lambda\nu=(\lambda-1)(1+\eta)$ and $(1-\hat\lambda)\nu+u=-\lambda(\eta+1)$}\\
        &\le \dr{\lambda}{Q}{Q'} + \frac{1}{\lambda-1} \log \left(\gamma^{\eta + 1} \cdot \ex{}{K} \cdot \left(\gamma + (1-\gamma)\cdot  e^{(\hat\lambda-1)\dr{\hat\lambda}{Q'}{Q}}\right)^\nu \cdot (\gamma+(1-\gamma)(1-q))^{u} \right)\tag{$1-q$ and $1-q'$ are postprocessings of $Q$ and $Q'$ respectively and $e^{(\hat\lambda-1)\dr{\hat\lambda}{\cdot}{\cdot}}$ is convex and $\nu \ge 0$}\\
        &\le \dr{\lambda}{Q}{Q'} + \frac{1}{\lambda-1} \log \left(\gamma^{\eta + 1} \cdot \ex{}{K} \cdot \left(\gamma + (1-\gamma)\cdot  e^{(\hat\lambda-1)\dr{\hat\lambda}{Q'}{Q}}\right)^\nu \cdot \gamma^{u} \right)\tag{$\gamma \le \gamma + (1-\gamma)(1-q)$ and $u \le 0$}\\
        &= \dr{\lambda}{Q}{Q'} +\frac{\nu}{\lambda-1} \log \left(\gamma + (1-\gamma)\cdot  e^{(\hat\lambda-1)\dr{\hat\lambda}{Q'}{Q}}\right)+ \frac{1}{\lambda-1} \log \left(\gamma^{\eta + 1} \cdot \ex{}{K} \cdot \gamma^{u} \right)\\
        &= \dr{\lambda}{Q}{Q'} +\frac{\nu}{\lambda-1}\left( (\hat\lambda-1)\dr{\hat\lambda}{Q'}{Q} + \log \left(1 - \gamma \cdot \left( 1 - e^{-(\hat\lambda-1)\dr{\hat\lambda}{Q'}{Q}}\right)\right)\right) \\&~~~~~ + \frac{1}{\lambda-1} \log \left(\gamma^{u+ \eta + 1} \cdot \ex{}{K} \right)\\
        &= \dr{\lambda}{Q}{Q'} + (1+\eta)\left(1-\frac{1}{\hat\lambda}\right) \dr{\hat\lambda}{Q'}{Q} +\frac{1+\eta}{\hat\lambda} \log \left(1 - \gamma \cdot \left( 1 - e^{-(\hat\lambda-1)\dr{\hat\lambda}{Q'}{Q}}\right)\right) \\&~~~~~+ \frac{ \log \left(\ex{}{K}\right) }{\lambda-1} + \frac{1+\eta}{\hat\lambda} \log(1/\gamma)
        \tag{$\nu = \frac{(\lambda-1)(1+\eta)}{\hat\lambda}$ and $u = -(1+\eta)(\frac{\lambda-1}{\hat\lambda}+1)$}\\% and $\ex{}{K} = \frac{\eta\cdot(1-\gamma)}{(1-\gamma^\eta)\cdot\gamma}$}\\
        &= \dr{\lambda}{Q}{Q'} + (1+\eta)\left(1-\frac{1}{\hat\lambda}\right) \dr{\hat\lambda}{Q'}{Q} +\frac{1+\eta}{\hat\lambda} \log \left(\frac{1}{\gamma} - 1 + e^{-(\hat\lambda-1)\dr{\hat\lambda}{Q'}{Q}}\right) + \frac{ \log \left(\ex{}{K}\right) }{\lambda-1}\\
        &\le \dr{\lambda}{Q}{Q'} + (1+\eta)\left(1-\frac{1}{\hat\lambda}\right) \dr{\hat\lambda}{Q'}{Q} +\frac{1+\eta}{\hat\lambda} \log \left(\frac{1}{\gamma}\right) + \frac{ \log \left(\ex{}{K}\right) }{\lambda-1}.
    \end{align*}
    \end{proof}
    
    \begin{proof}[Proof of Corollary \ref{cor:cdp_log}]
        We assume that $Q : \mathcal{X}^n \to \mathcal{Y}$ is a randomized algorithm satisfying $\rho$-zCDP -- i.e., $(\lambda,\rho\cdot\lambda)$-R\'enyi DP for all $\lambda>1$.
        Substituting this guarantee into Theorem \ref{thm:family} (i.e., setting $\varepsilon = \rho\cdot\lambda$ and $\hat\varepsilon=\rho\cdot\hat\lambda$) gives that the repeated algorithm $A$ satisfies $(\lambda,\varepsilon')$-RDP for \[\varepsilon' \le \rho\cdot\lambda + (1+\eta)\cdot \left( 1 - \frac{1}{\hat\lambda} \right) \rho \cdot \hat\lambda + \frac{(1+\eta)\cdot \log(1/\gamma)}{\hat\lambda} + \frac{\log\ex{}{K}}{\lambda-1}. \]
        This holds for all $\lambda\in (1,\infty)$ and all $\hat\lambda \in [1,\infty)$.
        
        We set $\hat\lambda=\sqrt{\log(1/\gamma)/\rho}$ to minimize this expression. Note that we assume $\rho \le \log(1/\gamma)$ and hence this is a valid setting of $\hat\lambda \ge 1$. This reduces the expression to \[\varepsilon' \le \rho\cdot\lambda - (1+\eta)\cdot\rho  + 2(1+\eta)\cdot\sqrt{\rho\cdot \log(1/\gamma)} + \frac{\log\ex{}{K}}{\lambda-1}. \]
        This bound is minimized when $\lambda-1 = \sqrt{\log(\ex{}{K})/\rho}$. If $\lambda-1 < \sqrt{\log(\ex{}{K})/\rho}$, then we can apply the monotonicity property of Renyi DP (Remark \ref{rem:monotone} and Lemma \ref{lem:renyi}) and substitute in the bound with this optimal $\lambda$. That is, we obtain the bound
        \[\varepsilon' \le \left\{\begin{array}{cl}\rho\cdot\lambda - (1+\eta)\cdot\rho  + 2(1+\eta)\cdot\sqrt{\rho\cdot \log(1/\gamma)} + \frac{\log\ex{}{K}}{\lambda-1} & \text{ if } \lambda > 1 + \sqrt{\log(\ex{}{K})/\rho} \\ 2\sqrt{\rho\cdot\log\ex{}{K}} + 2(1+\eta)\sqrt{\rho\cdot\log(1/\gamma)} -\eta\rho & \text{ if } \lambda \le 1 + \sqrt{\log(\ex{}{K})/\rho}\end{array}\right..\]
    \end{proof}
    
\paragraph{Proof of the Poisson Distribution Bound}

\begin{proof}[Proof of Theorem \ref{thm:poisson}]
    The probability generating function for the Poisson distribution is $f(x) = \ex{}{x^K} = e^{\mu(x-1)}$. Thus $f'(x) = \mu \cdot e^{\mu(x-1)}$. As in the previous proofs, let $x$ and $x'$ be neighbouring inputs. Denote $Q=Q(x)$, $Q'=Q(x')$, $A=A(x)$, and $A'=A(x)$. By Lemma \ref{lem:pgf},
    \begin{align*}
        &\dr{\lambda}{A}{A'} \\
        &\le \dr{\lambda}{Q}{Q'} + \frac{1}{\lambda-1} \log \left( f'(q)^\lambda \cdot f'(q')^{1-\lambda} \right)\\
        &= \dr{\lambda}{Q}{Q'} + \frac{1}{\lambda-1} \log \left( \mu \cdot e^{\mu\lambda(q-1) + \mu(1-\lambda)(q'-1)}\right)\\
        &= \dr{\lambda}{Q}{Q'} + \frac{\mu(\lambda q -(\lambda-1)q'-1)+\log \mu}{\lambda-1} \\
        &= \dr{\lambda}{Q}{Q'} + \frac{\mu((\lambda-1)(1-q')-\lambda(1-q))+\log \mu}{\lambda-1} \\
        &\le \dr{\lambda}{Q}{Q'} + \frac{\mu((\lambda-1)(e^{\hat\varepsilon} (1-q)+\hat\delta)-\lambda(1-q))+\log \mu}{\lambda-1} \tag{by our $(\hat\varepsilon,\hat\delta)$-DP assumption on $Q$ and since $1-q$ and $1-q'$ are postprocessings} \\
        &= \dr{\lambda}{Q}{Q'} + \mu \cdot (1-q) \cdot \left( e^{\hat\varepsilon} - \frac{\lambda}{\lambda-1} \right) + \mu \cdot \hat\delta + \frac{\log \mu}{\lambda-1} \\
        &\le \dr{\lambda}{Q}{Q'} + \mu \cdot \hat\delta + \frac{\log \mu}{\lambda-1} ,
    \end{align*}
    where the final inequality follows from our assumption that $e^{\hat\varepsilon} \le 1 + \frac{1}{\lambda-1}$.
    
    \end{proof}

\section{Conditional Sampling Approach}\label{sec:conditional}
In the main text, we analyzed the approach where we run the underlying algorithm $Q$ a random number of times according to a carefully-chosen distribution and output the best result from these independent runs. 
An alternative approach -- also studied by \citet{LiuT19} -- is to start with a pre-defined threshold for a ``good enough'' output and to run $Q$ repeatedly until it produces such a result and then output that.
This approach has some advantages, namely being simpler and avoiding the heavy-tailed behaviour of the logarithmic distribution while attaining the same kind of privacy guarantee. However, the disadvantage of this approach is that we must specify the acceptance threshold a priori. If we set the threshold too high, then we may have to keep running $Q$ for a long time.\footnote{One solution to avoid an overly long runtime in the case that $Q(S)$ is too small is to modify $Q$ to output $\bot$ with some small probability $p$ and then have $A$ halt if this occurs. This would represent a failure of the algorithm to produce a good output, but would avoid a privacy failure.} If we set the threshold too low, then we may end up with a suboptimal output.

We analyze this approach under R\'enyi DP, thereby extending the results of \citet{LiuT19}. Our algorithm $A$ now works as follows. We start with a base algorithm $Q$ and a set of good outputs $S$. Now $A(x)$ computes $y=Q(x)$ and, if $y \in S$, then it returns $y$ and halts. Otherwise, $A$ repeats the procedure. This is equivalent to sampling from a conditional distribution $Q(x) | Q(x) \in S$. The number of times $Q$ is run will follow a geometric distribution with mean $1/Q(S)$. 

\begin{prop}\label{prop:rdp-cond}
Let $\lambda \in (1,\infty)$. Let $Q$ and $Q'$ be probability distributions on $\Omega$ with $\dr{\lambda}{Q}{Q'} < \infty$. Let $S \subset \Omega$ have nonzero measure under $Q'$ and also under $Q$. Let $Q_S$ and $Q'_S$ denote the conditional distributions of $Q$ and $Q'$ respectively conditioned on being in the set $S$. That is, $Q_S(E) = Q(E \cap S) / Q(S)$ and $Q'_S(E) = Q'(E \cap S) / Q'(S)$ for all measurable $E \subset \Omega$.
Then, for all $p,q,r \in [1,\infty]$ satisfying $1/p+1/q+1/r=1$, we have \[\dr{\lambda}{Q_S}{Q'_S} \le \frac{\lambda-1/p-1/r}{\lambda-1} \dr{r\cdot(\lambda-1/p)}{Q}{Q'} + \frac{\lambda+1/q-2}{\lambda-1} \dr{\lambda+1/q-1}{Q'}{Q} + \frac{1/r+1}{\lambda-1} \log \left( \frac{1}{Q(S)} \right).\]
\end{prop}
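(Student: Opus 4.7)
My plan is to reduce the bound on $\dr{\lambda}{Q_S}{Q'_S}$ to bounding the restricted integral $\int_S Q^\lambda Q'^{1-\lambda}\,d\mu$, and then apply H\"older's inequality (possibly via two nested two-term applications) with a decomposition tailored to the three exponents $p,q,r$. First, using $dQ_S/d\mu = Q/Q(S)$ on $S$ and analogously for $Q'_S$, expand
\[
(\lambda-1)\dr{\lambda}{Q_S}{Q'_S} \;=\; (\lambda-1)\log Q'(S) \;-\; \lambda\log Q(S) \;+\; \log \int_S Q^\lambda Q'^{1-\lambda}\,d\mu.
\]
Since $Q'(S)\le 1$ and $\lambda>1$, the first term is non-positive and can be dropped, leaving a bound on $(\lambda-1)\dr{\lambda}{Q_S}{Q'_S}$ in terms of $-\lambda\log Q(S)$ plus $\log\int_S Q^\lambda Q'^{1-\lambda}\,d\mu$.

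The heart of the argument is to bound the restricted integral $\int_S Q^\lambda Q'^{1-\lambda}\,d\mu$ by a product of three quantities matching the three target terms: a power of $\int Q^\alpha Q'^{1-\alpha}\,d\mu$ (with $\alpha=r(\lambda-1/p)$), a power of $\int Q'^\beta Q^{1-\beta}\,d\mu$ (with $\beta=\lambda+1/q-1$), and a power of $Q(S)$. The exponents in the target suggest the factorization
\[
Q^\lambda Q'^{1-\lambda} \;=\; \bigl(Q^\alpha Q'^{1-\alpha}\bigr)^{1/r} \cdot Q^{1/p} Q'^{1/q},
\]
which balances correctly because $\alpha/r=\lambda-1/p$, $(1-\alpha)/r=1/r-\lambda+1/p$, and the leftover exponents $(1/p,1/q)$ satisfy $1/p+1/q=1-1/r$ by the constraint $1/p+1/q+1/r=1$. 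I would first apply two-term H\"older with exponents $r$ and $r'=r/(r-1)$ to peel off the $(\int Q^\alpha Q'^{1-\alpha})^{1/r}$ factor, leaving a restricted integral $\int_S Q^{r'/p}Q'^{r'/q}\,d\mu$. I would then apply a second two-term H\"older to this residual, choosing the exponents so that one factor yields (after using $\int_S\le\int$ for a Rényi integrand over $\Omega$) a contribution from $\int Q'^\beta Q^{1-\beta}\,d\mu$ of the correct Rényi order $\beta$, and the other factor yields $Q(S)$ to an appropriate power (by the usual trick $\int_S Q\,d\mu=Q(S)$).

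The main obstacle is choosing the exponents in the second H\"older step so that all three coefficients in the final bound match the target simultaneously — in particular, the coefficient on $\log\int Q'^\beta Q^{1-\beta}\,d\mu$ must come out to exactly $1$ (so that $D_\beta(Q'\|Q)$ appears with coefficient $\beta-1=\lambda+1/q-2$), and the coefficient on $\log Q(S)$, after combining with the prefactor $-\lambda\log Q(S)$ from the expansion of $D_\lambda(Q_S\|Q'_S)$, must match $-(1+1/r)$. Verifying this requires careful bookkeeping of the exponents and repeated use of $1/p+1/q+1/r=1$. Once the H\"older bound is established, dividing through by $\lambda-1$ yields the statement of the proposition, with the constraint $\lambda>1$ entering via $D_\lambda$ being defined only for $\lambda>1$ in the sense used here.
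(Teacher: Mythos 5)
There is a genuine gap, and it sits precisely where you flag the ``main obstacle.'' After the (correct) expansion
\[
(\lambda-1)\dr{\lambda}{Q_S}{Q'_S} = (\lambda-1)\log Q'(S) - \lambda\log Q(S) + \log \int_S Q^\lambda Q'^{1-\lambda}\,\mathrm{d}\mu,
\]
you discard the factor $Q'(S)^{\lambda-1}$ because it is $\le 1$. That step is valid as an inequality, but it throws away exactly the information the paper needs to manufacture the term $\dr{\lambda+1/q-1}{Q'}{Q}$ with the stated coefficient. In the paper's proof, a single three-factor H\"older with exponents $(p,q,r)$ applied to $\int_S Q^{1/p}\cdot Q'^{1/q}\cdot\bigl(Q^{\lambda-1/p}Q'^{1-\lambda-1/q}\bigr)$ produces a $Q'(S)^{1/q}$ factor, and this combines with the retained $Q'(S)^{\lambda-1}$ to give $Q'(S)^{\lambda_0}Q(S)^{1-\lambda_0}$ with $\lambda_0=\lambda+1/q-1$. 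This quantity is then bounded by $e^{(\lambda_0-1)\dr{\lambda_0}{Q'}{Q}}$ via a \emph{data-processing/non-negativity} argument (it is one of the two terms in the R\'enyi sum for the coarsening $\{S,S^c\}$), not via H\"older. That is what yields the coefficient $1$ on $\log e^{(\lambda_0-1)\dr{\lambda_0}{Q'}{Q}}$, i.e.\ the coefficient $\tfrac{\lambda_0-1}{\lambda-1}=\tfrac{\lambda+1/q-2}{\lambda-1}$ in the proposition.

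Your plan instead tries to extract $\int Q'^\beta Q^{1-\beta}$ as a H\"older factor. Tracing your nested H\"olders: after the first (exponents $r,r'$) and the second (exponents $u,v$ with $1/u+1/v=1$ and $vs'=\beta$ where $s'=r'/q$), the $\int_S Q'^\beta Q^{1-\beta}$ factor carries the outer exponent $\tfrac{1}{v r'}$. Matching the target requires $\tfrac{\beta-1}{vr'}=\beta-1$, i.e.\ $vr'=1$, which forces $v=1/r'=(r-1)/r<1$, contradicting $v\ge 1$ for H\"older. So the bookkeeping does not close: H\"older alone cannot push the coefficient of the second divergence up to $1$, and the $Q'(S)^{\lambda-1}$ factor you dropped is the missing ingredient. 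To fix your proposal, keep that factor, use the paper's single three-way H\"older decomposition $Q^\lambda Q'^{1-\lambda}=Q^{1/p}\cdot Q'^{1/q}\cdot(Q^{\lambda-1/p}Q'^{1-\lambda-1/q})$, and then apply the data-processing bound $Q'(S)^{\lambda_0}Q(S)^{1-\lambda_0}\le e^{(\lambda_0-1)\dr{\lambda_0}{Q'}{Q}}$ rather than trying to obtain this term from a second integral.
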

\begin{proof}
    For $x \in \Omega$, denote the various distributional densities at $x$ (relative to some base measure) by $Q_S(x)$, $Q'_S(x)$, $Q(x)$, and $Q'(x)$. We have $Q_S(x) = Q(x) \mathbb{I}[x \in S] / Q(S)$ and $Q'_S(x) = Q'(x) \mathbb{I}[x \in S] / Q'(S)$. Now we have
    \begin{align*}
        e^{(\lambda-1)\dr{\lambda}{Q_S}{Q'_S}} &= \int_\Omega Q_S(x)^\lambda Q'_S(x)^{1-\lambda} \mathrm{d}x\\
        &= Q(S)^{-\lambda} Q'(S)^{\lambda-1} \int_\Omega \mathbb{I}[x \in S] Q(x)^\lambda Q'(x)^{1-\lambda} \mathrm{d}x\\
        &\le Q(S)^{-\lambda} Q'(S)^{\lambda-1} \left( \int_S Q(x) \mathrm{d}x \right)^{1/p} \left( \int_S Q'(x) \mathrm{d}x \right)^{1/q} \left( \int_S \left(Q(x)^{\lambda-1/p} Q'(x)^{1-\lambda-1/q} \right)^r \mathrm{d}x \right)^{1/r} \tag{H\"older's inequality}\\
        &= Q(S)^{1/p-\lambda} Q'(S)^{1/q+\lambda-1} \left( \int_S Q(x)^{r\lambda-r/p} Q'(x)^{r-r\lambda-r/q} \mathrm{d}x \right)^{1/r} \\
        &= Q'(S)^{\lambda_0} Q(S)^{1-\lambda_0} Q(S)^{-1/r-1} \left( \int_S Q(x)^{\lambda_1} Q'(x)^{1-\lambda_1} \mathrm{d}x \right)^{1/r} \tag{$\lambda_0 := \lambda + 1/q-1$, $\lambda_1 := r\lambda-r/p$}\\
        &\le e^{(\lambda_0-1) \dr{\lambda_0}{Q'}{Q}} \cdot Q(S)^{-1/r-1} \cdot \left( e^{(\lambda_1-1)\dr{\lambda_1}{Q}{Q'}} \right)^{1/r} \tag{Postprocessing \& non-negativity}.
    \end{align*}
\end{proof}

The number of parameters in Proposition \ref{prop:rdp-cond} is excessive. Thus we provide some corollaries that simplify the expression somewhat.

\begin{cor}\label{cor:rdp-cond}
Let $\lambda, Q,Q',S,Q_S,Q'_S$ be as in Proposition \ref{prop:rdp-cond}. The following inequalities all hold.
\begin{align*}
    \dr{\infty}{Q_S}{Q'_S} &\le \dr{\infty}{Q}{Q'} + \dr{\infty}{Q'}{Q}.\\
    \dr{\lambda}{Q_S}{Q'_S} &\le \dr{\lambda}{Q}{Q'} + \frac{\lambda-2}{\lambda-1} \dr{\lambda-1}{Q'}{Q} + \frac{2}{\lambda-1} \log \left( \frac{1}{Q(S)} \right).\\ %
    \dr{\lambda}{Q_S}{Q'_S} &\le \dr{\infty}{Q}{Q'} + \frac{\lambda-2}{\lambda-1} \dr{\lambda-1}{Q'}{Q} + \frac{1}{\lambda-1} \log \left( \frac{1}{Q(S)} \right).\\ %
    \dr{\lambda}{Q_S}{Q'_S} &\le \frac{\lambda}{\lambda-1} \dr{\infty}{Q}{Q'} + \dr{\lambda}{Q'}{Q} + \frac{1}{\lambda-1} \log \left( \frac{1}{Q(S)} \right).\\ %
   \forall r \ge 1 ~~~~ \dr{\lambda}{Q_S}{Q'_S} &\le \dr{r(\lambda-1)+1}{Q}{Q'} + \frac{\lambda-2}{\lambda-1} \dr{\lambda-1}{Q'}{Q} + \frac{1/r+1}{\lambda-1} \log \left( \frac{1}{Q(S)} \right).\\ %
\end{align*}
\end{cor}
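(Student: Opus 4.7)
The plan is to obtain each of the five inequalities by specializing the master bound of Proposition~\ref{prop:rdp-cond} to a particular choice of $(p,q,r)$ (and, in one case, a $\lambda$ limit). Since Proposition~\ref{prop:rdp-cond} is already proved, no new analytic content is required—only bookkeeping to match R\'enyi orders and coefficients subject to $1/p+1/q+1/r=1$.

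For the first bound, on $\dr{\infty}{Q_S}{Q'_S}$, I would take $p=r=\infty$ and $q=1$ and let $\lambda\to\infty$: the log term carries a factor $1/(\lambda-1)$ and vanishes, while the two R\'enyi coefficients $\tfrac{\lambda-1/p-1/r}{\lambda-1}$ and $\tfrac{\lambda+1/q-2}{\lambda-1}$ both tend to $1$ and the R\'enyi orders tend to $\infty$, giving exactly $\dr{\infty}{Q}{Q'}+\dr{\infty}{Q'}{Q}$. A direct one-line argument is also available, using $Q_S(E)/Q'_S(E)=(Q(E\cap S)/Q'(E\cap S))\cdot(Q'(S)/Q(S))$ and the definition of max-divergence twice. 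For the second bound, I would choose $p=q=\infty$ and $r=1$: the first R\'enyi order is $r(\lambda-1/p)=\lambda$ with coefficient $1$, the second is $\lambda+1/q-1=\lambda-1$ with coefficient $(\lambda-2)/(\lambda-1)$, and the log coefficient is $(1/r+1)/(\lambda-1)=2/(\lambda-1)$. For the third bound I would take $p=1$, $q=\infty$, $r=\infty$: the first term becomes $\dr{\infty}{Q}{Q'}$ with coefficient $1$, the second term is unchanged from the previous case, and the log coefficient drops to $1/(\lambda-1)$. For the fourth bound I would take $p=r=\infty$ and $q=1$ (without any $\lambda$ limit), which yields $\tfrac{\lambda}{\lambda-1}\dr{\infty}{Q}{Q'}+\dr{\lambda}{Q'}{Q}+\tfrac{1}{\lambda-1}\log(1/Q(S))$ directly.

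For the parameterized fifth inequality, I would set the proposition's $q=\infty$, which forces $1/p+1/r=1$, and then identify the proposition's $r$ with the corollary's $r$, so that $1/p=1-1/r$. The first R\'enyi order then becomes $r(\lambda-1+1/r)=r(\lambda-1)+1$ with coefficient $(\lambda-1)/(\lambda-1)=1$, while the second order and log coefficient fall out as $\lambda-1$ with coefficient $(\lambda-2)/(\lambda-1)$ and $(1/r+1)/(\lambda-1)$, respectively, matching the statement exactly.

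There is no substantive obstacle; the only subtlety is the $\lambda\to\infty$ limit in the first case, which is justified either by continuity of R\'enyi divergence (Lemma~\ref{lem:renyi}) or by the direct max-divergence computation just sketched. All remaining work amounts to coefficient arithmetic and verifying the constraint $1/p+1/q+1/r=1$ for each chosen triple.
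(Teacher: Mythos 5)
Your derivations are correct, and they match what the paper implicitly intends: Corollary~\ref{cor:rdp-cond} is stated immediately after Proposition~\ref{prop:rdp-cond} with no separate proof, precisely because each line is obtained by plugging in specific $(p,q,r)$ with $1/p+1/q+1/r=1$ (and, for the first line, letting $\lambda\to\infty$). Your choices --- $(p,q,r)=(\infty,1,\infty)$ for the first and fourth lines, $(\infty,\infty,1)$ for the second, $(1,\infty,\infty)$ for the third, and $q=\infty$ with $1/p=1-1/r$ for the fifth --- reproduce every coefficient and R\'enyi order exactly, and the appeal to monotonicity/continuity (or the direct max-divergence computation) legitimately handles the $\lambda\to\infty$ passage in the first line.
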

The first inequality in Corollary \ref{cor:rdp-cond} is essentially the result given by \citet{LiuT19}: If $Q$ satisfies $\varepsilon$-DP, then $A$ satisfies $2\varepsilon$-DP.

Figure \ref{fig:cond_ublb} plots the guarantee of the second inequality in Corollary \ref{cor:rdp-cond} when $\dr{\lambda}{Q}{Q'}=0.1 \lambda$ and $\dr{\lambda-1}{Q'}{Q} = 0.1 (\lambda-1)$.

\section{Negative Results on Improvements to our Analysis}
\label{ap:lower-bounds}

It is natural to wonder whether our results could be further improved. In this section, we give some examples that demonstrates that quantitatively there is little room for improvement.

\subsection{Why a fixed number of repetitions does not result in good privacy.}
\label{ap:fixed-repetition}

We first consider more closely the strawman approach discussed in Section~\ref{ssec:strawman}: the  base algorithm $Q$ is repeated  a fixed number of times $k$ and we return the best output. This corresponds to picking $k$ from a point mass distribution. To understand why it performs so poorly from a privacy standpoint, we first apply our main result from Section~\ref{ssec:main-technical-lemma} to the resulting point mass distribution. 

\paragraph{Point Mass:}
    Suppose $K$ is just a point mass -- i.e., $\pr{}{K=k}=1$. So $A$ runs the algorithm $Q$ a deterministic number of times.
    Then the probability generating function (PGF) is $f(x) = x^k$ and its derivative is $f'(x) = k \cdot x^{k-1}$. Let $Q$ denote the base algorithm. We abuse notation and let $Q=Q(x)$ and $Q'=Q(x')$, where $x$ and $x'$ are neighbouring inputs. Similarly, let $A=A(x)$ and $A'=A(x')$ be the final output distributions obtained by running $Q$ and $Q'$ repeatedly $k$ times and returning the best result. We follow the same pattern of analysis that we applied to the other distributions in Theorems \ref{thm:family} and \ref{thm:poisson}: Lemma \ref{lem:pgf} gives the bound
    \begin{align*}
        \dr{\lambda}{A}{A'} &\le \dr{\lambda}{Q}{Q'} + \frac{1}{\lambda-1} \log \left( k \cdot \left( q^{\lambda} \cdot q'^{1-\lambda}\right)^{k-1} \right)\\
        &\le \dr{\lambda}{Q}{Q'} + \frac{1}{\lambda-1} \log \left( k \cdot \left( e^{(\lambda-1)\dr{\lambda}{\mathsf{Bern}(q)}{\mathsf{Bern}(q')}}\right)^{k-1} \right)\\
        &\le \dr{\lambda}{Q}{Q'} + \frac{1}{\lambda-1} \log \left( k \cdot \left( e^{(\lambda-1)\dr{\lambda}{Q}{Q'}}\right)^{k-1} \right)\\
        &= k \cdot \dr{\lambda}{Q}{Q'} + \frac{\log k}{\lambda-1},
    \end{align*}
    where the final inequality follows from the fact that $\mathsf{Bern}(q)$ and $\mathsf{Bern}(q')$ are postprocessings of $Q$ and $Q'$ respectively.
    
    This bound is terrible. In fact, it is slightly \emph{worse} than a na\"ive composition analysis which would give $\dr{\lambda}{A}{A'} \le \dr{\lambda}{Q^{\otimes k}}{Q'^{\otimes k}} = k \cdot \dr{\lambda}{Q}{Q'}$. It shows that a deterministic number of repetitions does not yield good privacy parameters, at least with this analysis.

It is surprising that running the base algorithm $Q$ a fixed number of times $k$ and returning the best output performs so poorly from a privacy standpoint. We will now give a simple example that demonstrates that this is inherent and not just a limitation of our analysis. \citet[Appendix B]{LiuT19} give a similar example.

\begin{prop}\label{prop:pointmassbad}
For all $\varepsilon > 0$, there exists a $\varepsilon$-DP algorithm $Q : \mathcal{X}^n \to \mathcal{Y}$ such that the following holds. Define an algorithm $A  : \mathcal{X}^n \to \mathcal{Y}$ that runs $Q$ a fixed number of times $k$ and returns the best output from these runs. Then $A$ is not $\hat\varepsilon$-DP for any $\hat\varepsilon<k\varepsilon$.
Furthermore, for all $\lambda>1$, $A$ is not $(\lambda,\hat\varepsilon(\lambda))$-R\'enyi DP for any $\hat\varepsilon(\lambda) < \varepsilon'(\lambda)$, where\[\varepsilon'(\lambda) = k \varepsilon - \frac{k\cdot\log(1+e^{-\varepsilon})}{\lambda-1} .\]
\end{prop}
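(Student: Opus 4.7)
The plan is to use binary randomized response as the adversarial base algorithm. I would take $\mathcal{X} = \{0,1\}$, $n=1$, with the two singletons as the neighbouring pair, and $\mathcal{Y} = \{0,1\}$ totally ordered by $0<1$. Define $Q(b)$ to sample from $\mathsf{Bern}(p_b)$ with $p_0 := 1/(1+e^\varepsilon)$ and $p_1 := e^\varepsilon/(1+e^\varepsilon)$. All four pointwise probability ratios equal $e^{\pm\varepsilon}$, so $Q$ is $(\varepsilon,0)$-DP. Since $\mathcal{Y}=\{0,1\}$, the maximum of $k$ i.i.d.\ Bernoulli samples equals $0$ iff every sample is $0$, so $A(b)\sim\mathsf{Bern}(1-(1-p_b)^k)$.

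The pure DP lower bound is then immediate: $\pr{}{A(0)=0}=\bigl(e^\varepsilon/(1+e^\varepsilon)\bigr)^k$ and $\pr{}{A(1)=0}=\bigl(1/(1+e^\varepsilon)\bigr)^k$, whose ratio is exactly $e^{k\varepsilon}$. This forces $\dr{\infty}{A(0)}{A(1)}\ge k\varepsilon$ and rules out $\hat\varepsilon$-DP for any $\hat\varepsilon<k\varepsilon$.

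For the R\'enyi bound I would expand
\[e^{(\lambda-1)\dr{\lambda}{A(0)}{A(1)}} \;=\; \pr{}{A(0)=0}^\lambda \pr{}{A(1)=0}^{1-\lambda} + \pr{}{A(0)=1}^\lambda \pr{}{A(1)=1}^{1-\lambda}\]
and discard the non-negative second summand; the first summand collapses to $e^{k\lambda\varepsilon}/(1+e^\varepsilon)^k$. Taking $\log$ and dividing by $\lambda-1$, then using the identity $\log(1+e^\varepsilon) = \varepsilon + \log(1+e^{-\varepsilon})$, rearranges this into exactly $\varepsilon'(\lambda) = k\varepsilon - k\log(1+e^{-\varepsilon})/(\lambda-1)$, contradicting any claim of $(\lambda,\hat\varepsilon(\lambda))$-RDP with $\hat\varepsilon(\lambda)<\varepsilon'(\lambda)$.

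I do not anticipate any genuine obstacle: this is the standard randomized response construction, and the only calculation to verify is the algebraic identity that converts the lower bound on the single dominated term into the stated $\varepsilon'(\lambda)$. The one conceptual point worth highlighting is that the ``best'' (i.e.\ $\max$) operation naturally amplifies the probability of the outcome that is less likely under $A(1)$, which is why the output $0$ is the witness that delivers the tight bound and matches the upper bound of the preceding analysis up to the $\log(1+e^{-\varepsilon})$ term.
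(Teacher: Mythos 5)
Your proof is correct and is essentially the same as the paper's: the paper also uses randomized response with a two-element ordered output space and bounds the R\'enyi divergence from below by the single summand coming from the outcome whose probability shrinks like $(e^\varepsilon/(1+e^\varepsilon))^k$ under one neighbour and $(1/(1+e^\varepsilon))^k$ under the other. Your version differs only in the labeling of the output alphabet (max over $\{0,1\}$ versus min over $\{1,2\}$); the algebra and the final expression coincide exactly.
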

\begin{proof}
The base algorithm is simply randomized response.
We will let $\mathcal{Y}=\{1,2\}$ with the total order preferring $1$, then $2$.
We will define a pair of distributions $Q$ and $Q'$ on $\{1,2\}$ and then the base algorithm is simply set so that these are its output distributions on a pair of neighbouring inputs.

We let 
\begin{align*}
    Q &= \left(\frac{1}{1+e^\varepsilon},\frac{e^\varepsilon}{1+e^\varepsilon}\right),\\
    Q' &= \left(\frac{e^\varepsilon}{1+e^\varepsilon}, \frac{1}{1+e^\varepsilon}\right).
\end{align*}
Then $\dr{\infty}{Q}{Q'} = \dr{\infty}{Q'}{Q} = \varepsilon$. Thus we can ensure that the base algorithm yielding this pair of distributions is $\varepsilon$-DP.

Now we look at the corresponding pair of distributions from repeating the base algorithm $k$ times. We have
\begin{align*}
    A &= \left(1-\left(\frac{e^\varepsilon}{1+e^\varepsilon}\right)^k,\left(\frac{e^\varepsilon}{1+e^\varepsilon}\right)^k\right),\\
    A' &= \left(1-\left(\frac{1}{1+e^\varepsilon}\right)^k, \left(\frac{1}{1+e^\varepsilon}\right)^k\right).
\end{align*}
The first part of the result follows: \[\dr{\infty}{A}{A'} \ge \log\left(\frac{\left(\frac{e^\varepsilon}{1+e^\varepsilon}\right)^k}{\left(\frac{1}{1+e^\varepsilon}\right)^k}\right) = k \varepsilon.\]

For all $\lambda > 1$,
\begin{align*}
    e^{(\lambda-1)\dr{\lambda}{A}{A'}} &\ge \left( \left(\frac{e^\varepsilon}{1+e^\varepsilon}\right)^k \right)^\lambda \cdot \left( \left(\frac{1}{1+e^\varepsilon}\right)^k \right)^{1-\lambda}\\
    &= e^{\varepsilon k \lambda} \cdot (1+e^\varepsilon)^{-k}
\end{align*}
Hence \[\dr{\lambda}{A}{A'} \ge \frac{\varepsilon k \lambda - k \cdot \log (1+e^{\varepsilon})}{\lambda-1} = k \varepsilon - \frac{k\cdot \log(1+e^{-\varepsilon})}{\lambda-1}.\]
\begin{comment}
We also have
\begin{align*}
    \dr{\lambda}{A}{A'} &\ge \dr{1}{A}{A'} \\
    &=\left(1-\left(\frac{e^\varepsilon}{1+e^\varepsilon}\right)^k\right) \log\left(\frac{1-\left(\frac{e^\varepsilon}{1+e^\varepsilon}\right)^k}{1-\left(\frac{1}{1+e^\varepsilon}\right)^k}\right) + \left(\frac{e^\varepsilon}{1+e^\varepsilon}\right)^k \log\left(\frac{\left(\frac{e^\varepsilon}{1+e^\varepsilon}\right)^k}{\left(\frac{1}{1+e^\varepsilon}\right)^k}\right)\\
    &= \left(1-\frac{1}{(1+e^\varepsilon)^k}\right) \log\left(\frac{(e^\varepsilon+1)^k - e^{k\varepsilon}}{(e^\varepsilon+1)^k-1}\right) + \frac{\varepsilon k}{(1+e^{-\varepsilon})^k}\\
    &= \left(1-\frac{1}{(1+e^\varepsilon)^k}\right) \log\left(1-\frac{e^{k\varepsilon}-1}{(e^\varepsilon+1)^k-1}\right) + \frac{\varepsilon k}{(1+e^{-\varepsilon})^k}\\
\end{align*}
\end{comment}
\end{proof}

The second part of Proposition \ref{prop:pointmassbad} shows that this problem is not specific to pure DP. For $\lambda \ge 1+1/\varepsilon$, we have $\varepsilon'(\lambda) = \Omega(k \varepsilon)$, so we are paying linearly in $k$.

However, Proposition \ref{prop:pointmassbad} is somewhat limited to pure $\varepsilon$-DP or at least $(\lambda,\varepsilon(\lambda))$-RDP with not-too-small values of $\lambda$. This is because the ``bad’’ event is relatively low-probability. Specifically, the high privacy loss event has probability $(1+e^{-\varepsilon})^{-k}$. This is small, unless $\varepsilon \ge \Omega( \log k)$.

We can change the example to make the bad event happen with constant probability. However, the base algorithm will also not be pure $\varepsilon$-DP any more. 
Specifically, we can replace the two distributions in Proposition 17 with the following:
\begin{align*}
    Q&=(1-\exp(-1/k),\exp(-1/k)),\\
    Q’&=(1-\exp(-\varepsilon_0-1/k),\exp(-\varepsilon_0-1/k)).
\end{align*}
If we repeat this base algorithm a fixed number of times $k$, then the corresponding pair of distributions is given by
\begin{align*}
    A&=(1-\exp(-1),\exp(-1)),\\
    A’&=(1-\exp(-k\varepsilon_0-1),\exp(-k\varepsilon_0-1)).
\end{align*}
Now we have $\dr{\infty}{A}{A’}=k\varepsilon_0$ and the bad event happens with probability $e^{-1} \approx 0.36$.
On the other hand, $\dr{\infty}{Q}{Q’}=\varepsilon_0$ like before, but $\dr{\infty}{Q’}{Q} = \log(1-\exp(-\varepsilon_0-1/k)) - \log(1-\exp(-1/k)) \approx \log(k\varepsilon_0+1)$.
But we still have a good guarantee in terms of R\'enyi divergences. In particular, $\dr{1}{Q’}{Q}\le (\varepsilon_0+1/k)\log(k\varepsilon_0+1)$, and we can set $\varepsilon_0 \le o(1/\log k)$ to ensure that we get reasonable $(\lambda,\varepsilon(\lambda))$-RDP guarantees for small $\lambda$. 

At a higher level, it should not be a surprise that this negative example is relatively brittle. Our positive results show that it only takes a very minor adjustment to the number of repetitions to obtain significantly tighter privacy guarantees for hyperparameter tuning than what one would obtain from naive composition. In particular, running a fixed number of times $k$ versus running $\mathsf{Poisson}(k)$ times is not that different, but our positive results show that it already circumvents this problem in general.

We also remark that composition behaves differently in the low-order RDP or approx DP regime relative to the pure-DP or high-order RDP regime covered by Proposition \ref{prop:pointmassbad}. Thus the na\"ive composition baseline we compare to is also shifting from basic composition ($\varepsilon$-DP becomes $k\varepsilon$-DP under $k$-fold repetition \citep{dwork2006calibrating}) to advanced composition ($\varepsilon$-DP becomes $(O(\varepsilon \cdot \sqrt{k \cdot \log(1/\delta)}),\delta)$-DP \citep{DworkRV10}). Proving tightness of basic composition is easy, but proving tightness for advanced composition is non-trivial (in general, it relies on the machinery of fingerprinting codes \citep{BunUV14}). This means it is not straightforward to extend Proposition \ref{prop:pointmassbad} to this regime.

\subsection{Tight example for conditional sampling.}
We are also interested in the tightness of our generic results. We begin by studying the conditional sampling approach outlined in Appendix \ref{sec:conditional}. This approach is simpler and it is therefore easier to give a tight example. This also proves to be instructive for the random repetition approach in Section \ref{sec:results}.

Our tight example for the conditional sampling approach consists of a pair of distributions $Q$ and $Q'$. These should be thought of as the output distributions of the algorithm $Q$ on two neighbouring inputs. The distributions are supported on only three points. Such a small output space seems contrived, but it should be thought of as representing a partition of a large output space into three sets. The first set is where the privacy loss is large and the second set is where the privacy loss is very negative, while the third set is everything in between.

Fix $s,t>0$ and $a \in [0,1/4]$. Note $(1-2a)^{-1} \le e^{3a}$.
Let  
\begin{align*}
    Q &=\left(a \cdot e^{-s}, a \cdot e^{-s}, 1-2a\cdot e^{-s}\right),\\
    Q' &= \left(a \cdot e^{-s-t}, a, 1-a-a\cdot e^{-s-t}\right),\\
    S &= \{1,2\} \subset \{1,2,3\},\\
    Q_S &= \left(\frac12, \frac12 \right),\\
    Q'_S &= \left(\frac{e^{-s-t}}{1+e^{-s-t}},\frac{1}{1+e^{-s-t}}\right).
\end{align*}

Intuitively, the set $S$ corresponds to the outputs that (1) have large privacy loss or (2) very negative privacy loss and we exclude (3) the outputs with middling privacy loss. Once we condition on $S$ we still have the outputs (1) with large privacy loss, but that privacy loss is further increased because of the renormalization. Specifically, the negative privacy loss means the renormalization constants are very different -- $Q(S) = a \cdot 2e^{-s} \ll Q'(S) = a \cdot (1 + e^{-s-t})$ -- if $s$ is large. In effect, the negative privacy loss becomes a positive privacy loss that is added to the already large privacy loss.

\begin{figure}[h]
    \includegraphics[width=\textwidth]{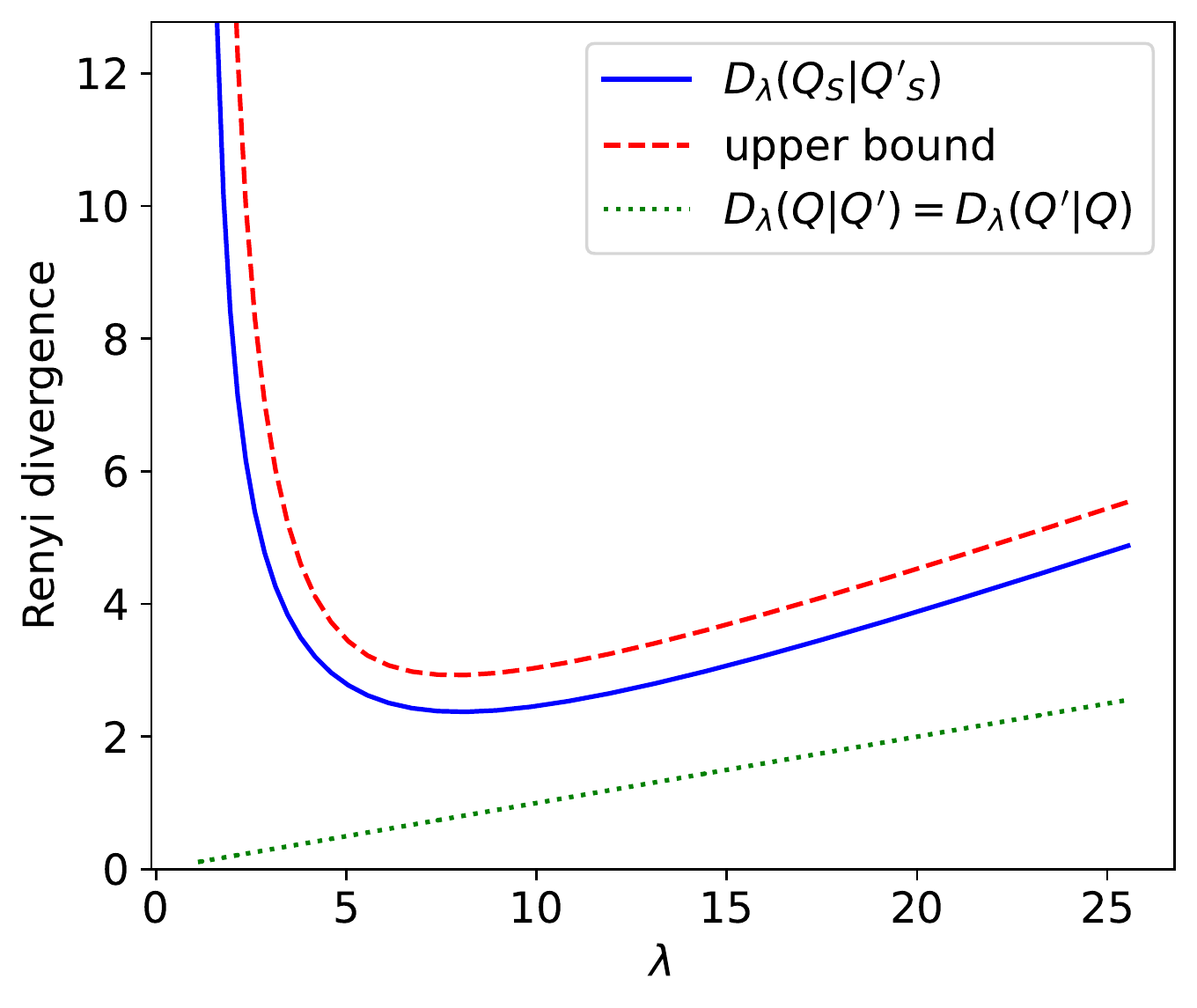}
    \caption{\label{fig:cond_ublb} Upper and lower bounds for R\'enyi DP of conditional sampling. For each $\lambda$, we pick the parameters $s$ and $t$ such that $\dr{\lambda}{Q}{Q'}=\dr{\lambda}{Q'}{Q} = 0.1 \cdot \lambda$ and we plot the upper bound from the second inequation in Corollary \ref{cor:rdp-cond} along with the exact value of $\dr{\lambda}{Q_S}{Q'_S}$.}
\end{figure}

We make the above intuition more precise by calculating the various quantities. For all $\lambda>1$, we have
\begin{align*}
    Q(S) &= 2a \cdot e^{-s},\\
    e^{(\lambda-1)\dr{\lambda}{Q}{Q'}} &= a \cdot e^{-s\lambda-(s+t)(1-\lambda)} + a \cdot e^{-s \lambda} + (1-2a \cdot e^{-s})^\lambda (1-a-a\cdot e^{-s-t})^{1-\lambda}\\
    &\le a \cdot e^{(\lambda-1)t-s} + a e^{-s\lambda} + e^{-2a e^{-s} \lambda} (1-2a)^{1-\lambda}\\
    &\le a \cdot e^{(\lambda-1)t-s} + a + e^{3 a (\lambda-1)}\\
    &\approx a \cdot e^{(\lambda-1)t-s} \tag{assuming $t$ is large}\\
    &= \frac12 Q(S) \cdot e^{(\lambda-1)t},\\
    \implies \dr{\lambda}{Q}{Q'} &\lesssim t - \frac{\log(2/Q(S))}{\lambda-1},\\
    e^{(\lambda-1)\dr{\lambda}{Q'}{Q}} &= a \cdot e^{-s(1-\lambda) -(s+t)\lambda} + a \cdot e^{-s(1-\lambda)} + (1-2a \cdot e^{-s})^{1-\lambda} (1-a-a \cdot e^{-s-t})^\lambda\\
    &\le a \cdot e^{-t\lambda -s} + a \cdot e^{s(\lambda-1)} + e^{3a \cdot e^{-s} \cdot (\lambda-1)} \cdot e^{-a\lambda}\\
    &\approx a \cdot e^{s(\lambda-1)} \tag{assuming $s$ is large}\\
    &= \frac12 Q(S) \cdot e^{s \lambda}\\
    \implies \dr{\lambda}{Q'}{Q} &\lesssim s + \frac{s-\log(2/Q(S))}{\lambda-1},
\end{align*}
\begin{align*}
    e^{(\lambda-1)\dr{\lambda}{Q_S}{Q'_S}} &= 2^{-\lambda}(1+e^{-s-t})^{\lambda-1} (e^{(\lambda-1)(s+t)}+1)\\
    &\ge 2^{-\lambda} \cdot e^{(\lambda-1)(s+t)},\\
    \implies \dr{\lambda}{Q_S}{Q'_S} &\ge s + t - \frac{\lambda \log 2}{\lambda-1}.
\end{align*}

We contrast this with our upper bound from the second part of Corollary \ref{cor:rdp-cond}:
\begin{align*}
    \dr{\lambda}{Q_S}{Q'_S} &\le \dr{\lambda}{Q}{Q'} + \frac{\lambda-2}{\lambda-1} \dr{\lambda-1}{Q'}{Q} + \frac{2}{\lambda-1} \log \left( \frac{1}{Q(S)} \right).\\ %
    &\lesssim t - \frac{\log(2/Q(S))}{\lambda-1} +  \frac{\lambda-2}{\lambda-1} \left(s + \frac{s-\log(2/Q(S))}{\lambda-2}\right) + \frac{2\log(1/Q(S))}{\lambda-1} \\
    &= s + t - \frac{2\log 2}{\lambda-1}.
\end{align*}
This example shows that our upper bound is tight up to small factors, namely the lower order terms we ignore with $\approx$ and $\frac{\lambda-2}{\lambda-1} \log 2$. Figure \ref{fig:cond_ublb} illustrates how the upper and lower bounds compare.

\subsection{Tightness of our generic result.}
Now we consider the setting from Section \ref{sec:results} where our base algorithm is run repeatedly a random number of times and the best result is given as output.

Let $Q : \mathcal{X}^n \to \mathcal{Y}$ denote the base algorithm. Assume $\mathcal{Y}$ is totally ordered.
Let $K \in \mathbb{N}$ be a random variable and let $f(x) = \ex{}{x^K}$ be its probability generating function.
Define $A : \mathcal{X}^n \to \mathcal{Y}$ to be the algorithm that runs $Q$ repeatedly $K$ times and returns the best output. 

For a tight example, we again restrict our attention to distributions supported on three points:
\begin{align*}
    Q=Q(x) &= (1-b-c, b, c),\\
    Q'=Q(x') &= (1-b'-c', b', c'),\\
    A = A(x) &= (1-f(b+c), f(b+c)-f(c), f(c)),\\
    A' = A(x') &= (1-f(b'+c'),f(b'+c')-f(c'), f(c')).
\end{align*}
Here the total ordering prefers the first option (corresponding to the first coordinate probability), then the second, and then the third, which implies the expressions for $A$ and $A'$. Note that the probability values are not necessarily ordered the same way as the ordering on outcomes.

Now we must set these four values to show tightness of our results. %

We make the first-order approximation
\begin{align*}
    A  &\approx (1-f(b+c), f'(c) \cdot b, f(c)),\\
    A' &\approx (1-f(b'+c'),f'(c') \cdot b', f(c')).
\end{align*}
We take this approximation and get
\begin{align*}
    e^{(\lambda-1)\dr{\lambda}{A}{A'}} &\gtrsim (f'(c) \cdot b)^\lambda \cdot (f'(c') \cdot b')^{1-\lambda}\\
    &= e^{(\lambda-1)\dr{\lambda}{b}{b'}}  \cdot (f'(c))^\lambda \cdot (f'(c'))^{1-\lambda}\\
    &\approx e^{(\lambda-1)\dr{\lambda}{Q}{Q'}}  \cdot (f'(c))^\lambda \cdot (f'(c'))^{1-\lambda},
\end{align*}
where the final approximation assumes that the second term is dominant in the equation
\[e^{(\lambda-1)\dr{\lambda}{Q}{Q'}} = (1-b-c)^\lambda \cdot (1-b'-c')^{1-\lambda} + b^\lambda \cdot (b')^{1-\lambda} + (c)^\lambda \cdot (c')^{1-\lambda}.\]
Contrast this with our upper bound (Lemma \ref{lem:pgf}), which says
\[e^{(\lambda-1)\dr{\lambda}{A}{A'}} \le e^{(\lambda-1)\dr{\lambda}{Q}{Q'}} \cdot (f'(q))^\lambda \cdot (f'(q'))^{1-\lambda},\]
where $q$ and $q'$ are arbitrary postprocessings of $Q$ and $Q'$. In particular, we can set the values so that $q=c$ and $q'=c'$.
This is not a formal proof, since we make imprecise approximations. But it illustrates that our main generic result (Lemma \ref{lem:pgf}) is tight up to low order terms.

\subsection{Selection \& Lower Bounds}

Private hyperparameter tuning is a generalization of the private selection problem. In the private selection problem we are given a utility function $u : \mathcal{X}^n \times [m] \to \mathbb{R}$ which has sensitivity $1$ in its first argument -- i.e., for all neighbouring $x,x' \in \mathcal{X}^n$ and all $j \in [m] = \{1,2,\cdots,m\}$, we have $|u(x,j)-u(x',j)| \le 1$. The goal is to output and approximation to $\argmax_{j \in [m]} u(x,j)$ subject to differential privacy.

The standard algorithm for private selection is the exponential mechanism \citep{McSherryT07}. The exponential mechanism is defined by \[\forall j \in [m] ~~~~~ \pr{}{M(x)=j} = \frac{\exp\left(\frac{\varepsilon}{2}u(x,j)\right)}{\sum_{\ell \in [m]}\exp\left(\frac{\varepsilon}{2}u(x,\ell)\right)}.\] It provides $(\varepsilon,0)$-DP and, at the same time, $\frac18\varepsilon^2$-zCDP \citep{DPorg-exponential-mechanism-bounded-range}. On the utility side, we have the guarantees \[\ex{}{u(x,M(x))} \ge \max_{j \in [m]} u(x,j) - \frac2\varepsilon \log m,\] \[\pr{}{u(x,M(x)) \ge \max_{j \in [m]} u(x,j) - \frac2\varepsilon \log \left(\frac{m}{\beta}\right)} \ge 1-\beta\]for all inputs $x$ and all $\beta>0$ \citep[Lemma 7.1]{bassily2021algorithmic} \citep[Theorem 3.11]{dwork2014algorithmic}.

It is also well-known that the exponential mechanism is optimal up to constants. That is, $(\varepsilon,0)$-DP selection entails an additive error of $\Omega(\log(m)/\varepsilon)$ \citep{SteinkeU17}.

Our results can be applied to the selection problem. The base algorithm $Q(x)$ will simply pick an index $j \in [m]$ uniformly at random and the privately estimate $u(x,j)$ by adding Laplace or Gaussian noise $\xi$ and output the pair $(j,u(x,j)+\xi)$. The total order on the output space $[m] \times \mathbb{R}$ simply selects for the highest estimated utility (breaking ties arbitrarily).
If we take $\xi$ to be Laplace noise with scale $1/\varepsilon$, then $Q$ is $(\varepsilon,0)$-DP. 

Applying Corollary \ref{cor:puredp} yields a $((2+\eta)\varepsilon,0)$-DP algorithm $A$ with the following utility guarantee. Let $K$ be the number of repetitions and let $(j_1,u(x,j_1)+\xi_1),\cdots,(j_K,u(x,j_K)+\xi_K)$ be the outputs from the runs of the base algorithm. The probability that the repeated algorithm $A$ will consider $j_* := \argmax_{j \in [m]} u(x,j)$ is $\pr{}{j_* \in \{j_1, \cdots, j_K\}}=1-\ex{K}{\prod_{k \in [K]}\pr{}{j_* \ne j_k}}=1-f(1-1/m)$, where $f(x) = \ex{}{x^K}$ is the probability generating function of the number of repetitions $K$. For each noise sample, we have $\forall k ~~\pr{}{|\xi_k|\le t} \ge 1- e^{-\varepsilon t}$  for all $t>0$. Thus, for all $t>0$, the probability that all noise samples are smaller than $t$ is $\pr{}{\forall k \in [K] ~~ |\xi_k| \le t} = f(1-e^{-\varepsilon t})$. By a union bound, we have $\pr{}{u(x,M(x)) \ge u(x,j_*) - 2t} \ge f(1-e^{-\varepsilon t}) - f(1-1/m)$ for all $t>0$.
Setting $\eta=0$, yields $f(x) = \frac{\log(1-(1-\gamma)x)}{\log \gamma}$, so $\pr{}{u(x,M(x)) \ge u(x,j_*) - 2t} \ge \frac{1}{\log(1/\gamma)} \log \left( \frac{1 + \frac{1-\gamma}{\gamma} \cdot \frac1m}{1 + \frac{1-\gamma}{\gamma}\cdot e^{-\varepsilon t}} \right)$. Now set $t=\log(m^{10}-1)/\varepsilon$ and $\gamma =\frac{1}{\exp(\varepsilon t)+1} = m^{-10}$ so that $\frac{1-\gamma}{\gamma}e^{-\varepsilon t}=1$ and $\frac{1-\gamma}{\gamma} \frac1m = m^9-\frac1m$. Then $\pr{}{u(x,M(x)) \ge u(x,j_*) - \frac{20}{\varepsilon}\log m} \ge \frac{1}{10\log m} \log \left( \frac{m^9+1-1/m}{2} \right) \ge \frac{9}{10} - \frac{1}{10\log_2 m}$. That is, we can match the result of the exponential mechanism up to (large) constants. In particular, this means that the lower bounds for selection translate to our results -- i.e., our results are tight up to constants. 

\section{Extending our Results to Approximate DP}

Our results are all in the framework of R\'enyi DP. A natural question is what can be said if the base algorithm instead only satisfies approximate DP -- i.e., $(\varepsilon,\delta)$-DP with $\delta>0$.
\citet{LiuT19} considered these questions and gave several results.
We now briefly show how to our results can be extended in a black-box fashion to this setting.

We begin by defining approximate R\'enyi divergences and approximate R\'enyi DP:
\begin{defn}[Approximate R\'enyi Divergence]
Let $P$ and $Q$ be probability distributions over $\Omega$. Let $\lambda \in [1,\infty]$ and $\delta \in [0,1]$. We define \[\dra{\lambda}{P}{Q}{\delta} = \inf \left\{ \dr{\lambda}{P'}{Q'} : P = (1-\delta)P'+\delta P'' , Q = (1-\delta) Q' + \delta Q'' \right\},\] where $P = (1-\delta)P'+\delta P''$ denotes the fact that $P$ can be expressed as a convex combination of two distributions $P'$ and $P''$ with weights $1-\delta$ and $\delta$ respectively.
\end{defn}

\begin{defn}[Approximate R\'enyi Differential Privacy]\label{defn:ardp}
A randomized algorithm $M : \mathcal{X}^n \to \mathcal{Y}$ is $\delta$-approximately $(\lambda,\varepsilon)$-R\'enyi differentially private if, for all neighbouring pairs of inputs $x,x' \in \mathcal{X}^n$, $\dra{\lambda}{M(x)}{M(x')}{\delta} \le \varepsilon$.
\end{defn}

Definition \ref{defn:ardp} is an extension of the definition of approximate zCDP \citep{BunS16}.
Some remarks about the basic properties of approximate RDP are in order:
\begin{itemize}
    \item $(\varepsilon,\delta)$-DP is equivalent to $\delta$-approximate $(\infty,\varepsilon)$-RDP.
    \item $(\varepsilon,\delta)$-DP implies $\delta$-approximate $(\lambda,\frac12\varepsilon^2\lambda)$-RDP for all $\lambda \in (1,\infty)$.
    \item $\delta$-approximate $(\lambda,\varepsilon)$-RDP implies $(\hat\varepsilon,\hat\delta)$-DP for \[\hat\delta = \delta + \frac{\exp((\lambda-1)(\hat\varepsilon-\varepsilon))}{\lambda} \cdot \left(1 - \frac{1}{\lambda}\right)^{\lambda-1}. \] 
    \item $\delta$-approximate $(\lambda,\varepsilon)$-R\'enyi differential privacy is closed under postprocessing.
    \item If $M_1$ is $\delta_1$-approximately $(\lambda,\varepsilon_1)$-R\'enyi differentially private and $M_2$ is $\delta_2$-approximately $(\lambda,\varepsilon_2)$-R\'enyi differentially private, then their composition is $(\delta_1+\delta_2)$-approximately $(\lambda,\varepsilon_1+\varepsilon_2)$-RDP.
\end{itemize}

Our results for R\'enyi DP can be extended to approximate R\'enyi DP by the following Lemma.

\begin{lem}\label{lem:approx_repetition}
    Assume $\mathcal{Y}$ is a totally ordered set.
    For a distribution $Q$ on $\mathcal{Y} $ and a random variable $K$ supported on $\mathbb{N}\cup\{0\}$, define $A_Q^K$ as follows. First we sample $K$. Then we sample from $Q$ independently $K$ times and output the best of these samples. This output is a sample from $A$.
    
    Let $Q,Q',Q_{1-\delta_0},Q_{\delta_0},Q_{1-\delta_0}',Q_{\delta_0}'$ be distributions on $\mathcal{Y}$ satisfying $Q=(1-\delta_0)Q_{1-\delta_0}+\delta_0 Q_{\delta_0}$ and $Q'=(1-\delta_0)Q_{1-\delta_0}'+\delta_0 Q_{\delta_0}'$.
    Let $K$ be a random variable on $\mathbb{N}\cup\{0\}$ and let ${f}(x)=\ex{}{x^{{K}}}$ be the probability generating function of ${K}$.
    Define a random variable $K'$ on $\mathbb{N}\cup\{0\}$ by $\pr{}{K'=k} = \pr{}{K=k} \cdot (1-\delta_0)^k / f(1-\delta_0)$.\footnote{Note that the PGF of $K'$ is given by $\ex{}{x^{K'}} = \sum_{k=0}^\infty x^k \pr{}{K=k} (1-\delta_0)^k / f(1-\delta_0) = f(x(1-\delta_0))/f(1-\delta_0)$.}
    
    Then, for all $\lambda \ge 1$, we have
    \[\dra{\lambda}{A_Q^K}{A_{Q'}^K}{\delta} \le \dr{\lambda}{A_{Q_{1-\delta_0}}^{{K'}}}{A_{Q_{1-\delta_0}'}^{{K'}}}\]
    where \[\delta = 1-{f}(1-\delta_0).\] 
\end{lem}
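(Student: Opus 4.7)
The plan is to view the random sampling from the mixture $Q = (1-\delta_0)Q_{1-\delta_0} + \delta_0 Q_{\delta_0}$ as a two-stage process and condition on the favorable event that every one of the $K$ draws lands in the ``good'' component. First, I would rewrite the algorithm $A_Q^K$ equivalently as follows: sample $K$, then independently for each $i \in [K]$ flip a Bernoulli $C_i$ with success probability $1-\delta_0$; if $C_i=1$ draw $Y_i \gets Q_{1-\delta_0}$, else draw $Y_i \gets Q_{\delta_0}$; output the best of $Y_1,\dots,Y_K$. This faithfully simulates $A_Q^K$ because the per-draw mixture correctly reproduces $Q$.

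Next, let $E$ be the event $\{C_1=\cdots=C_K=1\}$ that all draws come from the ``good'' component. Conditioning on $K=k$ gives $\pr{}{E\mid K=k}=(1-\delta_0)^k$, so $\pr{}{E}=\ex{}{(1-\delta_0)^K}=f(1-\delta_0)$. By Bayes' rule, the conditional distribution of $K$ given $E$ satisfies
\[
\pr{}{K=k\mid E}=\frac{\pr{}{K=k}\cdot(1-\delta_0)^k}{f(1-\delta_0)},
\]
which is precisely the law of $K'$. Moreover, conditional on $E$, each $Y_i$ is an independent draw from $Q_{1-\delta_0}$. Therefore, conditional on $E$, the output of $A_Q^K$ has the same distribution as $A_{Q_{1-\delta_0}}^{K'}$. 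Let $B$ denote the (arbitrary) conditional output distribution given $E^c$; then
\[
A_Q^K \;=\; f(1-\delta_0)\cdot A_{Q_{1-\delta_0}}^{K'} \;+\; \bigl(1-f(1-\delta_0)\bigr)\cdot B.
\]
The key point is that we may use exactly the same coin flips $C_i$ and the same realization of $K$ to obtain the analogous decomposition for the primed distributions, yielding $A_{Q'}^K = f(1-\delta_0)\cdot A_{Q_{1-\delta_0}'}^{K'} + (1-f(1-\delta_0))\cdot B'$ for some $B'$.

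With $\delta = 1-f(1-\delta_0)$, both $A_Q^K$ and $A_{Q'}^K$ are thus expressed as $(1-\delta)$-weighted mixtures of $A_{Q_{1-\delta_0}}^{K'}$ and $A_{Q_{1-\delta_0}'}^{K'}$ plus $\delta$-weight ``leftover'' pieces. Plugging this decomposition straight into the definition of approximate Rényi divergence gives
\[
\dra{\lambda}{A_Q^K}{A_{Q'}^K}{\delta} \;\le\; \dr{\lambda}{A_{Q_{1-\delta_0}}^{K'}}{A_{Q_{1-\delta_0}'}^{K'}},
\]
completing the proof. There is no real obstacle here; the only slightly subtle point is ensuring the decompositions of $A_Q^K$ and $A_{Q'}^K$ are coupled by the \emph{same} event $E$ (and hence the same $K'$), which is handled by the two-stage sampling reinterpretation above. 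If one is worried about the case $f(1-\delta_0)=0$ the bound is vacuous, and if $f(1-\delta_0)=1$ (i.e., $\delta_0=0$ and $\pr{}{K<\infty}=1$) the result degenerates to the exact Rényi bound.
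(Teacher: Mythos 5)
Your proof is correct and follows essentially the same route as the paper's own argument. Both proofs recast sampling from the mixture $Q=(1-\delta_0)Q_{1-\delta_0}+\delta_0 Q_{\delta_0}$ as a two-stage coin-flip process, observe that on the event where every one of the $K$ coins comes up ``good'' (which has probability $\ex{}{(1-\delta_0)^K}=f(1-\delta_0)$) the conditional law of $K$ is exactly that of $K'$ and the samples are i.i.d.\ from $Q_{1-\delta_0}$, and thereby decompose $A_Q^K$ (and analogously $A_{Q'}^K$) as a $(1-\delta)$-weighted mixture of $A_{Q_{1-\delta_0}}^{K'}$ (resp.\ $A_{Q'_{1-\delta_0}}^{K'}$) and a leftover piece, from which the claim follows directly from the definition of $\dra{\lambda}{\cdot}{\cdot}{\delta}$.

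One small remark: the coupling point you flag as ``slightly subtle'' is actually not needed. The definition of the approximate R\'enyi divergence only requires that $A_Q^K$ and $A_{Q'}^K$ each separately admit a convex decomposition with the same \emph{weight} $1-\delta$; it does not require the two decompositions to be realized by a common event or joint source of randomness. Since the weight $f(1-\delta_0)$ depends only on $\delta_0$ and the law of $K$ (which are shared), the two decompositions automatically carry matching weights, and no coupling argument is necessary.
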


How do we use this lemma? We should think of $A_Q^K$ as representing the algorithm we want to analyze. The base algorithm $Q$ satisfies $\delta_0$-approximate $(\lambda,\varepsilon)$-RDP. The above lemma says it suffices to analyze the algorithm $A_{\tilde{Q}}^{{K'}}$ where $\tilde{Q}$ satisfies $(\lambda,\varepsilon)$-RDP. We end up with a $\delta$-approximate RDP result, where the final $\delta$ depends on $\delta_0$ and the PGF of $K$.

As an example, we can combine Lemma \ref{lem:approx_repetition} with Theorem \ref{thm:poisson} to obtain the following result for the approximate case.

\begin{cor}\label{cor:approx_poisson}
    Let $Q : \mathcal{X}^n \to \mathcal{Y}$ be a randomized algorithm satisfying $(\varepsilon_0,\delta_0)$-DP. Assume $\mathcal{Y}$ is totally ordered. Let $\mu>0$.
    
    Define an algorithm $A : \mathcal{X}^n \to \mathcal{Y}$ as follows. Draw $K$ from a Poisson distribution with mean $\mu$. Run $Q(x)$ repeatedly $K$ times. Then $A(x)$ returns the best value from the $K$ runs. (If $K=0$, $A(x)$ returns some arbitrary output independent from the input $x$.)
    
    For all $\lambda \le 1+\frac{1}{e^{\varepsilon}-1}$, the algorithm $A$ satisfies $\delta'$-approximate $(\lambda,\varepsilon')$-RDP where
    \begin{align*}
        \varepsilon' &= \varepsilon_0 + (e^{\varepsilon_0}-1)\cdot \log \mu,\\
        \delta' &= 1 - e^{- \mu\cdot\delta_0} \le \mu \cdot \delta_0. 
    \end{align*}

\end{cor}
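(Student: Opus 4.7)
\textbf{Proof plan for Corollary \ref{cor:approx_poisson}.} The plan is to reduce the approximate-DP assumption on $Q$ to a pure RDP assumption via Lemma \ref{lem:approx_repetition}, and then apply Theorem \ref{thm:poisson} to the resulting reduction. Throughout, fix a pair of neighbouring inputs $x, x'$ and write $Q=Q(x)$, $Q'=Q(x')$.

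First, I would use the $(\varepsilon_0,\delta_0)$-DP hypothesis to decompose $Q = (1-\delta_0) Q_{1-\delta_0} + \delta_0 Q_{\delta_0}$ and $Q' = (1-\delta_0)Q'_{1-\delta_0}+\delta_0 Q'_{\delta_0}$ in such a way that $\dr{\infty}{Q_{1-\delta_0}}{Q'_{1-\delta_0}} \le \varepsilon_0$ (this is equivalent to $(\varepsilon_0,\delta_0)$-DP and is exactly the content of the decomposition that realises $\delta_0$-approximate RDP). By the monotonicity of R\'enyi divergences in the order parameter (Lemma \ref{lem:renyi}), the pair $(Q_{1-\delta_0},Q'_{1-\delta_0})$ then satisfies $(\lambda,\varepsilon_0)$-RDP for every $\lambda \in (1,\infty)$ and, trivially, $(\varepsilon_0,0)$-DP.

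Next I would compute the two Poisson-specific quantities appearing in Lemma \ref{lem:approx_repetition}. The PGF of $K \sim \mathrm{Poisson}(\mu)$ is $f(x)=e^{\mu(x-1)}$, so $f(1-\delta_0)=e^{-\mu\delta_0}$ and hence the approximation parameter produced by the Lemma is $\delta = 1 - e^{-\mu\delta_0} \le \mu\delta_0$, matching the stated $\delta'$. The reduced counting variable $K'$ has PGF $f(x(1-\delta_0))/f(1-\delta_0) = e^{\mu(1-\delta_0)(x-1)}$, i.e.\ $K' \sim \mathrm{Poisson}(\mu(1-\delta_0))$; this is the pleasant fact that thinning a Poisson yields a Poisson, and it means the reduction stays inside the family for which Theorem \ref{thm:poisson} applies.

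Now set $\lambda^{\star} := 1 + \tfrac{1}{e^{\varepsilon_0}-1}$, which is the largest $\lambda$ allowed by Theorem \ref{thm:poisson} under the hypothesis $e^{\hat\varepsilon} \le 1 + \tfrac{1}{\lambda-1}$ with $\hat\varepsilon=\varepsilon_0$. Applying that theorem to the base pair $(Q_{1-\delta_0},Q'_{1-\delta_0})$ with the $(\lambda^{\star},\varepsilon_0)$-RDP bound, the $(\varepsilon_0,0)$-DP bound, and with mean $\mu(1-\delta_0)$, gives
\[
\dr{\lambda^{\star}}{A_{Q_{1-\delta_0}}^{K'}}{A_{Q'_{1-\delta_0}}^{K'}} \le \varepsilon_0 + \mu(1-\delta_0)\cdot 0 + \frac{\log(\mu(1-\delta_0))}{\lambda^{\star}-1} \le \varepsilon_0 + (e^{\varepsilon_0}-1)\log\mu = \varepsilon',
\]
using $1/(\lambda^{\star}-1)=e^{\varepsilon_0}-1$ and $\log(\mu(1-\delta_0)) \le \log\mu$. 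For any smaller $\lambda \le \lambda^{\star}$ the same upper bound holds by monotonicity of R\'enyi divergence in $\lambda$. Plugging this into the conclusion of Lemma \ref{lem:approx_repetition} yields $\dra{\lambda}{A_Q^K}{A_{Q'}^K}{\delta'} \le \varepsilon'$ for all neighbouring $x,x'$, which is exactly $\delta'$-approximate $(\lambda,\varepsilon')$-RDP.

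The only step I expect to require care is the handoff between Theorem \ref{thm:poisson}'s $\lambda$-dependent bound $\varepsilon_0 + \log(\mu(1-\delta_0))/(\lambda-1)$ and the $\lambda$-independent expression in the corollary: one cannot simply plug in an arbitrary $\lambda$, since for small $\lambda$ the $1/(\lambda-1)$ factor blows up. The resolution is to evaluate Theorem \ref{thm:poisson} only at the critical $\lambda^{\star}$, where the constraint $e^{\varepsilon_0} \le 1+1/(\lambda-1)$ is tight, and then propagate the bound down to smaller $\lambda$ by monotonicity of R\'enyi divergence. Everything else is bookkeeping with the Poisson PGF.
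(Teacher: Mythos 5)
Your proof is correct and follows essentially the same route as the paper's: apply Lemma \ref{lem:approx_repetition} with the Poisson PGF $f(x)=e^{\mu(x-1)}$ to obtain $\delta'=1-e^{-\mu\delta_0}$ and the thinned variable $K'\sim\mathsf{Poisson}(\mu(1-\delta_0))$, then invoke Theorem \ref{thm:poisson} on the pure-DP pair $(Q_{1-\delta_0},Q'_{1-\delta_0})$ at the critical $\lambda^\star=1+1/(e^{\varepsilon_0}-1)$ and propagate to smaller $\lambda$ by monotonicity. The only cosmetic difference is that you spell out the decomposition guarantee $\dr{\infty}{Q_{1-\delta_0}}{Q'_{1-\delta_0}}\le\varepsilon_0$ explicitly, which the paper leaves implicit.
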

\begin{comment}
    Let $Q : \mathcal{X}^n \to \mathcal{Y}$ be a randomized algorithm satisfying $\delta_0$-approximate $(\lambda,\varepsilon_0)$-RDP and $(\hat\varepsilon,\hat\delta)$-DP for some $\lambda \in (1,\infty)$ and $\varepsilon_0, \hat\varepsilon,\hat\delta,\delta_0 \ge 0$. Assume $\mathcal{Y}$ is totally ordered. Let $\mu>0$.
    
    Define an algorithm $A : \mathcal{X}^n \to \mathcal{Y}$ as follows. Draw $K$ from a Poisson distribution with mean $\mu$. Run $Q(x)$ repeatedly $K$ times. Then $A(x)$ returns the best value from the $K$ runs. If $K=0$, $A(x)$ returns some arbitrary output independent from the input $x$.
    
    If $e^{\hat\varepsilon} \le 1 + \frac{1}{\lambda-1}$, then $A$ satisfies $\delta'$-approximate $(\lambda,\varepsilon')$-RDP where
    \begin{align*}
        \varepsilon' &= \varepsilon_0 + \mu \cdot \frac{ \hat\delta + e^{\hat\varepsilon} \cdot \delta_0}{1-\delta_0} + \frac{\log \mu}{\lambda-1},\\
        \delta' &= 1 - e^{- \mu\cdot\delta_0} \le \mu \cdot \delta_0. 
    \end{align*}

\end{comment}

    %
    %
    
\begin{proof}[Proof of Lemma \ref{lem:approx_repetition}]
    For a distribution $P$ on $\mathcal{Y}$ and an integer $k \ge 0$, let $\max P^k$ denote the distribution on $\mathcal{Y}$ obtained by taking $k$ independent samples from $P$ and returning the maximum value per the total ordering on $\mathcal{Y}$. (If $k=0$, this is some arbitrary fixed distribution.)
    
    Using this notation, we can express $A_Q^K$ as a convex combination: \[A_Q^K = \sum_{k=0}^\infty \pr{}{K=k} \max Q^k.\]
    
    Suppose $P = (1-\delta)P'+\delta P''$ is a convex combination. 
    We can view sampling from $P$ as a two-step process: first we sample a Bernoulli random variable $B \in \{0,1\}$ with expectation $\delta$; if $B=0$, we return a sample from $P'$ and, if $B=1$, we return a sample from $P''$.  
    Thus, if we draw $k$ independent samples from $P$ like this, then with probability $(1-\delta)^k$ all of these Bernoullis are $0$ and we generate $k$ samples from $P'$; otherwise, we generate some mix of samples from $P'$ and $P''$.
    Hence we can write $\max P^k = (1-\delta)^k \max (P')^k + (1-(1-\delta)^k)P'''$ for some distribution $P'''$.
    
    It follows that we can express
    \begin{align*}
        A_Q^K &= \sum_{k=0}^\infty \pr{}{K=k} \max Q^k \\
        &= \sum_{k=0}^\infty \pr{}{K=k} \left((1-\delta_0)^k \max Q_{1-\delta_0}^k + (1-(1-\delta_0)^k)P_k\right) \\
        &= f(1-\delta_0) A_Q^{K'} + (1-f(1-\delta_0)) P_*  
    \end{align*}
    for some distributions $P_0,P_1,\cdots$ and $(1-f(1-\delta_0)) P_* = \sum_{k=0}^\infty \pr{}{K=k}(1-(1-\delta_0)^k)P_k$.
    
    Similarly, we can express $A_{Q'}^K = f(1-\delta_0) A_{Q'_{1-\delta_0}}^{K'} + (1-f(1-\delta_0)) P_*'$ for some distribution $P_*'$.
    
    Using these convex combinations we have, by the definition of approximate R\'enyi divergence, \[\dra{\lambda}{A_Q^K}{A_{Q'}^K}{\delta} \le \dr{\lambda}{A_{Q_{1-\delta_0}}^{{K'}}}{A_{Q_{1-\delta_0}'}^{{K'}}}\] as $\delta = 1 - f(1-\delta_0)$.
\end{proof}
\begin{proof}[Proof of Corollary \ref{cor:approx_poisson}]
    Fix neighbouring inputs $x,x'$ and let $Q=Q(x)$ and $Q'=Q(x')$ be the corresponding pair of output distributions from the base algorithm.
    Then, in the notation of Lemma \ref{lem:approx_repetition}, $A(x) = A_Q^K$ and $A(x')=A_{Q'}^K$ for $K \sim \mathsf{Poisson}(\mu)$.
    Setting $\delta' = 1 - f(1-\delta_0) = 1 - e^{-\mu \cdot \delta_0} \le \mu \cdot \delta_0$, we have
    \[\dra{\lambda}{A(x)}{A(x')}{\delta'} = \dra{\lambda}{A_Q^K}{A_{Q'}^K}{\delta'} \le \dr{\lambda}{A_{Q_{1-\delta_0}}^{{K'}}}{A_{Q_{1-\delta_0}'}^{{K'}}}\] where $K' \sim \mathsf{Poisson}(\mu\cdot(1-\delta_0))$.
    
    Now we apply Theorem \ref{thm:poisson} to the algorithm corresponding to the pair of distributions $A_{Q_{1-\delta_0}}^{K'}$ and $A_{Q'_{1-\delta_0}}^{K'}$. The base algorithm, corresponding to the pair $Q_{1-\delta_0}$ and $Q_{1-\delta_0}'$ satisfies $(\varepsilon_0,0)$-DP. This yields \[\dr{\lambda}{A_{Q_{1-\delta_0}}^{{K'}}}{A_{Q_{1-\delta_0}'}^{{K'}}} \le \varepsilon_0 + \frac{\log ((1-\delta_0)\cdot\mu)}{\lambda-1}\] if $e^{\varepsilon_0} \le 1 + 1/(\lambda-1)$. Setting $\lambda = 1 + 1/(e^{\varepsilon_0}-1)$ and applying monotonicity (Remark \ref{rem:monotone} and Lemma \ref{lem:renyi}) and we obtain the result.
\end{proof}
    
\subsection{Truncating the Number of Repetitions}
Another natural question is what happens if we truncate the distribution of the number of repetitions $K$. For example, we may have an upper limit on the acceptable runtime.
This does \emph{not} require relaxing to approximate DP, as done by \citet{LiuT19}.

Let $K$ be the non-truncated number of repetitions and let $f(x) = \ex{}{x^K}$ be the PGF. Let $m \in \mathbb{N}$.
Let $\tilde{K}$ be the truncated number of repetitions. That is, $\pr{}{\tilde{K}=k} = \mathbb{I}[K \le m] \cdot \pr{}{K=k} / \pr{}{K \le m}$.
Let $\tilde{f}(x) = \ex{}{x^{\tilde{K}}}$ be the corresponding PGF.

We have $f'(x) = \sum_{k=1}^\infty k \cdot x^{k-1} \cdot \pr{}{K=k}$ and $\tilde{f}'(x) = \sum_{k=1}^m k \cdot x^{k-1} \cdot \frac{\pr{}{K=k}}{\pr{}{K \le m}}$. Hence, for $x \in [0,1]$,
\begin{align*}
    0 \le& f'(x) - \tilde{f}'(x) \cdot \pr{}{K \le m}\\
    &= \sum_{k=m+1}^\infty k \cdot x^{k-1} \cdot \pr{}{K=k}\\
    &\le x^m \cdot \sum_{k=m+1}^\infty k \cdot \pr{}{K=k}\\
    &= x^m \cdot \ex{}{K \cdot \mathbb{I}[K>m]}.
\end{align*}
Now $\tilde{f}'(x) \cdot \pr{}{K \le m} =  \sum_{k=1}^m k \cdot x^{k-1} \cdot {\pr{}{K=k}} \ge x^{m-1} \cdot \ex{}{K \cdot \mathbb{I}[K \le m]}$. Thus \[1 \le \frac{f'(x)}{\tilde{f}'(x) \cdot \pr{}{K \le m}} \le 1 + \frac{x^m \cdot \ex{}{K \cdot \mathbb{I}[K > m]}}{x^{m-1} \cdot \ex{}{K \cdot \mathbb{I}[K \le m]}} = 1 + x \cdot \frac{\ex{}{K \cdot \mathbb{I}[K>m]}}{\ex{}{K}-\ex{}{K \cdot \mathbb{I}[K>m]}}\]

Now we can bound the quantity of interest in Lemma \ref{lem:pgf}: For all $q,q' \in [0,1]$, we have 
\begin{align*}
    \tilde{f}'(q)^\lambda \cdot \tilde{f}'(q')^{1-\lambda} &\le \left(\frac{f'(q)}{\pr{}{K \le m}}\right)^\lambda \cdot \left(\frac{f'(q')}{\pr{}{K \le m} \cdot \left(1+\frac{\ex{}{K \cdot \mathbb{I}[K>m]}}{\ex{}{K}-\ex{}{K \cdot \mathbb{I}[K>m]}}\right)}\right)^{1-\lambda}\\
    &= f'(q)^\lambda \cdot f'(q')^{1-\lambda} \cdot \frac{1}{\pr{}{K \le m}} \cdot \left(1+\frac{\ex{}{K \cdot \mathbb{I}[K>m]}}{\ex{}{K}-\ex{}{K \cdot \mathbb{I}[K>m]}}\right)^{\lambda-1}
\end{align*}

This gives us a generic result for truncated distributions:
\begin{lem}[Generic Bound (cf.~Lemma \ref{lem:pgf}) for Truncated distributions]
    Fix $\lambda>1$ and $m \in \mathbb{N}$.
    Let $K$ be a random variable supported on $\mathbb{N}\cup\{0\}$. Let $f : [0,1] \to \mathbb{R}$ be the probability generating function of $K$ -- i.e., $f(x) := \sum_{k=0}^\infty \pr{}{K=k} \cdot x^k$.  
    
    Let $Q$ and $Q'$ be distributions on $\mathcal{Y}$. Assume $\mathcal{Y}$ is totally ordered. 
    Define a distribution $A$ on $\mathcal{Y}$ as follows. First we sample $\tilde{K}$ which is $K$ conditioned on $K \le m$ -- i.e. $\pr{}{\tilde{K}=k} = \pr{}{K=k|K\le m}$. Then we sample from $Q$ independently $\tilde{K}$ times and output the best of these samples.\footnote{If $\tilde{K}=0$, the output can be arbitrary, as long as it is the same for both $A$ and $A'$.} This output is a sample from $A$. We define $A'$ analogously with $Q'$ in place of $Q$. 
    
    Then 
    \begin{equation}
    \dr{\lambda}{A}{A'} \le \dr{\lambda}{Q}{Q'} + \frac{1}{\lambda-1} \log \left( f'(q)^\lambda \cdot f'(q')^{1-\lambda} \right) + \frac{\log\left(\frac{1}{1-\pr{}{K > m}}\right)}{\lambda-1} + \log\left(1+\frac{\ex{}{K \cdot \mathbb{I}[K>m]}}{\ex{}{K}-\ex{}{K \cdot \mathbb{I}[K>m]}}\right),
    \end{equation}
    where $q$ and $q'$ are probabilities attained by applying the same arbitrary postprocessing to $Q$ and $Q'$ respectively -- i.e., there exists a function $g : \mathcal{Y} \to [0,1]$ such that $q=\ex{X \gets Q}{g(X)}$ and $q'=\ex{X' \gets Q'}{g(X')}$.
\end{lem}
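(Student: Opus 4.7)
The plan is to reduce the truncated setting to the already-established generic bound (Lemma \ref{lem:pgf}) by passing to the probability generating function $\tilde f$ of the truncated random variable $\tilde K$, and then convert the resulting $\tilde f'$ back into $f'$ at the cost of two explicit correction terms.

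First I would apply Lemma \ref{lem:pgf} verbatim, but with $\tilde K$ in place of $K$. The distributions $A$ and $A'$ are defined exactly as in that lemma --- sample $\tilde K$, then take the best of $\tilde K$ independent draws from $Q$ (respectively $Q'$) --- so the hypotheses are satisfied, and we immediately obtain
\[
\dr{\lambda}{A}{A'} \le \dr{\lambda}{Q}{Q'} + \frac{1}{\lambda-1}\log\!\left(\tilde f'(q)^\lambda\, \tilde f'(q')^{1-\lambda}\right)
\]
for some pair $q,q'\in[0,1]$ arising from a common postprocessing $g:\mathcal Y\to[0,1]$ of $Q$ and $Q'$. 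It remains only to compare the derivative of the truncated PGF with that of the original.

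Next I would establish the two-sided estimate on $\tilde f'$ that the text preceding the lemma already sketches. Writing $\tilde f'(x)\cdot \pr{}{K\le m}=\sum_{k=1}^m k\,x^{k-1}\pr{}{K=k}$ and $f'(x)=\sum_{k=1}^\infty k\,x^{k-1}\pr{}{K=k}$, the inequality $\tilde f'(x)\cdot\pr{}{K\le m}\le f'(x)$ is immediate. For the reverse direction, for $x\in[0,1]$ the missing tail is bounded by $f'(x)-\tilde f'(x)\pr{}{K\le m}\le x^m\,\ex{}{K\mathbb{I}[K>m]}$ (using $x\le 1$ to pull out one factor of $x$), while $\tilde f'(x)\pr{}{K\le m}\ge x^{m-1}(\ex{}{K}-\ex{}{K\mathbb{I}[K>m]})$. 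Dividing yields
\[
1\le \frac{f'(x)}{\tilde f'(x)\pr{}{K\le m}}\le 1+\frac{\ex{}{K\mathbb{I}[K>m]}}{\ex{}{K}-\ex{}{K\mathbb{I}[K>m]}}.
\]

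Finally I would substitute: since $\lambda>1$, the upper bound on $\tilde f'(q)$ controls $\tilde f'(q)^\lambda$ from above, while the lower bound on $\tilde f'(q')$ controls $\tilde f'(q')^{1-\lambda}$ from above (because the exponent $1-\lambda$ is negative). Collecting the resulting factors of $1/\pr{}{K\le m}=1/(1-\pr{}{K>m})$ and of $1+\ex{}{K\mathbb{I}[K>m]}/(\ex{}{K}-\ex{}{K\mathbb{I}[K>m]})$ and taking $\log/(\lambda-1)$ produces the main $\frac{1}{\lambda-1}\log(f'(q)^\lambda f'(q')^{1-\lambda})$ term together with the two advertised correction terms.

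The main obstacle is purely clerical: tracking the direction of each inequality through the exponent $1-\lambda<0$ so that the lower (not upper) estimate on $\tilde f'(q')$ feeds into the final bound, and noticing that the shared factor $x^{m-1}$ vs.\ $x^m$ --- combined with $x\le 1$ --- is exactly what lets the tail mass inequality convert into a clean multiplicative correction independent of $q,q'$.
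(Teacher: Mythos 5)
Your proof is correct and follows the paper's argument essentially verbatim: you apply Lemma \ref{lem:pgf} to the truncated variable $\tilde K$, establish the two-sided sandwich $\tilde f'(x)\,\pr{}{K\le m}\le f'(x)\le \tilde f'(x)\,\pr{}{K\le m}\bigl(1+\tfrac{\ex{}{K\mathbb{I}[K>m]}}{\ex{}{K}-\ex{}{K\mathbb{I}[K>m]}}\bigr)$ by comparing the coefficient sums with $x^m$ and $x^{m-1}$, and then push the bounds through the exponents $\lambda>0$ and $1-\lambda<0$ in the correct directions. One small clarification: the tail bound comes from replacing $x^{k-1}$ by $x^m$ for all $k\ge m+1$ (valid since $x\le 1$ and $k-1\ge m$), not from "pulling out one factor of $x$"; that factor of $x$ appears only later, in the ratio $x^m/x^{m-1}$, and the paper (like you) then discards it via $x\le 1$.
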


This will give almost identical bounds to using the non-truncated distribution as long as $\pr{}{K>m} \ll 1$ and $\ex{}{K \cdot \mathbb{I}[K>m]} \ll \ex{}{K}$, which should hold for large enough $m$.

\nocite{ZhuW20}

\end{document}